\def\R{{\mathbb{R}}}
\def\pr{{\rm Pr}}
\def\E{{\mathbb E}}
\def\X{{\mathcal X}}
\def\B{{\mathcal B}}
\def\yh{{\widehat{y}}}
\def\dist{{\rm dist}}
\def\sign{{\rm sign}}
\def\vol{{\rm vol}}
\def\PL{{\mbox{\rm PL}}}
\newtheorem{thm}{Theorem}
\newtheorem{lemma}[thm]{Lemma}
\newtheorem{cor}[thm]{Corollary}
\newtheorem{defn}[thm]{Definition}
\newenvironment{proof}{\noindent {\sc Proof:}}{$\Box$ \medskip}
\title{Active learning using region-based sampling}
\author{Sanjoy Dasgupta \and Yoav Freund}
\begin{document}

\maketitle

\begin{abstract}%
We present a general-purpose active learning scheme for data in metric spaces. The algorithm maintains a collection of neighborhoods of different sizes and uses label queries to identify those that have a strong bias towards one particular label; when two such neighborhoods intersect and have different labels, the region of overlap is treated as a ``known unknown'' and is a target of future active queries. We give label complexity bounds for this method that do not rely on assumptions about the data and we instantiate them in several cases of interest.
\end{abstract}

\maketitle

\section{Introduction}

In \emph{active learning}, the starting point is a data set whose labels are hidden but can be obtained individually for a price. The goal is to label the data set, or to find a good classifier, at low cost.

We consider a formulation in which we have a collection of $n$ points $X = \{x_1, \ldots, x_n\}$ that lie in a metric space $(\X, d)$. We can request the label of any of these points $x$, in which case we get a value $y \in \{-1, +1\}$ with conditional distribution
$$ \eta(x) \ = \ \E[y | x] \ = \ \pr(y=1|x) - \pr(y=-1|x) $$
for some unknown function $\eta: \X \to [0,1]$. The \emph{Bayes-optimal} label for $x$, which we shall denote $g^*(x)$, is $-1$ if $\eta(x) < 0$ and $+1$ if $\eta(x) > 0$; either label can be used if $\eta(x) = 0$.
We wish to find the Bayes-optimal labels for the given data $X$.

More precisely, at the outset we have: the data $X$; two parameters $0 < \gamma, \delta < 1$; and a \emph{query budget}, the number of label queries we can make. We want a procedure that chooses the next point, or batch of points, to query. This process will be applied repeatedly until the query budget is exhausted, whereupon labels $\yh(x)$ must be provided for all $x \in X$, including those that were queried. Ideally, these will equal the Bayes-optimal labels $g^*(x)$, but we will only be judged on points $x \in X$ with $|\eta(x)| \geq \gamma$. That is, the number of mistakes is taken to be:
$$ \sum_{x \in X} {\bf 1}(\yh(x) \neq g^*(x),\ |\eta(x)| \geq \gamma) .$$
The overall procedure is allowed to fail with probability $\delta$, to account for sampling error.

\subsection{Nonparametric active learning}

In our setting there is no underlying assumption about $g^*$, for instance that it follows a linear model. Thus we adopt a nonparametric approach.

Existing proposals for nonparametric active learning can mostly be grouped according to the overall principle they follow: either (1) they seek to obtain the Bayes-optimal labels of a few well-positioned points, and then propagate these to the rest of the space \cite{DNZ15,H17,ASU18} or (2) they estimate the biases (positive or negative) of entire regions at a time \cite{DH08,M12}. In this paper, we follow the second strategy because the only reliable and general-purpose way to assess the sign of an individual point---that is, $\mbox{sign}(\eta(x))$---is to query that point repeatedly; absent smoothness assumptions, the sign can change abruptly in an arbitrarily small neighborhood around $x$. On the other hand, the sign of a region $B$---that is, the sign of the average $\eta$ value in $B$---is easy to determine, as long as $X$ contains a reasonable number of points from that region.

\subsection{Three key challenges}

The overall active learning strategy is to define a large collection of regions, or \emph{neighborhoods}, $\B$, of varying sizes, and to use label queries to estimate the signs associated with these regions. Ultimately we hope to cover $\X$ by a patchwork of neighborhoods with $\eta(x)$ values of uniform sign. Without strong smoothness conditions, these neighborhoods could be of very different sizes. The most beneficial for active learning are neighborhoods that contain a lot of $X$, but to get good coverage we might have to include smaller neighborhoods. To address this heterogeneity, we group the regions $\B$ by size. We begin by estimating the signs of the largest of them, and then move on to progressively smaller neighborhoods as the need arises. When the label budget runs out, the process is stopped, and provisional labels are assigned to the individual points in $X$.

In giving shape to this scheme, there are three key challenges to be addressed.

The first challenge is deciding \emph{where to query}. Suppose that a particular region $B \subset \X$ has an average $\eta$-value close to zero. Earlier work has typically taken this as a sign that $B$ is part of the ``uncertainty'' region and should be queried further. But it is important to distinguish two cases: (1) the $\eta$ values are close to zero throughout $B$, and (2) $B$ consists of two sub-regions, one of which is strongly positive while the other is strongly negative. In the first case, there is little merit in querying further. But in the second case, there is a lot to be gained.

To distinguish these two cases, we use a collection of neighborhoods that are \emph{overlapping}. For instance, we might take $\B$ to be \emph{all} balls in $\X$, which is effectively a finite collection once the given data points $X$ are taken into account. Case (2) can then be detected: we think of a point $x$ as being in the uncertainty region if it is contained in a neighborhood $B$ that is strongly positive as well as being in a neighborhood $B'$ that is strongly negative. Such points are ``known unknowns'', and these are the targets of our active querying.

\begin{figure}
\begin{center}
\includegraphics[width=3.5in]{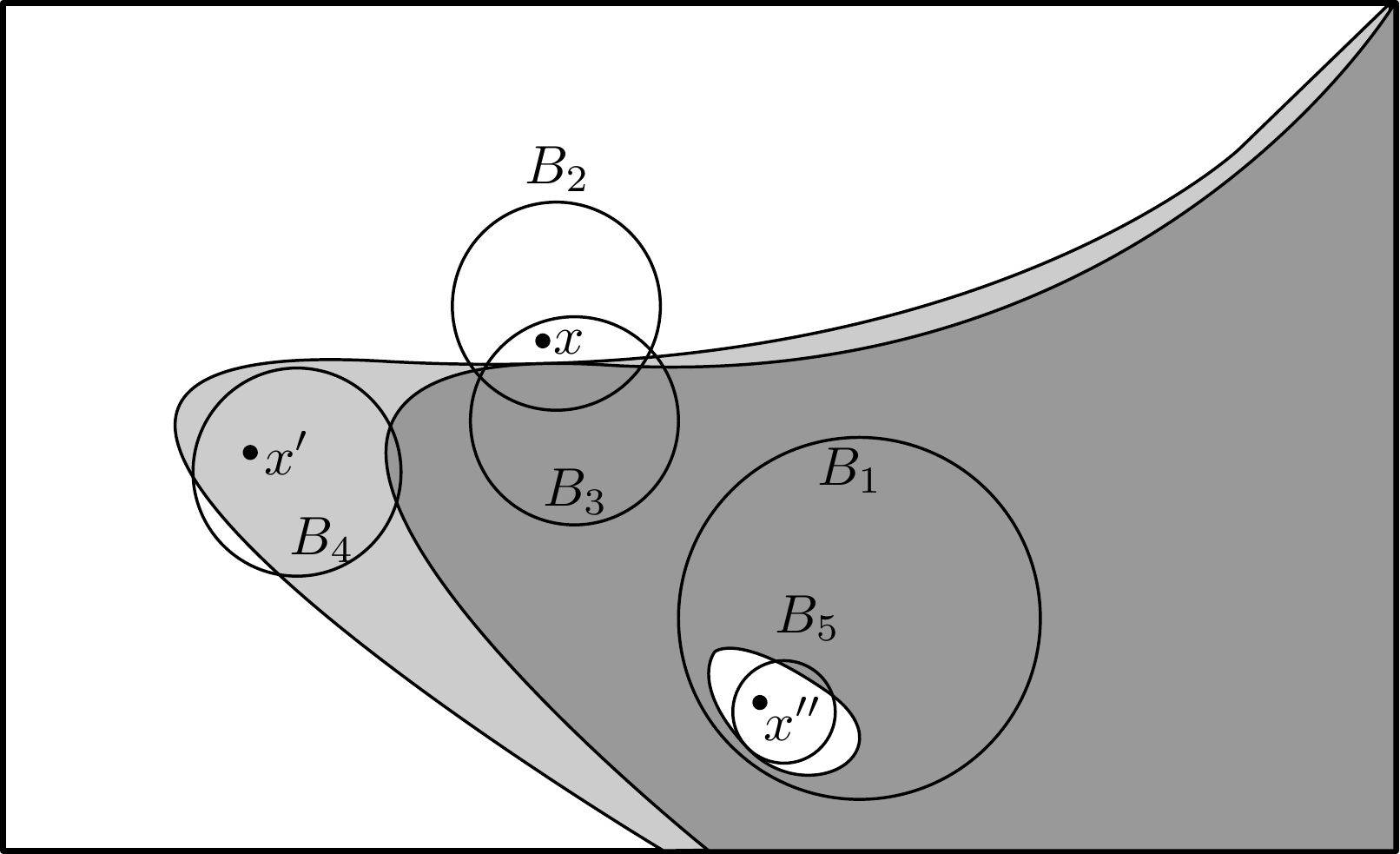}
\end{center}
\caption{The white region is strongly $-$, the dark region is strongly $+$, and grey is in between. Various neighborhoods $B_i \in \B$ are shown. Point $x$ lies in both a strongly positive region $B_3$ and a strongly negative region $B_2$ and is thus targeted for active querying. Point $x'$, which lies in a grey zone, is not targeted. Point $x''$ initially seems to have label $+$, when $B_1$ is sampled, but later has a mind-change to $-$, when $B_5$ is sampled.}
\label{fig:challenges}
\end{figure}

A second challenge is that the sign of $\eta(\cdot)$ can change in an arbitrarily small neighborhood of any given point $x$. Thus, if we look at successively smaller neighborhoods containing $x$, say $B_1 \supset B_2 \supset B_3 \supset \cdots$, the labels of these sets may keep changing. At first, when we are querying neighborhoods of the size of $B_1$, we might think $x$ has label $+1$. When we move to neighborhoods of the size of $B_2$, this could change to $-1$. And then $+1$, and so on. We call these \emph{mind changes}.

What this means is that we can \emph{never} be sure of having correctly determined the label of $x$. Thus in addition to focused (active) querying, we also do background sampling of the entire space to pick up on possible mind changes. For simplicity, we make one background query per focused query. 

The third challenge is \emph{managing sampling} of overlapping neighborhoods. Recall that we are interested in detecting points, and thus neighborhoods, whose $\eta$-values are either $> \gamma$ or $<-\gamma$. This suggests querying $k \approx 1/\gamma^2$ points at random from each region. Now, suppose we have queried this many points from neighborhood $B$ and then later want to query points from a different neighborhood $B'$ that overlaps $B$. How can we reuse the queries we have already made in $B \cap B'$?

\emph{Poisson sampling} provides a clean solution. Rather than choosing $k$ points at random from $B$, we pick each point in $B$ with probability (roughly) $k/|B \cap X|$, independently. The specific way we implement this is to assign each $x \in X$ a uniform-random value $T_x \in [0,1]$, at the outset. When sampling $B$, we choose to query $x$ if $T_x \leq k/|B \cap X|$. And when it comes time to sample a different $B'$ that also contains $x$, we choose $x$ if $T_x \leq k/|B' \cap X|$; if its label has already been obtained, we are able to reuse it. In this way, the random querying of overlapping regions is seamlessly managed.

These three challenges go beyond earlier work in active learning, which was able to avoid problems like mind-changes by making smoothness assumptions on $\eta$. By tackling all three of them, we are able to give a general-purpose active learning scheme. 

\subsection{Results}


In Section~\ref{sec:algorithm}, we present our active learning algorithm, and in Section~\ref{sec:discrete}, we analyze it in generality, taking $X$ to be an arbitrary point-set in the metric space and allowing any $\eta$ function. We identify two \emph{critical levels} for any $x \in X$: two scales (of neighborhood sizes) that control how many queries are sufficient for $x$ to be correctly labeled (Theorem~\ref{thm:label-complexity}). We instantiate these bounds in canonical one-dimensional settings (Theorems~\ref{thm:oned-massart} and \ref{thm:oned-monotonic}) to get label complexities logarithmic in $|X|$.

In Section~\ref{sec:continuous}, we consider the statistical setting where $X$ is drawn from an underlying distribution $\mu$ on $\X$. We give rates of convergence under distributional conditions (Theorem~\ref{thm:label-complexity-specific}), and instantiate these under assumptions that are common in learning theory and computational geometry.

\subsection{Related work}

There is a small body of work on the theory of nonparametric active learning. The early results of \cite{CN08} established upper and lower bounds on label complexity in situations where the Bayes-optimal boundary is of a simple form: a single threshold for one-dimensional data or a ``smooth boundary fragment'' in higher dimension.

An algorithm for active learning based on hierarchical sampling was given by \cite{DH08} and was analyzed under smoothness conditions by \cite{KUB15}. In these works, the idea is to begin with a hierarchical clustering of $X$, and to then use queries to discover a pruning of this tree whose leaf-clusters are almost-pure in their labels. The method is not well-suited to situations with significant noise levels. Another approach using dyadic partitions was given by \cite{M12} and analyzed under commonly-used smoothness, margin, and density assumptions---namely, that $\eta$ is Holder-smooth, the fraction of points with $|\eta(x)| \leq t$ is some polynomial in $t$, and the marginal density is close to uniform---along with an additional smoothness requirement on $\eta$.

A different strategy using nearest neighbors was explored by \cite{ASU18}. Their idea was to choose an appropriate scale $s$, find an $s$-cover of $X$, estimate the Bayes-optimal label for each point in this cover by querying its neighbors, and then use these cleanly-labeled points for 1-nearest neighbor classification. A somewhat more general approach was given by \cite{H17} and studied under the usual smoothness, margin, and density conditions, with resulting rates of convergence comparable to those found by \cite{M12}.

Finally, \cite{DNZ15} suggested a graph-based method for active learning based on adaptively looking for the cut in the graph corresponding to the correct decision boundary. Their assumptions are based on properties of this cut and are not easily comparable with earlier work.

\section{The active learning algorithm}
\label{sec:algorithm}

Before giving a high-level overview of our algorithm, we recall some basic notation. We have a collection of points $X$ lying in a metric space $(\X, d)$. The label of any $x \in X$ can be requested and the value returned will be either $-1$ or $+1$, according to the conditional probability function $\eta(x) = \E(y|x)$. We wish to assign Bayes-optimal labels to all points in $X$ with $|\eta(x)| \geq \gamma$.

\subsection{A collection of sampling regions}

In our active learning algorithm, sampling is organized around a collection $\B$ of subsets of $\X$. These are the atomic sets on which we assess label bias and we call them ``balls'' or ``neighborhoods''. There are no formal requirements on $\B$---for instance, the so-called balls can be of arbitrary shape---but the intention is that each $x \in X$ is contained in balls of multiple sizes, including one that is so small as to exclude the rest of $X$.

For any ball $B \in \B$, let $X_B = X \cap B$ be a shorthand for the data points that lie in it.
We group balls into levels by the number of points they contain. We put $B$ at {\it level} $\ell \geq 0$ if
\begin{equation}
\frac{n}{2^{\ell + 1}} \leq |X_B| < \frac{n}{2^\ell} .
\label{eq:sampling-level}
\end{equation}
Let $\B_\ell$ consist of all balls in $\B$ that are at level $\ell$. Thus $\B_0, \B_1, \ldots$ is a partition of $\B$, with $\B_0$ consisting of highly-populated balls and subsequent $\B_1, \B_2, \ldots$ consisting of successively smaller balls. We will use $\B_{\geq \ell}$ to denote all balls at levels $\ell$ or greater, and likewise $\B_{> \ell}$, $\B_{\leq \ell}$, and so on. 

Balls in lower levels contain more points, and thus their biases (average $\eta$ values) are easier to estimate. Our sampling algorithm makes its way from easier to harder levels.

For any $x \in \X$, let $\B(x) \subset \B$ denote the collection of balls that contain $x$ and can thus be used in determining $x$'s label. We again partition these balls by sampling-level, so that $\B_\ell(x) = \B(x) \cap \B_\ell$. 

\subsection{Estimating bias}

We use label-queries to estimate the biases (average $\eta$ values) of balls $B \in \B$. These are in turn used to estimate the labels of individual points.

The bias of a ball $B \in \B$ is defined as
$$ \eta_X(B) = \mbox{average}\{\eta(x): x \in X_B \} .$$
Rather than working with a numerical estimate, we assign each ball a \emph{qualitative} bias estimate,
$$ \yh(B) = 
\left\{
\begin{array}{cl}
+1 & \mbox{significant $+$ bias} \\
-1 & \mbox{significant $-$ bias} \\
0 & \mbox{no significant bias} \\
\bot & \mbox{not yet available}
\end{array}
\right.
$$

The option $\bot$ is used until sufficiently many points in $X_B$ have been queried: the required number is $k = O(1/\gamma^2)$, recalling that $\gamma$ is the smallest bias that needs to be detected. Once this many labels are available, $\yh(B)$ is set to a value in $\{-1,0,+1\}$ and remains fixed thereafter. These bias estimates will with high probability be seen to satisfy the following guarantee.
\begin{defn}
For any $B \in \B$, bias estimate $\yh(B) \in \{+1,-1,0\}$ is \emph{$\gamma$-accurate} if:
\begin{itemize}
\item $\yh(B) = +1 \implies \eta_X(B) > 0$
\item $\yh(B) = -1 \implies \eta_X(B) < 0$
\item $\yh(B) = 0 \implies |\eta_X(B)| < \gamma$
\end{itemize}
\label{def:accurate-bias-estimate}
\end{defn}

Pick any point $x$ and any level $\ell$. Once qualitative bias estimates are available for all balls $B \in \B_\ell(x)$, the set of possible labels for $x$ at level $\ell$, denoted $\PL_\ell(x) \subset \{-1,+1\}$, is defined thus:
\begin{itemize}
\item $\PL_\ell(x)$ contains $+1$ if there exists a \emph{minimal} ball $B \in \B_{\leq \ell}(x)$ (that is, which has no other $B' \in \B_{\leq \ell}(x)$ strictly contained with it) with $\yh(B) = +1$.
\item $\PL_\ell(x)$ contains $-1$ under a symmetrical condition.
\end{itemize}
This is spelled out precisely in Equation~(\ref{eq:PL}) in  in Figure~\ref{alg:main}. The label-estimate for $x$ at level $\ell$, denoted $\yh_\ell(x)$, is $+1$ if $\PL_\ell(x)=\{+1\}$, $-1$ if $\PL_\ell(x) = \{-1\}$, $0$ if $\PL_\ell(x) = \{\}$, and $!$ if $\PL_\ell(x) = \{-1,+1\}$ (see Equation~(\ref{eq:provisional-label})). 
The label ``!'' can be interpreted as ``known unknown''~\cite{R11} or as ``conflicting evidence''. Our active learning algorithm makes all of its \emph{focused queries} in balls that contain known unknowns.


\subsection{A neighborhood-based active learning algorithm}

The active learning algorithm is described in Figures~\ref{alg:main} and \ref{alg:sampling}. There are two types of queries: \emph{focused queries} and \emph{background queries}. Background queries are random draws from $X$ and correspond to passive learning. Focused queries are made in the vicinity of ``uncertain'' points and correspond to active learning. The \emph{uncertainty region} $U_\ell$ at level $\ell \geq 1$ consists of points $x \in X$ on which the prediction from the previous level is ``known unknown'', $\yh_{\ell-1}(x) = \,!$. Focused queries are drawn at random from balls in $\B_\ell$ that contain such points.

On each iteration of the main loop, (at most) one focused query is made as well as a background query. The focused query comes from the lowest-numbered uncertainty region $U_\ell$ that is nonempty. 

Once these queries are made, all bias estimates $\yh(B)$ are updated, along with label-sets $\PL_\ell(x)$, label-estimates $\yh_\ell(x)$, and finally the uncertainty regions $U_\ell$. Then the next iteration begins.

\begin{figure}[h!]
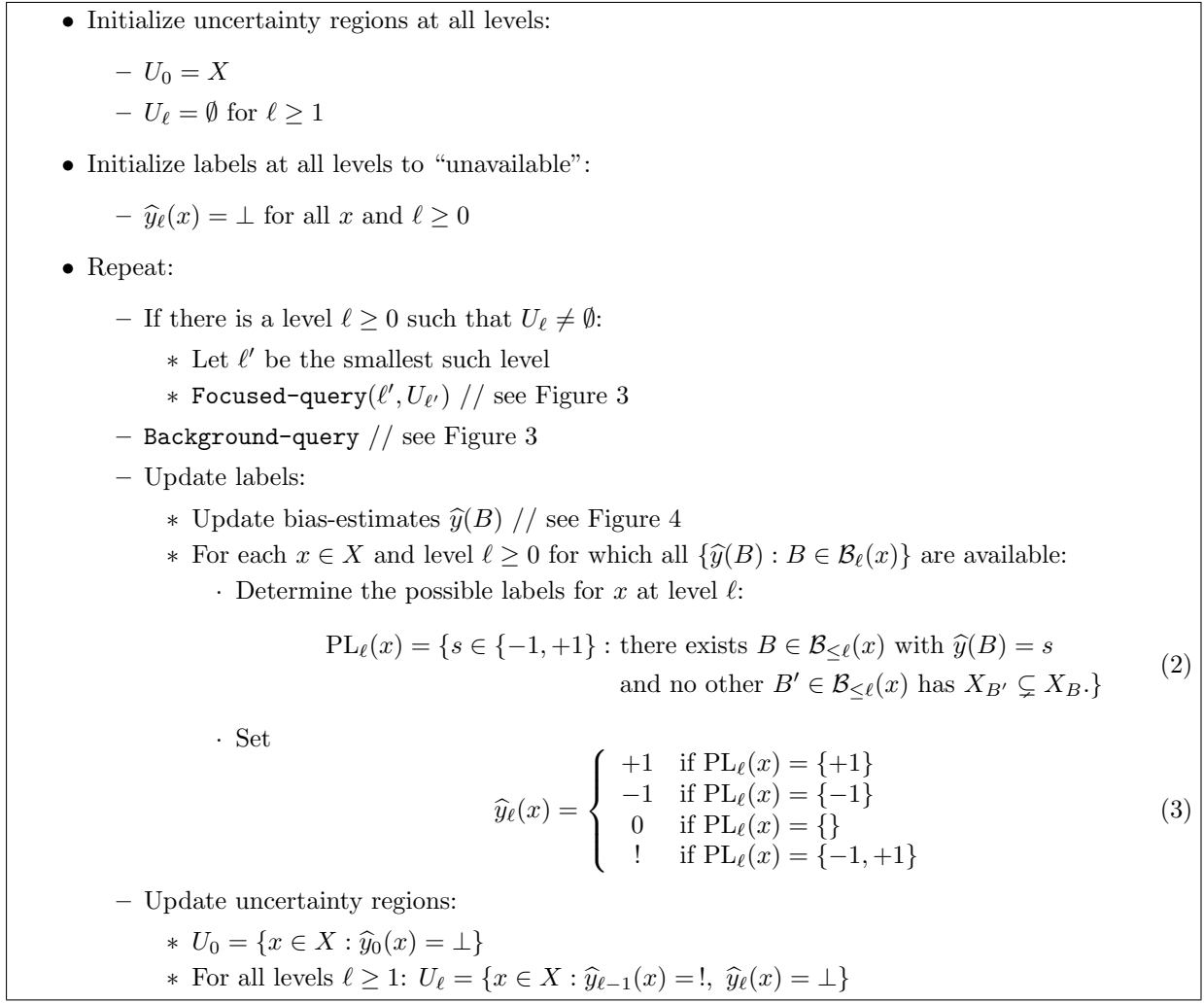

\framebox{
\begin{minipage}[t]{6.3in}
\begin{itemize}
\item Initialize uncertainty regions at all levels:
\begin{itemize}
\item $U_0 = X$
\item $U_\ell = \emptyset$ for $\ell \geq 1$
\end{itemize}
\item Initialize labels at all levels to ``unavailable'':
\begin{itemize}
\item $\yh_\ell(x) = \bot$ for all $x$ and $\ell \geq 0$
\end{itemize}
\item Repeat:
\begin{itemize}
\item If there is a level $\ell \geq 0$ such that $U_\ell \neq \emptyset$:
\begin{itemize}
\item Let $\ell'$ be the smallest such level
\item {\tt Focused-query}($\ell', U_{\ell'}$) // see Figure~\ref{alg:sampling}
\end{itemize}
\item {\tt Background-query} // see Figure~\ref{alg:sampling}
\item Update labels:
\begin{itemize}
\item Update bias-estimates $\yh(B)$ // see Figure~\ref{alg:bias-estimate}
\item For each $x \in X$ and level $\ell \geq 0$ for which all $\{\yh(B): B \in \B_\ell(x)\}$ are available:
\begin{itemize}
\item Determine the possible labels for $x$ at level $\ell$:
\begin{equation} 
\begin{split}
\PL_\ell(x) = \{s \in \{-1, +1\}:\  & \mbox{there exists $B \in \B_{\leq \ell}(x)$ with $\yh(B) = s$} \\ 
& \mbox{and no other $B' \in \B_{\leq \ell}(x)$ has $X_{B'} \subsetneq X_B$.} \} 
\end{split}
\label{eq:PL}
\end{equation}
\item Set
\begin{equation} 
\yh_\ell(x) = 
\left\{
\begin{array}{cl}
+1 & \mbox{if $\PL_\ell(x) = \{+1\}$} \\
-1 &  \mbox{if $\PL_\ell(x) = \{-1\}$} \\
0 & \mbox{if $\PL_\ell(x) = \{\}$} \\
! & \mbox{if $\PL_\ell(x) = \{-1,+1\}$}
\end{array}
\right.
\label{eq:provisional-label}
\end{equation}
\end{itemize}
\end{itemize}
\item Update uncertainty regions:
\begin{itemize}
\item $U_0 = \{x \in X: \yh_0(x) = \bot\}$
\item For all levels $\ell \geq 1$: $U_\ell = \{x \in X: \yh_{\ell-1}(x) = \,!, \ \yh_\ell(x) = \bot\}$
\end{itemize}
\end{itemize}
\end{itemize}

\end{minipage}}
\caption{The active learning algorithm. Each iteration of the main loop makes (at most) one focused query and one background query.}
\label{alg:main}
\end{figure}

The querying process can be stopped at any time, whereupon labels are assigned as follows:
\begin{equation}
\yh(x) = 
\left\{
\begin{array}{ll}
\yh_\ell(x) & \mbox{for the largest $\ell$ with $\yh_\ell(x) \in \{-1,+1\}$, if such an $\ell$ exists} \\
0 & \mbox{(meaning ``don't know''), otherwise}
\end{array}
\right.
\label{eq:final-label}
\end{equation}

\begin{figure}[h!]
\framebox{
\begin{minipage}[t]{6.3in}

\vspace{.05in}
\emph{Initialization:}
\begin{itemize}
\item Set $Q = \emptyset$ (points queried so far)
\item For each $x \in X$: choose $T_x \sim \mbox{uniform}([0,1])$
\end{itemize}

\vspace{.1in}
{\bf Focused-query}($\ell, U$)

\begin{itemize}
\item Define querying region:
$$ S =  \bigcup_{x \in U} \bigcup_{B \in \B_\ell(x)} \{z \in X_B: T_z \leq \tau_\ell\} $$
\item Query the next unlabeled point in $S \setminus Q$, ordered by $T_z$ values, and add to $Q$
\end{itemize}

\vspace{.1in}
{\bf Background-query}

\begin{itemize}
\item Query the next unlabeled point in $X \setminus Q$, ordered by $T_z$ values, and add to $Q$
\end{itemize}
\end{minipage}}
\caption{The two sampling procedures. Each $x \in X$ has an associated value $T_x$ chosen uniformly from $[0,1]$. This smaller this value, the earlier $x$ is likely to be queried. Focused querying uses level-based thresholds $\tau_\ell = \min(2^{\ell+2}k/n, 1)$, where $k$ is a global parameter.} 
\label{alg:sampling}
\end{figure}

\begin{figure}[h!]
\framebox{
\begin{minipage}[t]{6.3in}
\vspace{.05in}
\begin{itemize}
\item Initially $\yh(B) = \bot$
\item When all of $\{z \in X_B: T_z \leq \tau_\ell\}$ is queried, let $\widehat{\eta}(B)$ be the mean of these labels and set
$$ \yh(B)
= 
\left\{
\begin{array}{ll}
\sign(\widehat{\eta}(B)) & \mbox{if $|\widehat{\eta}(B)| \geq \gamma/2$} \\
0 & \mbox{otherwise}
\end{array}
\right.
$$
\end{itemize}
\end{minipage}}
\caption{The qualitative bias $\yh(B)$ of a ball $B \in \B_\ell$.}
\label{alg:bias-estimate}
\end{figure}

\section{Analysis of algorithm: finite population setting}
\label{sec:discrete}

We now analyze the active learning procedure in a setting where $X \subset \X$ is an arbitrary set of $n$ points; that is, we make no distributional assumption on the manner in which $X$ is generated.

\subsection{Accuracy of bias estimates}

Fix the set of balls $\B$ and let $0 < \delta < 1$ be a
predefined confidence parameter. We start with a uniform guarantee on the
bias estimates for all balls $B \in \B$.

In Figure~\ref{alg:sampling}, we see that the query region for any ball $B \in \B_\ell$ is $\{z \in X_B: T_z \leq \tau_\ell\}$, which from the definition of $\tau_\ell$ has size $O(k)$. This is the number of queries we make to $B$ before estimating its qualitative bias. Since we need to detect biases of magnitude greater than $\gamma$, we would expect $k$ to be proportional to $1/\gamma^2$. This intuition is borne out by the following result, proved in the appendix. 
\begin{lemma}
Suppose that $k \geq (192/\gamma^2) \ln (4 |\B|/\delta)$. 
Let $\yh(B)$ be defined as in Figure~\ref{alg:bias-estimate}. Then with probability $\geq 1-\delta$, all the $\yh(B)$, for $B \in \B$, are $\gamma$-accurate in the sense of Definition~\ref{def:accurate-bias-estimate}.
\label{lemma:accurate-bias-estimates}
\end{lemma}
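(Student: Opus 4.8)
The plan is to reduce the lemma to a Hoeffding bound for a single ball together with a union bound over all $B \in \B$. Fix a ball $B \in \B_\ell$. The query region used to estimate its bias is $S_B = \{z \in X_B : T_z \leq \tau_\ell\}$, and $\yh(B)$ is a (clipped) sign of the empirical mean $\widehat{\eta}(B) = \frac{1}{|S_B|}\sum_{z \in S_B} y_z$, where $y_z \in \{-1,+1\}$ is the queried label of $z$. The subtlety is that $S_B$ is itself random (it depends on the $T_z$'s), so I would condition on the realization of the $T_z$'s. Given the $T_z$'s, the set $S_B$ is fixed and the labels $\{y_z : z \in S_B\}$ are independent with $\E[y_z] = \eta(z)$, so $\E[\widehat{\eta}(B) \mid \{T_z\}] = \frac{1}{|S_B|}\sum_{z \in S_B}\eta(z)$, which is an empirical average of $\eta$-values over $S_B \subseteq X_B$ rather than the true bias $\eta_X(B)$ (the average over all of $X_B$). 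To control this I would first argue that $|S_B|$ is large with high probability: since $\tau_\ell = \min(2^{\ell+2}k/n, 1)$ and $|X_B| \geq n/2^{\ell+1}$, the expected size of $S_B$ is at least $2k$ (when the $\min$ is not active; when $\tau_\ell = 1$ we simply have $S_B = X_B$), and a Chernoff bound gives $|S_B| \geq k$ except with small probability. Then, conditioned on $|S_B| \geq k$, a second Hoeffding bound over the random labels gives that $\widehat{\eta}(B)$ is within, say, $\gamma/4$ of $\frac{1}{|S_B|}\sum_{z\in S_B}\eta(z)$; and a third concentration step (Hoeffding for sampling without replacement, or a Bernstein-type bound since each $z$ is included independently) controls the deviation of this subsample average of $\eta$ from $\eta_X(B)$ itself, again to within $\gamma/4$.

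Putting these together, with high probability $|\widehat{\eta}(B) - \eta_X(B)| < \gamma/2$. Now I verify the three implications of Definition~\ref{def:accurate-bias-estimate}. If $\yh(B) = +1$ then $\widehat{\eta}(B) \geq \gamma/2$, so $\eta_X(B) > \widehat{\eta}(B) - \gamma/2 \geq 0$; symmetrically for $\yh(B) = -1$. If $\yh(B) = 0$ then $|\widehat{\eta}(B)| < \gamma/2$, so $|\eta_X(B)| < |\widehat{\eta}(B)| + \gamma/2 < \gamma$. Thus $\gamma$-accuracy holds for this fixed $B$ on the intersection of the good events, which has probability $\geq 1 - (\delta / |\B|)$ provided the constants and the bound $k \geq (192/\gamma^2)\ln(4|\B|/\delta)$ are chosen to absorb the three failure probabilities and the factors of $4$. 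Specifically, to make each of the two Hoeffding applications fail with probability at most $\delta/(4|\B|)$ with deviation $\gamma/4$, I need roughly $k \geq (32/\gamma^2)\ln(8|\B|/\delta)$ for each; the Chernoff bound on $|S_B| \geq k$ is an even weaker requirement since the relative deviation there is constant; and the constant $192$ with the cleaner argument of $4|\B|/\delta$ comfortably covers all of these simultaneously. A union bound over the (at most) $|\B|$ balls then yields that \emph{all} $\yh(B)$ are $\gamma$-accurate with probability $\geq 1-\delta$.

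The main obstacle is the first one I flagged: the query set $S_B$ is data-dependent and, worse, the sets $S_B$ for different balls share the same underlying $T_z$'s, so the events "$\yh(B)$ is $\gamma$-accurate" are not independent across $B$. The union bound handles the dependence across balls at the cost of the $\ln|\B|$ factor, but the within-ball argument still requires care that the conditioning on $\{T_z\}$ genuinely decouples the label randomness from the set-selection randomness — which it does, since the labels $y_z$ are drawn fresh upon querying and are independent of the $T_z$'s. A secondary technical point is handling the boundary case $\tau_\ell = 1$, where $S_B = X_B$ exactly, $\widehat{\eta}(B)$ has the full sample $|X_B| \geq n/2^{\ell+1}$ behind it, and only the label-noise Hoeffding step is needed (with $\E[\widehat\eta(B)\mid\{T_z\}] = \eta_X(B)$ on the nose); one should check that $|X_B|$ in that regime is at least $k$, which follows because $\tau_\ell = 1$ forces $2^{\ell+2}k/n \geq 1$, i.e. $|X_B| \geq n/2^{\ell+1} \geq 2k$. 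Everything else is routine Hoeffding/Chernoff bookkeeping, and I would relegate the exact constant-chasing to confirm the stated threshold on $k$.
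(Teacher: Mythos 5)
Your decomposition is correct in spirit, but it differs from the paper's in an instructive way, and one of the branches you offer in step~3 does not actually apply.

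The paper's argument (its Lemma~\ref{lemma:large-deviation-discrete} feeding into Lemma~\ref{lemma:large-deviation-bounds} and Corollary~\ref{cor:accurate-bias-estimates}) does \emph{not} split the error into ``label noise'' plus ``subsample vs.\ full-population bias.'' Instead it puts all the randomness in one place: writing $C_z = \mathbf{1}(T_z \leq \tau_\ell)$, it applies a single Bernstein inequality to $S = \sum_{z \in X_B} C_z Y_z$, whose mean is $\tau_\ell |X_B|\,\eta_X(B)$ and whose variance is at most $\tau_\ell |X_B|$. It then separately controls $N = \sum_z C_z$ by a Chernoff bound, and recovers $\widehat\eta(B) = S/N$ by a ratio argument. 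This one-shot Bernstein step subsumes both your step~2 and step~3 and keeps the union bound to two events per ball rather than three, which is where the specific constant $192$ is extracted.

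Your version can be made to work, but two things need care. First, in step~3 the option you mention of ``Hoeffding for sampling without replacement'' is not the right tool: the Poisson sampling here is independent inclusion, not a uniform fixed-size subset, so the without-replacement inequality is inapplicable; the Bernstein alternative you mention is the one to use, applied to the mean-zero sum $\sum_{z\in X_B} C_z\bigl(\eta(z)-\eta_X(B)\bigr)$ and then divided by a lower bound on $|S_B|$. Second, you phrase step~2 and step~3 as ``conditioned on $|S_B|\geq k$.'' This is harmless for step~2 (the labels $y_z$ are independent of the $T_z$'s, so conditioning on any $T$-measurable event cannot affect them), but for step~3 the randomness you need to control \emph{is} the $T_z$'s, so you cannot first condition on $\{|S_B|\geq k\}$ and then apply a fresh tail bound over the same randomness. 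The clean fix---which is also what the paper does implicitly---is to bound the failure probabilities of the three events $\{|S_B|<k\}$, $\{\text{label-noise deviation}>\gamma/4\}$, and $\{\text{selection-bias deviation}>\gamma/4\}$ separately (each unconditionally, the last two intersected with $\{|S_B|\geq k\}$ as needed) and union-bound them, rather than nesting conditionals. With that fix, your three-step decomposition is a valid alternative to the paper's two-step one; it sacrifices a little in the constants (two $\gamma/4$'s instead of one $\gamma/2$) but the hypothesis $k \geq (192/\gamma^2)\ln(4|\B|/\delta)$ is generous enough to absorb it, as you say.
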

In what follows, we will assume that $\yh(B)$ is $\gamma$-accurate for all  $B \in \B$.

\subsection{Critical levels}

The label assigned to a data point $x$ at level $\ell$, denoted $\yh_\ell(x)$, can change as $\ell$ increases; it may flip between $+1$ and $-1$, with stretches of $0$ or $!$ in between. This $\yh_\ell(x)$ is governed by $\PL_\ell(x) \subset \{-1,+1\}$, the ``possible labels'' for $x$ given the information from balls at levels $0$ through $\ell$. The value of $\PL_\ell(x)$ depends upon the random choices of the querying algorithm, but it is nonetheless possible to define two critical levels for each $x$: a level $L_1(x)$ by which $\PL_\ell(x)$ will reliably contain the correct label of $x$, and a level $L_2(x)$ by which $\PL_\ell(x)$ will reliably omit the wrong label.

\begin{defn}[Critical levels $L_1,L_2$]
Pick any $x \in X$ with $\eta(x) \neq 0$ and let $s(x) = \sign(\eta(x))$ be its Bayes-optimal label. We define $L_1(x)$ to be the smallest level $\ell$ such that:
\begin{itemize}
\item There exists some $B_o \in \B_\ell(x)$ with $s(x) \cdot \eta_X(B_o) \geq \gamma$.
\item Any $B \in \B(x)$ with $X_{B} \subset X_{B_o}$ also has $s(x) \cdot \eta_X(B) \geq \gamma$.
\end{itemize}
We define $L_2(x)$ to be the smallest level $\ell$ such that:
\begin{itemize}
\item For all $B \in \B_{\geq \ell}(x)$, we have $s(x) \cdot \eta_X(B) \geq 0$.
\item For any $B \in \B_{\leq \ell}(x)$ with $s(x) \cdot \eta_X(B) < 0$, there exists $B' \in \B_{\leq \ell}(x)$ with $X_{B'} \subset X_B$ and $s(x) \cdot \eta_X(B') \geq 0$.
\end{itemize}
Take $L_1(x)$ or $L_2(x)$ to be $\infty$ if no level meets the requirements. 
\label{defn:L12}
\end{defn}

The significance of these definitions is made clear by the following lemma.
\begin{lemma}
Pick any $x \in X$ with $\eta(x) \neq 0$ and let $s(x) \in \{+1,-1\}$ denote its Bayes-optimal label. Then for any level $\ell$ and any time at which $\yh_\ell(x) \neq \bot$:
\begin{enumerate}
\item[(a)] If $\ell \geq L_1(x)$, then $s(x) \in \PL_\ell(x)$ and thus $\yh_\ell(x) \in \{s(x), !\}$. 
\item[(b)] If $\ell \geq L_2(x)$, then $-s(x) \not\in \PL_\ell(x)$ and thus $\yh_\ell(x) \in \{s(x), 0\}$.
\end{enumerate}
\label{lemma:boundary}
\end{lemma}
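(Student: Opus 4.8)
The plan is to fix $x \in X$ with $\eta(x) \neq 0$ and, exploiting the symmetry between the two labels, assume $s(x) = +1$ (so $\eta(x) > 0$). Parts (a) and (b) are then handled separately, in each case by exhibiting, or ruling out, a suitable \emph{minimal} ball of $\B_{\le \ell}(x)$ that witnesses membership in $\PL_\ell(x)$ through Equation~(\ref{eq:PL}). Two facts are used throughout. First, every $\yh(B)$ is $\gamma$-accurate (Lemma~\ref{lemma:accurate-bias-estimates}); we use both the direct implications of Definition~\ref{def:accurate-bias-estimate} and their contrapositives --- in particular $|\eta_X(B)| \ge \gamma$ implies $\yh(B) = \sign(\eta_X(B))$. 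Second, whenever $\yh_\ell(x) \neq \bot$ the value $\PL_\ell(x)$ has been computed with bias estimates available for \emph{every} ball in $\B_{\le \ell}(x)$, not just those in $\B_\ell(x)$; this is a scheduling observation about the algorithm that deserves to be recorded as a preliminary claim. I also use freely that $\eta_X(B)$ depends on $B$ only through the point set $X_B$.

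For part (a), assume $\ell \ge L_1(x)$, so $L_1(x) < \infty$ and Definition~\ref{defn:L12} gives a ball $B_o \in \B_{L_1(x)}(x) \subseteq \B_{\le \ell}(x)$ with $\eta_X(B_o) \ge \gamma$ such that every $B \in \B(x)$ with $X_B \subsetneq X_{B_o}$ also has $\eta_X(B) \ge \gamma$. Let $\mathcal{F} = \{B \in \B_{\le \ell}(x) : X_B \subseteq X_{B_o}\}$, a finite nonempty family, and let $B^\star$ be a minimal element of $\mathcal{F}$ under the partial order ``$X_{B'} \subsetneq X_B$''. Then $B^\star$ is in fact minimal in all of $\B_{\le \ell}(x)$: if some $B' \in \B_{\le \ell}(x)$ had $X_{B'} \subsetneq X_{B^\star}$, then $X_{B'} \subsetneq X_{B_o}$, so $B' \in \mathcal{F}$, contradicting minimality of $B^\star$ within $\mathcal{F}$. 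Since $X_{B^\star} \subseteq X_{B_o}$, the two bullets of the $L_1$ definition give $\eta_X(B^\star) \ge \gamma$: either $X_{B^\star} = X_{B_o}$ and $\eta_X(B^\star) = \eta_X(B_o) \ge \gamma$, or $X_{B^\star} \subsetneq X_{B_o}$ and the second bullet applies to $B^\star \in \B(x)$. As $\yh(B^\star)$ is available by the preliminary claim, $\gamma$-accuracy forces $\yh(B^\star) = +1$. Hence $s(x) = +1 \in \PL_\ell(x)$, so $\PL_\ell(x)$ is $\{+1\}$ or $\{-1,+1\}$, i.e. $\yh_\ell(x) \in \{s(x), !\}$.

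For part (b), assume $\ell \ge L_2(x)$ (so $L_2(x) < \infty$) and suppose, for contradiction, that $-1 \in \PL_\ell(x)$. Fix a witnessing ball $B \in \B_{\le \ell}(x)$ that is minimal in $\B_{\le \ell}(x)$ with $\yh(B) = -1$; by $\gamma$-accuracy, $\eta_X(B) < 0$. Split on the level of $B$. If it is $\ge L_2(x)$, then $B \in \B_{\ge L_2(x)}(x)$, and the first bullet of the $L_2$ definition gives $\eta_X(B) \ge 0$, a contradiction. If it is $< L_2(x)$, then $B \in \B_{\le L_2(x)}(x)$ with $\eta_X(B) < 0$, so the second bullet of the $L_2$ definition produces $B' \in \B_{\le L_2(x)}(x) \subseteq \B_{\le \ell}(x)$ with $X_{B'} \subsetneq X_B$, contradicting minimality of $B$ in $\B_{\le \ell}(x)$. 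Either branch is impossible, so $-s(x) = -1 \notin \PL_\ell(x)$; hence $\PL_\ell(x) \subseteq \{s(x)\}$ and $\yh_\ell(x) \in \{0, s(x)\}$.

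The set-theoretic bookkeeping --- choosing minimal elements of $\mathcal{F}$ and checking the two partial-order arguments --- is routine, and the sign steps are immediate from $\gamma$-accuracy. The step I expect to be the main obstacle is the preliminary claim used in the first paragraph: that at any moment with $\yh_\ell(x) \neq \bot$, the evaluation of $\PL_\ell(x)$ sees bias estimates for \emph{all} of $\B_{\le \ell}(x)$. This should follow from the order in which the algorithm works --- a focused query at level $\ell$ is issued only once $U_0, \dots, U_{\ell-1}$ are empty, which forces availability of bias estimates to be monotone in the level index --- but making it precise requires an induction on $\ell$ tracking when each ball gets resolved, and that is where the care goes.
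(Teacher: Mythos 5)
Your proof of both parts is essentially the same as the paper's: for (a) you exhibit a minimal ball of $\B_{\le \ell}(x)$ inside $B_o$ whose bias estimate must equal $s(x)$; for (b) you observe that any ball of the wrong sign lies below level $L_2(x)$ and hence has a strict sub-ball with nonnegative signed bias, so no minimal ball can witness $-s(x) \in \PL_\ell(x)$. The only difference is cosmetic: you phrase (b) as a contradiction, whereas the paper argues directly that every minimal ball in $\B_{\le \ell}(x)$ has nonnegative signed bias. The ``preliminary claim'' you flag --- that $\yh_\ell(x) \neq \bot$ implies bias estimates for all of $\B_{\le \ell}(x)$, not just $\B_\ell(x)$, are available --- is a fair observation about the pseudocode in Figure~\ref{alg:main}, but the paper's proof simply takes this for granted as well, so it does not mark a deviation in approach.
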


\subsection{Boundary sets and label complexity}

A common intuition about active learning is that successive queries gradually constrain the possible locations of the decision boundary. Let's consider the state of affairs when all balls at level $\leq \ell - 1$ have been sampled. The ``known unknowns'' at level $\ell$ are points $x$ with $\yh_{\ell-1}(x) = \, !$; by Lemma~\ref{lemma:boundary}(b), such points have $L_2(x) \geq \ell$. Focused sampling at level $\ell$ will be restricted to balls that contain these points. We can think of this region as the \emph{boundary set} at level $\ell$.
\begin{defn}[Boundary set $\bf \Delta_\ell$]
For any level $\ell$, define the \emph{boundary set} at level $\ell$ to be
\begin{equation}
\Delta_\ell 
= \bigcup_{x \in X: L_2(x) \geq \ell} \bigcup_{B \in \B_{\ell}(x)} X_B 
.
\label{eq:sampling-region}
\end{equation}
\end{defn}
\begin{lemma}
All focused samples at level $\ell$ lie in $\{z \in \Delta_\ell: T_z \leq \tau_\ell \}$.
\label{lemma:focused}
\end{lemma}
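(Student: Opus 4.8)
The plan is to unwind the definitions of a focused query (Figure~\ref{alg:sampling}) and of the boundary set $\Delta_\ell$ (Equation~(\ref{eq:sampling-region})), and to show that the querying region $S$ used at level $\ell$ is always contained in $\{z \in \Delta_\ell : T_z \leq \tau_\ell\}$. A focused query at level $\ell$ is invoked as {\tt Focused-query}($\ell, U_\ell$), where $U_\ell$ is the current uncertainty region at level $\ell$. By definition its querying region is $S = \bigcup_{x \in U_\ell} \bigcup_{B \in \B_\ell(x)} \{z \in X_B : T_z \leq \tau_\ell\}$, and the point queried is some $z \in S \setminus Q$. So it suffices to prove the containment $S \subseteq \{z \in \Delta_\ell : T_z \leq \tau_\ell\}$; the $T_z \leq \tau_\ell$ part is immediate from the form of $S$, so the real content is $S \subseteq \Delta_\ell$, for which it is enough to show $U_\ell \subseteq \{x \in X : L_2(x) \geq \ell\}$.

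To establish $U_\ell \subseteq \{x : L_2(x) \geq \ell\}$, first handle $\ell = 0$ separately: $U_0 = \{x : \yh_0(x) = \bot\}$ (or $U_0 = X$ at initialization), and $L_2(x) \geq 0$ holds vacuously since levels are nonnegative, so the claim is trivial. For $\ell \geq 1$, the update rule sets $U_\ell = \{x \in X : \yh_{\ell-1}(x) = \,!,\ \yh_\ell(x) = \bot\}$, so any $x \in U_\ell$ has $\yh_{\ell-1}(x) = \,!$. Now I invoke Lemma~\ref{lemma:boundary}(b) in contrapositive form: if $L_2(x) \leq \ell - 1$, then at any time with $\yh_{\ell-1}(x) \neq \bot$ we have $\yh_{\ell-1}(x) \in \{s(x), 0\}$, which excludes the value $!$. (Here I should note that $\eta(x) \neq 0$ whenever $\yh_{\ell-1}(x) = \,!$, since $!$ requires both signs to be possible labels, which in particular requires some ball of definite bias around $x$; alternatively, points with $\eta(x) = 0$ can be excluded from the boundary-set bookkeeping since $L_2$ is only defined for $\eta(x) \neq 0$, and a short argument shows such points never get label $!$ once their balls are $\gamma$-accurate.) Hence $\yh_{\ell-1}(x) = \,!$ forces $L_2(x) \geq \ell$, as desired.

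Combining these, for every $x \in U_\ell$ and every $B \in \B_\ell(x)$ we have $X_B \subseteq \Delta_\ell$, so $S \subseteq \Delta_\ell$, and therefore every focused sample at level $\ell$—being an element of $S$—lies in $\{z \in \Delta_\ell : T_z \leq \tau_\ell\}$. I would also remark that focused queries are only ever issued at a level $\ell$ when $U_{\ell'} = \emptyset$ for all $\ell' < \ell$ and $U_\ell \neq \emptyset$, but this scheduling detail is not needed for the containment; the argument above applies to whatever level the focused query happens to be made at.

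I expect the main obstacle to be the clean handling of the $\eta(x) = 0$ points and the precise invocation of Lemma~\ref{lemma:boundary}(b): the lemma is stated for $x$ with $\eta(x) \neq 0$, so the proof must either argue that a label of $!$ cannot arise for $\eta(x) = 0$ points under $\gamma$-accurate bias estimates, or restrict $\Delta_\ell$'s definition appropriately so that the boundary set genuinely contains all focused samples. Everything else is a direct unwinding of the algorithm's update rules and the definition of the querying region, so the proof should be short.
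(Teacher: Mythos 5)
Your proof takes the same route as the paper's: show $U_\ell \subseteq \{x : L_2(x) \geq \ell\}$ via the contrapositive of Lemma~\ref{lemma:boundary}(b), then unwind the definitions of the querying region $S$ and the boundary set $\Delta_\ell$. The core argument is correct and matches the paper essentially step for step (the paper phrases it in terms of $\overline U_\ell$, the set of points ever in the uncertainty set at level $\ell$, but this is the same containment).

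The one real problem is your handling of the $\eta(x)=0$ edge case, where both of the resolutions you offer are wrong. You claim ``$\eta(x)\neq 0$ whenever $\yh_{\ell-1}(x)= \,!$'' and, separately, that ``a short argument shows such points never get label $!$ once their balls are $\gamma$-accurate.'' Neither is true: a point $x$ with $\eta(x)=0$ can perfectly well lie in one minimal ball $B$ with $\eta_X(B)>\gamma$ (a ball skewed toward a strongly positive region) and another minimal ball $B'$ with $\eta_X(B')<-\gamma$; $\gamma$-accuracy then gives $\yh(B)=+1$, $\yh(B')=-1$, hence $\PL_\ell(x)=\{-1,+1\}$ and $\yh_\ell(x)=\,!$. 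The ball biases are averages of $\eta$ over $X_B$ and say nothing about $\eta(x)$ itself. Your other suggestion---to exclude $\eta(x)=0$ points from $\Delta_\ell$'s bookkeeping---runs in exactly the wrong direction: since such points genuinely can enter $U_\ell$, removing them from $\Delta_\ell$ would make the lemma false. The fix (which the paper uses implicitly) is the opposite: extend the convention of Definition~\ref{defn:L12} so that $L_2(x)=\infty$ when $\eta(x)=0$. Then such $x$ automatically belong to $\{x : L_2(x)\geq \ell\}$ for every $\ell$, $\Delta_\ell$ covers them by definition, and Lemma~\ref{lemma:boundary}(b) only ever needs to be invoked for $\eta(x)\neq 0$, where it applies.
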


We can now give generic label complexity bounds in terms of $L_1$, $L_2$, and $\Delta_\ell$. The bounds come in two equivalent forms: a \emph{global} version that specifies what parts of $X$ are correctly labeled after $m$ queries and a \emph{local} version that specifies the number of queries after which a particular $x$ is correctly labeled. The global formulation (Theorem~\ref{thm:label-complexity-0}) is in the Appendix; here is the local version.

\begin{thm}
Suppose $k \geq (192/\gamma^2) \ln (4 |\B|/\delta)$. Then with probability at least $1-2\delta$, the following holds for all $x \in X$. Let $L_1(x)$ and $L_2(x)$ be the critical levels for $x$, as in Definition~\ref{defn:L12}. If $L_2(x) \leq \lg (n/2k)$, let
\[
m_o(x) = 32k \cdot \max\bigg( 2^{L_1(x)}, \ \frac{1}{n} \sum_{\ell=0}^{L_2(x)} |\Delta_\ell| \, 2^\ell\bigg) .
\]
After the active learning algorithm has made $m_o(x)$ queries, $\yh(x)$ will remain fixed at the Bayes-optimal label $g^*(x)$.
\label{thm:label-complexity}
\end{thm}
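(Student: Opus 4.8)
The plan is to condition on two high-probability events and then reason about the algorithm's dynamics deterministically. Fix $x\in X$; we may assume $\eta(x)\neq 0$ (otherwise every label is Bayes-optimal), and write $s=s(x)=\sign\eta(x)$. Dispose first of degenerate cases: if $L_1(x)=\infty$ then $m_o(x)=\infty$ and there is nothing to prove; if $L_2(x)=\infty$ or $L_2(x)>\lg(n/2k)$ the hypothesis of the implication fails; and if $L_1(x)$ is finite but exceeds $\lg(n/2k)$, then $m_o(x)\geq 32k\cdot 2^{L_1(x)}>n$, so after $n$ queries all of $X$ is labelled. Hence it suffices to treat $\ell_0:=\max(L_1(x),L_2(x))\leq\lg(n/2k)$. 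The core claim I will aim for is: after $m_o(x)$ queries, every ball in $\B_{\ell_0}(x)$ has been sampled, i.e.\ $\yh_{\ell_0}(x)\neq\bot$. Granting this, since $\ell_0\geq L_1(x)$ and $\ell_0\geq L_2(x)$, Lemma~\ref{lemma:boundary}(a),(b) give $s\in\PL_{\ell_0}(x)$ and $-s\notin\PL_{\ell_0}(x)$, so $\PL_{\ell_0}(x)=\{s\}$ and $\yh_{\ell_0}(x)=s$; and every level $\ell\geq\ell_0\geq L_2(x)$ with $\yh_\ell(x)\in\{\pm1\}$ has $\yh_\ell(x)=s$ by Lemma~\ref{lemma:boundary}(b). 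So the largest level bearing a $\pm1$ label is at least $\ell_0$ and bears the label $s$, whence $\yh(x)=s=g^*(x)$ by~(\ref{eq:final-label}); as sampled balls stay sampled and Lemma~\ref{lemma:boundary} holds at every time, this persists.

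The two events to condition on, each of probability $\geq 1-\delta$ (so both hold with probability $\geq 1-2\delta$), are: (i) the conclusion of Lemma~\ref{lemma:accurate-bias-estimates}, that every $\yh(B)$ is $\gamma$-accurate; and (ii) a Chernoff-plus-union-bound event stating that for every ball $B$ and every level $\ell$ the sizes $|\{z\in X_B:T_z\leq\tau_\ell\}|$, $|\{z\in\Delta_\ell:T_z\leq\tau_\ell\}|$ and $|\{z\in X:T_z\leq\tau_\ell\}|$ are each within a factor $2$ of their means --- valid because $k$ makes every mean of interest $\Omega(\log(|\B|/\delta))$. Now the counting. Background queries scan $X$ in increasing order of $T_z$, so by an easy induction, after $M$ of them the $M$ smallest-$T$ points are all queried; and since every iteration makes one background query and at most one focused query, $m$ total queries entail $\geq m/2$ background queries. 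Hence after $32k\cdot 2^{L_1(x)}\leq m_o(x)$ queries there have been at least $2n\tau_{L_1(x)}$ background queries, so every point with $T_z\leq\tau_{L_1(x)}$, and therefore every ball at level $\leq L_1(x)$ --- in particular every ball of $\B_{L_1(x)}(x)$ --- has been sampled. This settles the case $L_1(x)\geq L_2(x)$, where $\ell_0=L_1(x)$.

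It remains to treat $L_2(x)>L_1(x)$, so $\ell_0=L_2(x)$, where focused queries must do the work. Because $L_2(x)\geq\ell$ for all $\ell\leq L_2(x)$, the point $x$ contributes to $\Delta_\ell$, so $X_B\subseteq\Delta_\ell$ for every $B\in\B_\ell(x)$; thus all points needed to sample $\B_{\leq L_2(x)}(x)$ lie in $\bigcup_{\ell\leq L_2(x)}\{z\in\Delta_\ell:T_z\leq\tau_\ell\}$. By Lemma~\ref{lemma:focused} the number of focused queries ever made at level $\ell$ is at most $|\{z\in\Delta_\ell:T_z\leq\tau_\ell\}|\leq 2|\Delta_\ell|\tau_\ell=8k|\Delta_\ell|2^\ell/n$; summing over $\ell$, pairing each focused query with its background partner, and adding the $\leq 2^{L_1(x)+4}k$ queries needed to clear levels $\leq L_1(x)$, the total comes to at most $32k\max\bigl(2^{L_1(x)},\ \tfrac1n\sum_{\ell=0}^{L_2(x)}|\Delta_\ell|2^\ell\bigr)=m_o(x)$, by which point $\B_{\leq L_2(x)}(x)\supseteq\B_{\ell_0}(x)$ is sampled.

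The hard part is making this last count rigorous, and this is where I expect the main obstacle. Focused queries are steered by the uncertainty regions (lowest nonempty first), so one must rule out wasting many queries on distant balls or on levels above $L_2(x)$ before $\B_{\leq L_2(x)}(x)$ is complete. I expect the key device to be an invariant on the least level $\ell^\dagger$ with $\B_{\ell^\dagger}(x)$ still unsampled: then $\B_{<\ell^\dagger}(x)$ is fully sampled, so $\yh_{\ell^\dagger-1}(x)$ is at its final value, and a short case analysis via Lemma~\ref{lemma:boundary} shows that either $x$ is a known unknown at level $\ell^\dagger-1$ --- so $x\in U_{\ell^\dagger}$ and focused querying is directed to a level $\leq\ell^\dagger\leq L_2(x)$, advancing $\B_{\ell^\dagger}(x)$ --- or $\ell^\dagger\leq L_1(x)$, handled by background within the budget already counted, or $\yh_{\ell^\dagger-1}(x)=s$ is final with $\ell^\dagger-1\geq L_1(x)$, in which case $\yh(x)$ already equals $g^*(x)$ forever. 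Combining these with the per-level caps of Lemma~\ref{lemma:focused}, the concentration event (ii), the focused-versus-background bookkeeping, and the harmless cases where $\tau_\ell$ saturates at $1$ yields the bound; the constant $32$ absorbs the accumulated factors of $2$.
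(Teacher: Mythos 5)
Your high-level scaffolding---condition on $\gamma$-accuracy of all $\yh(B)$ (Lemma~\ref{lemma:accurate-bias-estimates}) plus a Chernoff event on the per-level sample sizes, use background queries to handle levels up to $L_1(x)$, use Lemma~\ref{lemma:focused} to bound focused queries per level, and finish via Lemma~\ref{lemma:boundary}---matches the paper's. But your ``core claim'' that $\B_{\ell_0}(x)$ will be fully sampled by time $m_o(x)$ is both stronger than needed and not what the paper proves; it can fail. If $x$ acquires a definite label $\yh_\ell(x)=s$ at some level $\ell<\ell_0$ and thereby never re-enters the uncertainty region, the algorithm has no reason ever to complete $\B_{\ell_0}(x)$. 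You flag this possibility yourself (your ``$\yh_{\ell^\dagger-1}(x)=s$ is final'' case) but then keep it as one branch of a per-point invariant on $\ell^\dagger$, which is the part you admit isn't rigorous. The per-point invariant has a real hole: your case ``$x$ is a known unknown at level $\ell^\dagger-1$ so focused querying is directed to a level $\le\ell^\dagger$'' is not justified, because the algorithm makes its focused query at the \emph{globally} smallest nonempty $U_\ell$, which may correspond to a different point far from $x$, so nothing forces progress on $\B_{\ell^\dagger}(x)$ in any given round.

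The paper sidesteps all of this with a different target: it proves (Theorem~\ref{thm:label-complexity-0}) that $x$ \emph{leaves the combined uncertainty region} $U=\bigcup_\ell U_\ell$, and observes that once $L_1(x)\le\ell_1$ and background sampling has set $\yh_{\ell_1}(x)$, leaving $U$ already forces $\yh(x)=g^*(x)$ forever (their step (c)), with no need to ever set $\yh_{\ell_0}(x)$. The counting is then global, not per-point: split the $m$ queries into two halves; phase one's background queries set every $\yh_\ell(\cdot)$ for $\ell\le\ell_1$; in phase two, either (Case~1) fewer than $m/4$ focused queries are made, which forces some round to have all $U_\ell$ empty, or (Case~2) $m/4$ focused queries are made, but since $\sum_{\ell=\ell_1+1}^{\ell_2} 2|\Delta_\ell|\tau_\ell < m/4$ caps the entire supply of focused queries at levels in $(\ell_1,\ell_2]$, at least one focused query must occur at level $\ell_2+1$; at that instant $U_\ell=\emptyset$ for all $\ell\le\ell_2$, so every $x$ with $L_2(x)\le\ell_2$ has left $U$. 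This pigeonhole-on-the-global-focused-budget is exactly the device you need to replace your ``one must rule out wasting many queries''---the paper does not rule out waste at all; it argues that waste is impossible beyond the per-level cap, so either the algorithm stalls (Case~1, good) or spills past $\ell_2$ (Case~2, also good). You will also want to fold the $2\delta$ budget as the paper does: one $\delta$ for Lemma~\ref{lemma:accurate-bias-estimates} and one for the Lemma~\ref{lemma:level-distribution}-style concentration, with $k\ge(192/\gamma^2)\ln(4|\B|/\delta)$ giving $2(\lg(n/k))e^{-k/3}\le\delta$.
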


The argument for Theorem~\ref{thm:label-complexity} is roughly that during the first $m_o(x)/2$ queries, background sampling alone is enough to ensure that $\yh_\ell(x)$ is forever set to either the correct label or !. During the next $m_o(x)/2$ queries, focused sampling then correctly resolves the label. 

One important feature of our algorithm is that it queries any point at most once. However, in some applications, a point $x$ can be queried repeatedly, with each resulting label being an independent draw according to $\eta(x)$. If \emph{repeat queries} are permissible, $O(1/\gamma^2)$ copies should be made of each point in $X$ before the algorithm is applied; and in this case, points with $|\eta(x)| \geq \gamma$ will all be correctly labeled, eventually.

\subsection{Example: One-dimensional data with monotonic $\eta$}

In order to apply Theorem~\ref{thm:label-complexity}, we need upper bounds on the critical levels $L_1(x)$ and $L_2(x)$ for each point $x \in X$, and upper bounds on the size of the sampling region $\Delta_\ell$ at each level $\ell$. We now illustrate how this works out in a canonical setting.

Suppose $X$ is an arbitrary set of $n$ points in $\X = [0,1]$ and is labeled according to a conditional probability function $\eta: \X \to [-1,1]$ that is continuous and strictly increasing. Let $\lambda \in (0,1)$ be the point for which $\eta(\lambda) = 0$ and let $\lambda_L, \lambda_R$ be the points for which $\eta(\lambda_L) = -\gamma$ and $\eta(\lambda_R) = \gamma$. Thus we are not required to label points in the interval $(\lambda_L, \lambda_R)$. See Figure~\ref{fig:oned}(a) for an example.

\begin{figure}
\begin{center}
\includegraphics[width=2.5in]{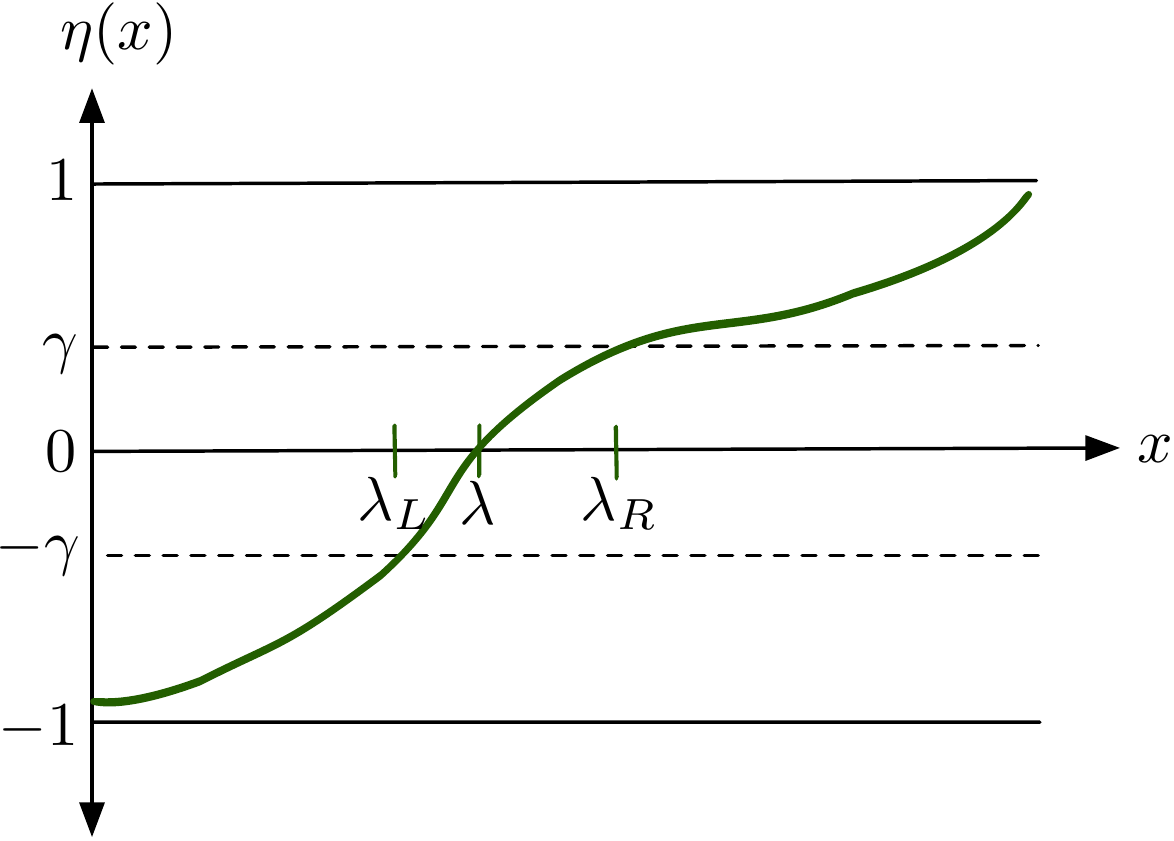}
\hspace{.5in}
\includegraphics[width=2.5in]{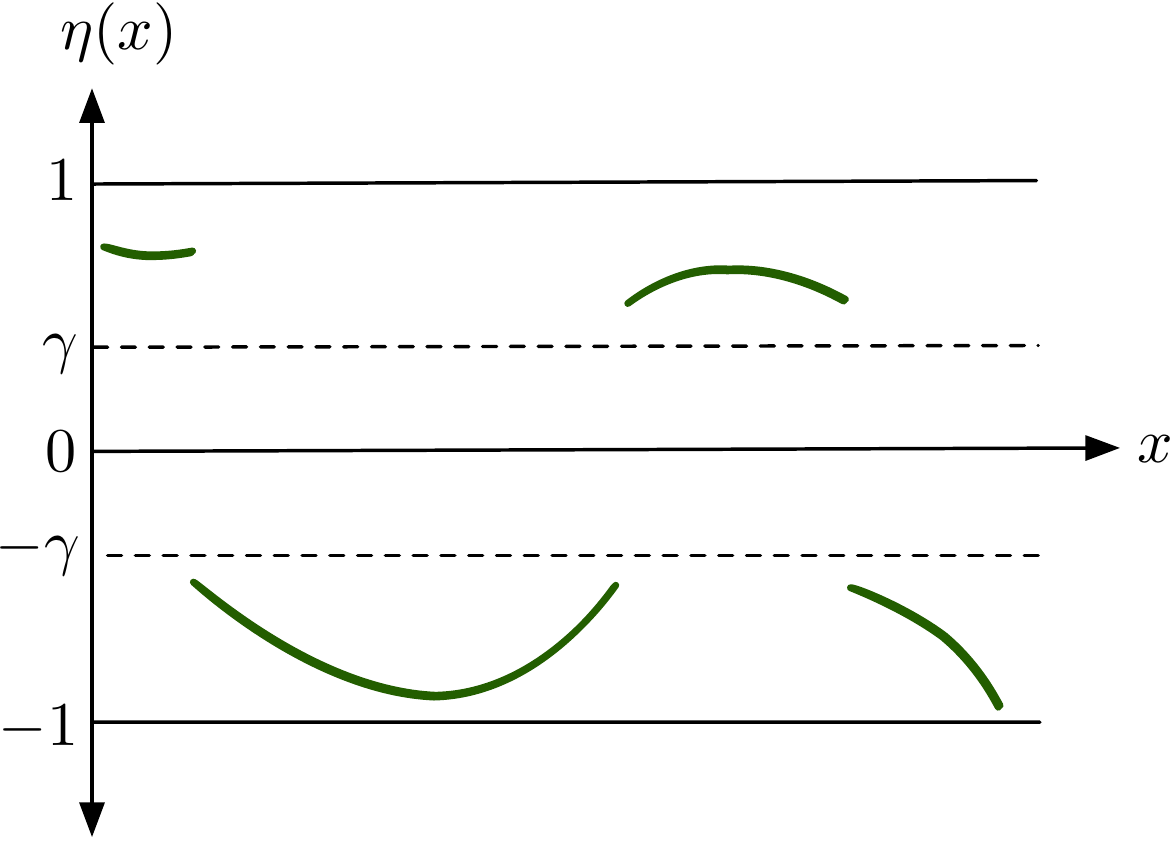}
\end{center}
\caption{(a) The conditional probability function $\eta$ is monotonically increasing; $\eta(x) = -\gamma, 0, \gamma$ at $x = \lambda_L, \lambda, \lambda_R$, respectively. (b) On each subinterval, $\eta$ is either $> \gamma$ or $<-\gamma$.}
\label{fig:oned}
\end{figure}


Take $\B$ to consist of all closed intervals, and $\B(x)$ to be intervals containing $x$. It can be shown that for $n^- = |[0,\lambda_L] \cap X|$ and $n^+ = |[\lambda_R,1] \cap X|$, 
\begin{itemize}
\item $L_1(x) \leq \lg (n/n^+)$ if $x \geq \lambda_R$ and $L_1(x) \leq \lg (n/n^-)$ if $x \leq \lambda_L$, and
\item $L_2(x) \leq \lg (n/r(x))$, where $r(x)$ is the number of data points between $x$ and $\lambda$, counting $x$ as well.
\end{itemize}
A simple counting argument then shows that the boundary region is exponentially shrinking, $|\Delta_\ell| \leq 4n/2^\ell$, giving the following label complexity bound.
\begin{thm}
Define $r^+ = \min \{r(x): x \in X \cap [\lambda_R,1]\}$ and $r^- = \min \{r(x): x \in X \cap [0,\lambda_L]\}$. Pick any $0 < \delta < 1$. Suppose we run the algorithm of Figure~\ref{alg:main} with $k = O((\log (n/\delta))/\gamma^2)$ and that $\min(r^+, r^-) \geq 2k$.  Take any 
$$ m \geq 64k \cdot \max \left( \frac{n}{\min(n^+,n^-)}, \ 2 \lg \frac{\min(n^+,n^-)}{\min(r^+,r^-)} \right) .$$
With probability at least $1-\delta$, after making $m$ queries, the algorithm will correctly label all points $x \in X$ with $|\eta(x)| \geq \gamma$.
\label{thm:oned-monotonic}
\end{thm}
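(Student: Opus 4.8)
The plan is to derive Theorem~\ref{thm:oned-monotonic} directly from Theorem~\ref{thm:label-complexity}, after establishing the three auxiliary estimates quoted just above it: the bounds on $L_1(x)$, on $L_2(x)$, and the geometric decay $|\Delta_\ell| \le 4n/2^\ell$. First I would fix $k = c\,(\log(n/\delta))/\gamma^2$ with $c$ large enough that $k \ge (192/\gamma^2)\ln(8|\B|/\delta)$; since the intervals in $\B$ induce only $O(n^2)$ distinct subsets $X_B$, this is an $O((\log(n/\delta))/\gamma^2)$ requirement, and it lets us invoke Lemma~\ref{lemma:accurate-bias-estimates} and Theorem~\ref{thm:label-complexity} with confidence parameter $\delta/2$, so that the conclusion of Theorem~\ref{thm:label-complexity} holds with probability at least $1-\delta$. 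On that event it suffices to check $m \ge m_o(x_0)$ for every $x_0 \in X$ with $|\eta(x_0)| \ge \gamma$, where $m_o(x_0) = 32k\max(2^{L_1(x_0)}, \frac1n\sum_{\ell=0}^{L_2(x_0)}|\Delta_\ell|2^\ell)$, and this is legitimate provided $L_2(x_0) \le \lg(n/2k)$.

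I would then verify the three estimates from the interval structure of $\B$ and the monotonicity of $\eta$. For $L_1$: if $x_0 \ge \lambda_R$, take $B_o$ to be a subinterval of $[\lambda_R,1]$ that contains $x_0$ and sits at level $\lfloor\lg(n/n^+)\rfloor$ (possible because $[\lambda_R,1]$ itself holds exactly $n^+$ points, one of which is $x_0$); since $\eta \ge \gamma$ on $[\lambda_R,1]$, every ball $B$ with $X_B \subseteq X_{B_o}$ has $\eta_X(B) \ge \gamma$, so $L_1(x_0) \le \lfloor\lg(n/n^+)\rfloor$, and symmetrically for $x_0 \le \lambda_L$. For $L_2$: take $x_0 \ge \lambda_R$ and $\ell \ge \lg(n/r(x_0))$; any ball in $\B_{\ge\ell}(x_0)$ has fewer than $r(x_0) = |X\cap[\lambda,x_0]|$ points, hence as an interval containing $x_0$ it cannot reach $\lambda$, so it is entirely positive; and any $B\in\B_{\le\ell}(x_0)$ with $\eta_X(B)<0$ must extend left of $\lambda$, so its positive part $B\cap[\lambda,\infty)$ still contains $x_0$, has at least $r(x_0)\ge n/2^{\ell+1}$ points (so it lies at a level $\le\ell$), and has $\eta_X \ge 0$; thus the defining conditions for $L_2$ are met and $L_2(x_0)\le\lg(n/r(x_0))$. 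Finally, the $L_2$ estimate shows $\{x : L_2(x)\ge\ell\}$ is contained in the $\le n/2^\ell$ data points on each side of $\lambda$, and a level-$\ell$ interval containing such a point reaches at most $n/2^\ell$ further points to either side, so $|\Delta_\ell| \le 4n/2^\ell$.

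Now fix $x_0$ with $|\eta(x_0)|\ge\gamma$; by the left/right symmetry assume $x_0 \ge \lambda_R$. Since $r(x_0)\ge r^+\ge\min(r^+,r^-)\ge 2k$ by hypothesis, $L_2(x_0)\le\lg(n/r(x_0))\le\lg(n/2k)$, so Theorem~\ref{thm:label-complexity} applies to $x_0$. For the first term of $m_o(x_0)$, $2^{L_1(x_0)} \le n/n^+ \le n/\min(n^+,n^-)$. For the second, $|\Delta_\ell|2^\ell \le \min(n2^\ell, 4n)$ gives $\frac1n\sum_{\ell=0}^{L_2(x_0)}|\Delta_\ell|2^\ell \le 4(L_2(x_0)+1)$; and writing $L_2(x_0)\le\lg(n/r(x_0))\le\lg(n/r^+) = \lg(n/n^+)+\lg(n^+/r^+)$, the $\lg(n/n^+)$ part is at most $n/n^+$ and so merges with the first term, while $\lg(n^+/r^+) \le \lg(\min(n^+,n^-)/\min(r^+,r^-))$ when the latter is nonnegative (and when it is not, the first term already dominates). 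Collecting terms -- keeping the levels integer-valued and pairing the two sides through the minima -- yields $m_o(x_0) \le 64k\max(n/\min(n^+,n^-), 2\lg(\min(n^+,n^-)/\min(r^+,r^-))) \le m$. The mirror argument handles $x_0 \le \lambda_L$. Hence $m \ge m_o(x_0)$ for all judged points, and on the good event the algorithm labels all of them correctly after $m$ queries.

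The step I expect to be the main obstacle is the constant bookkeeping at the end: with only the crude bound $|\Delta_\ell|\le 4n/2^\ell$, the focused-sampling sum $\frac1n\sum_{\ell=0}^{L_2(x_0)}|\Delta_\ell|2^\ell$ grows like $\lg(n/r(x_0))$ rather than the advertised $\lg(\min(n^+,n^-)/\min(r^+,r^-))$, so one must genuinely absorb the $\lg(n/\min(n^+,n^-))$ overhead into the background term $n/\min(n^+,n^-)$ (using $\lg u \le u$) and spend the factor-of-two slack in the ``$64$'' and the multiplier ``$2$'' carefully; getting the stated constant exactly may require the slightly sharper version of the $\Delta_\ell$ bound that tracks which points of $X$ actually fall in $\Delta_\ell$ at the levels below $L_1(x_0)$. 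Verifying $|\Delta_\ell|\le 4n/2^\ell$ and the $L_2$ bound underpinning it is the other place that needs a little care, but both are elementary once one uses that all members of $\B$ are intervals and $\eta$ is monotone.
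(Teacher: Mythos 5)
Your overall route—establish $L_1(x)\le\lg(n/n^\pm)$, $L_2(x)\le\lg(n/r(x))$, $|\Delta_\ell|\le 4n/2^\ell$ from the interval structure and monotonicity of $\eta$, then feed these into the generic label-complexity bound—is the same route the paper takes, and your derivations of those three auxiliary estimates are correct. The difficulty is in the final assembly, and it is a real gap, not just bookkeeping.

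First, the specific inequality $\lg(n^+/r^+)\le\lg(\min(n^+,n^-)/\min(r^+,r^-))$ that your argument relies on is false in general: take $n^+>n^-$ and $r^+=r^-$, giving $n^+/r^+>n^-/r^-=\min/\min$. The decomposition the paper actually uses is $\lg\bigl(n/\min(r^+,r^-)\bigr)=\lg\bigl(n/\min(n^+,n^-)\bigr)+\lg\bigl(\min(n^+,n^-)/\min(r^+,r^-)\bigr)$, which holds as an identity, but the first summand is $\lg\bigl(n/\min(n^+,n^-)\bigr)$, not $\lg(n/n^+)$, so it may be considerably larger than the piece you were hoping to absorb.

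Second, and more fundamentally, you invoke the \emph{local} form, Theorem~\ref{thm:label-complexity}, whose sum $\frac{1}{n}\sum_{\ell=0}^{L_2(x)}|\Delta_\ell|2^\ell$ starts at $\ell=0$. With $|\Delta_\ell|2^\ell\le 4n$ this contributes $\approx 4L_2(x)\le 4\bigl(\lg\frac{n}{\min(n^+,n^-)}+\lg\frac{\min(n^+,n^-)}{\min(r^+,r^-)}\bigr)$, and even after the crude absorption $\lg u\le u$ one lands at $m_o(x)\lesssim 3m$, not $m_o(x)\le m$; the example $n^+=n^-$, $r^+=r^-\approx n^+/1000$ makes the discrepancy concrete. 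The paper's proof does not go through Theorem~\ref{thm:label-complexity}; it applies the \emph{global} form, Theorem~\ref{thm:label-complexity-0}, and chooses $\ell_1=\lfloor\lg(m/32k)\rfloor$, $\ell_2=\min(\ell_1+m/128k,\ \lg(n/2k))$. The point is that there the focused-sampling sum runs only over $\ell_1+1,\dots,\ell_2$, giving $\sum|\Delta_\ell|\tau_\ell\le 16k(\ell_2-\ell_1)$, so the overhead $\lg\bigl(n/\min(n^+,n^-)\bigr)$ never enters the sum at all—it is absorbed into $\ell_1$, which the hypothesis $m\ge 64k\cdot n/\min(n^+,n^-)$ already guarantees to be at least $\lg\bigl(n/\min(n^+,n^-)\bigr)$, while the hypothesis $m\ge 128k\lg\bigl(\min(n^+,n^-)/\min(r^+,r^-)\bigr)$ supplies exactly the budget for $\ell_2-\ell_1$. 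You flagged this as the likely obstacle, and the fix you gestured at (a sharper version of the $\Delta_\ell$ bound) is not the right one: the sharper tool is the global Theorem~\ref{thm:label-complexity-0}, which tracks the sum only above $\ell_1$.
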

For $n^+, n^-, r^+, r^- = \Theta(n)$, this yields a label complexity of $O(\log n)$.

\subsection{Example: One-dimensional data with Massart noise}

In the Appendix (Section~\ref{sec:oned-massart}), we develop another one-dimensional example, in which $\X$ consists of several pieces, each of which either has $\eta > \gamma$ or $\eta < -\gamma$; see Figure~\ref{fig:oned}(b) for an example. Once again, a logarithmic label complexity is obtained.

\section{Analysis of algorithm: distributional setting}
\label{sec:continuous}

We now turn to a setting where the points $X$ are sampled from a distribution $\mu$ on $\X \subset \R^d$. The algorithm is unchanged, but we seek to bound label complexity in terms of properties of $\mu$ and $\eta$.

For simplicity, we focus on the case where $\mu$ is absolutely continuous with respect to the Lebesgue measure on $\R^d$ and thus admits a density. We take $\B$ to consist of all open balls $B(x,r) = \{z: \|z-x\| < r\}$ centered in $\X$, with $\B(x)$ being balls that contain $x$.

\subsection{A generic bound based on a probabilistic notion of distance}

Our analysis rests upon a notion of distance based on the distribution $\mu$. We take the distance from a point $x$ to a set $S$ to be the probability mass of the smallest ball that contains $x$ and touches $S$.
\begin{defn}
For any $x \in \X$ and $S \subset \X$, define
$\dist(x,S) = \inf \{\mu(B): B \in \B(x), B \cap S \neq \emptyset\} .$
\label{defn:prob-dist}
\end{defn}
We will see that $L_2(x)$ can be bounded in terms of such distances. In what follows, for $s \in \{-1,0,+1\}$, take $\X^s$ to denote $\{x \in \X: \sign(\eta(x)) = s\}$.
\begin{lemma}
For any $x \in \X$ with $\eta(x) \neq 0$, let $s(x) = \sign(\eta(x))$. Then
$$ L_2(x) \leq \bigg\lceil \lg \frac{1}{\dist(x, \X^{-s(x)})} \bigg\rceil + 1.$$
\label{lemma:L2-bound}
\end{lemma}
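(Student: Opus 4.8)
The plan is to unwind the two conditions in the definition of $L_2(x)$ (Definition~\ref{defn:L12}) and show that both are met once $\ell$ reaches roughly $\lg(1/\dist(x,\X^{-s(x)}))$. Write $s = s(x)$ and let $p = \dist(x, \X^{-s})$; let $\ell^* = \lceil \lg(1/p)\rceil + 1$. The key observation is that any ball $B \in \B(x)$ with $\mu(B) < p$ cannot touch $\X^{-s}$ at all, by the definition of $\dist$. So every point $z \in B$ has $\sign(\eta(z)) \in \{0, s\}$, i.e., $s\cdot\eta(z) \geq 0$ pointwise on $B$. Averaging over $z \in X_B$ gives $s\cdot\eta_X(B) \geq 0$. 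Thus if I can argue that every ball at level $\geq \ell^*$ containing $x$ has small enough $\mu$-mass, the first condition of $L_2$ holds vacuously (no ball at those levels has $s\cdot\eta_X(B) < 0$), and the second condition also holds vacuously among balls at levels $\leq \ell^*$ — wait, not quite; I need to be careful about the level/mass correspondence, since levels are defined by $|X_B|$, the \emph{empirical} count, not by $\mu(B)$.

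This is the step I expect to be the main obstacle: translating the empirical level of a ball (defined via $|X_B|$ and $n$ through Equation~(\ref{eq:sampling-level})) into a bound on $\mu(B)$. A priori a ball at a high level — few data points — could still have large $\mu$-mass if the sample happens to under-represent it, and conversely. However, the cleanest route is to recall that the statements of this section (Theorem~\ref{thm:label-complexity-specific} and its supporting lemmas) are understood to hold on a high-probability event over the draw of $X \sim \mu^n$; on that event, uniform concentration (a VC/relative-deviation bound for the class of balls) guarantees that $|X_B|/n$ and $\mu(B)$ agree up to constants for all balls simultaneously. Concretely, on this event a ball $B \in \B_\ell$ has $\mu(B) \leq c \cdot 2^{-\ell}$ for a small constant $c$ (and the ``$+1$'' in the lemma absorbs such constants). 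Then $B \in \B_{\geq \ell^*}(x) \implies \mu(B) \leq c\cdot 2^{-\ell^*} < p$, so $B$ does not touch $\X^{-s}$ and hence $s\cdot\eta_X(B)\geq 0$ — the first bullet of the $L_2$ definition.

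For the second bullet I need: for any $B \in \B_{\leq \ell^*}(x)$ with $s\cdot\eta_X(B) < 0$, there is a strictly smaller $B' \in \B_{\leq \ell^*}(x)$ (i.e.\ $X_{B'} \subsetneq X_B$) with $s\cdot\eta_X(B') \geq 0$. Here I would take $B'$ to be the smallest ball centered at $x$ — which is available since $\B(x)$ contains balls down to ones excluding the rest of $X$, per the remarks on $\B$ — that still excludes $\X^{-s}$, or more simply any ball $B(x,r)$ with $\mu(B(x,r)) < p$ whose empirical count places it at a level $\leq \ell^*$; such a ball exists because a ball of $\mu$-mass just below $p$ sits at level $\geq \ell^* - O(1)$, and the ``$+1$'' slack in the lemma's bound is exactly what lets me fit it in at level $\leq \ell^*$. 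Since this $B'$ excludes $\X^{-s}$ it has $s\cdot\eta_X(B')\geq 0$, and since $\mu(B') < p \leq \mu(B)$ (as $B$ touches $\X^{-s}$, being a ball containing $x$ with negative average it must — or one can reduce to that case) we get $X_{B'} \subsetneq X_B$ on the concentration event. Both bullets hold at level $\ell^*$, so $L_2(x) \leq \ell^* = \lceil \lg(1/\dist(x,\X^{-s(x)}))\rceil + 1$, as claimed. The only genuinely delicate point, to be handled with care in the write-up, is making the empirical-to-$\mu$ mass translation clean enough that the constants all collapse into the single additive ``$+1$'' — I would state the requisite uniform concentration bound for balls explicitly (or cite the earlier place in the paper where the good event is defined) and then the rest is bookkeeping.
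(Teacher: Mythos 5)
Your proposal follows the paper's proof essentially step for step: set $p = \dist(x,\X^{-s(x)})$ and $\ell^* = \lceil\lg(1/p)\rceil + 1$; use the high-probability concentration lemma relating $|X_B|/n$ to $\mu(B)$ (Lemma~\ref{lemma:ball-size-bounds}) to conclude that any $B\in\B_{\geq\ell^*}(x)$ has $\mu(B)<p$ and therefore avoids $\X^{-s}$, giving the first bullet of Definition~\ref{defn:L12}; and for the second bullet, exhibit a smaller ball $B'\in\B_{\leq\ell^*}(x)$ that misses $\X^{-s}$, using the mass-to-level translation (Lemma~\ref{lemma:mass-vs-level}) and noting that the additive $+1$ absorbs the ceiling. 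This is exactly the argument the paper gives.

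One logical slip, worth flagging: in justifying $X_{B'} \subsetneq X_B$ you write ``since $\mu(B') < p \leq \mu(B)$ \ldots we get $X_{B'}\subsetneq X_B$.'' That implication does not hold; a ball of smaller mass need not be set-theoretically nested inside a ball of larger mass, so the empirical data sets need not be nested either, regardless of the concentration event. What the paper actually does — and what you gesture at in your parenthetical about small balls centered at $x$, but then drop — is to \emph{choose} $B'$ to be a subset of $B$: since $B$ is an open ball containing $x$, a ball centered at $x$ of small enough radius is contained in $B$, and by absolute continuity of $\mu$ its mass can be tuned to $p-\epsilon$ for any $\epsilon>0$; then Lemma~\ref{lemma:mass-vs-level} puts it at level $\leq\ell^*$. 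Containment $B'\subset B$ gives $X_{B'}\subseteq X_B$, and strictness follows because $B'$ has nonnegative bias while $B$ has negative bias. With that correction, the rest of your argument is sound and matches the paper.
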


We will also use this probabilistic distance to define notions of boundary. We take the $p$-boundary to consist of points that are at distance $\leq p$ from points of the opposite label.
\begin{defn}
For any $p > 0$, define $\partial_p = \{x \in \X: \dist(x, \X^{-s(x)}) \leq p\}$.
\label{defn:boundary}
\end{defn}
Notice that this includes all $x$ with $\eta(x) = 0$.

The $(p,q)$-boundary consists of points at distance $\leq q$ from the $p$-boundary.
\begin{defn}
For any $p,q > 0$, define $\partial_{p,q} = \{x \in \X: \dist(x, \partial_p) \leq q\}$.
\label{defn:boundary2}
\end{defn}

We can bound $\Delta_\ell$, the query region at level $\ell$, in terms of this second-order boundary.
\begin{lemma}
For any $\ell \geq 0$, we have $\Delta_\ell \subset \partial_{4/2^\ell, 2/2^\ell}$.
\label{lemma:delta-continuous}
\end{lemma}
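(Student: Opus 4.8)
The plan is to unpack the definition of $\Delta_\ell$ and show that every point it contains lies within probabilistic distance $2/2^\ell$ of the $(4/2^\ell)$-boundary. Recall from Equation~(\ref{eq:sampling-region}) that $\Delta_\ell = \bigcup_{x: L_2(x) \geq \ell} \bigcup_{B \in \B_\ell(x)} X_B$. So fix a point $x$ with $L_2(x) \geq \ell$, a ball $B \in \B_\ell(x)$, and a point $z \in X_B$; I must show $z \in \partial_{4/2^\ell,\, 2/2^\ell}$, i.e.\ that $\dist(z, \partial_{4/2^\ell}) \leq 2/2^\ell$. The first step is to translate the hypothesis $L_2(x) \geq \ell$ into a statement about distances: since $L_2$ is the \emph{smallest} level meeting the requirements of Definition~\ref{defn:L12}, the fact that $\ell - 1$ does \emph{not} meet them (or, handled separately, $\ell = 0$) should be turned, via the contrapositive of Lemma~\ref{lemma:L2-bound}, into a lower bound on $\dist(x, \X^{-s(x)})$ — or more directly, one should just argue that if $x$ had $\dist(x,\X^{-s(x)})$ too large then $L_2(x)$ would be small. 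Concretely, Lemma~\ref{lemma:L2-bound} gives $L_2(x) \leq \lceil \lg(1/\dist(x,\X^{-s(x)}))\rceil + 1$, so $L_2(x) \geq \ell$ forces $\dist(x,\X^{-s(x)})$ to be not too small only in the reverse direction — I actually want the opposite inequality. Let me reconsider: I want to show $x$ (and nearby $z$) is \emph{close} to the opposite-label region, so the useful direction is that $L_2(x) \geq \ell$ should \emph{not} directly bound $\dist(x,\X^{-s(x)})$ from above via that lemma; instead I expect the real content to come from the structure of balls at level $\ell$.

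Here is the cleaner route. A ball $B \in \B_\ell$ has $|X_B| < n/2^\ell$ by Equation~(\ref{eq:sampling-level}); in the distributional setting one shows (presumably with a uniform-convergence/VC argument over all balls, which is surely established earlier or is routine) that this means $\mu(B) \lesssim 2/2^\ell$ up to the sampling error absorbed into constants — giving $\mu(B) \leq 2/2^\ell$, say. Now the key claim is: \emph{if $L_2(x) \geq \ell$ then $x$ itself lies in $\partial_{4/2^\ell}$}, i.e.\ $\dist(x,\X^{-s(x)}) \leq 4/2^\ell$. To see this, suppose instead $\dist(x,\X^{-s(x)}) > 4/2^\ell$. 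Then every ball in $\B(x)$ of $\mu$-mass at most $4/2^\ell$ — in particular every ball in $\B_{\geq \ell-1}(x)$, which have mass $\leq 4/2^\ell$ — avoids $\X^{-s(x)}$, hence has $\eta_X(B)$ of sign $s(x)$ (or zero), so $s(x)\cdot\eta_X(B) \geq 0$. Checking the two bullets of the $L_2$ definition at level $\ell - 1$: the first bullet asks this for all $B \in \B_{\geq \ell-1}(x)$, which we just got; the second bullet asks, for $B \in \B_{\leq \ell-1}(x)$ with $s(x)\cdot\eta_X(B) < 0$, for a smaller witness ball with nonnegative signed bias, and since no such $B$ exists the condition is vacuous. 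Hence $L_2(x) \leq \ell - 1$, a contradiction. (The $\ell \in \{0\}$ edge case is immediate since $\partial_{4}= \X$.) So $x \in \partial_{4/2^\ell}$.

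Finally, I need to move from $x$ to $z$: we have $z \in X_B$ and $x \in B$ with $B \in \B_\ell(x)$, so $B \in \B(z)$ as well (it contains $z$), and $B$ touches $\partial_{4/2^\ell}$ since $x \in B \cap \partial_{4/2^\ell}$. Therefore $\dist(z, \partial_{4/2^\ell}) \leq \mu(B) \leq 2/2^\ell$, which is exactly the statement $z \in \partial_{4/2^\ell,\,2/2^\ell}$. Taking the union over all such $x, B, z$ gives $\Delta_\ell \subset \partial_{4/2^\ell,\,2/2^\ell}$.

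\medskip

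The main obstacle I anticipate is pinning down the exact constants in the passage from the combinatorial level condition $|X_B| < n/2^\ell$ to the measure bound $\mu(B) \leq 2/2^\ell$: this requires a high-probability uniform bound over the (infinite but finite-VC) class $\B$ of all balls, and the factor of $2$ (versus, say, $1/2^\ell$ or $4/2^\ell$) must line up with the $4/2^\ell$ and $2/2^\ell$ appearing in the statement. I would expect the relevant concentration statement to be available from the finite-population analysis transported to the sampled setting, and the constants $4$ and $2$ in the lemma are presumably chosen precisely to make this step and the $L_2$-at-level-$(\ell-1)$ argument go through with room to spare. The rest is definition-chasing.
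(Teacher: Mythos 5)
Your initial instinct---to apply Lemma~\ref{lemma:L2-bound} directly---is the right one, and it is exactly how the paper proves this lemma; you then abandoned it due to a sign confusion that does not actually exist. If $L_2(x) \geq \ell$ and $\eta(x) \neq 0$, then Lemma~\ref{lemma:L2-bound} gives
$\ell \leq L_2(x) \leq \lceil \lg(1/\dist(x,\X^{-s(x)})) \rceil + 1 \leq \lg(1/\dist(x,\X^{-s(x)})) + 2$,
hence $\dist(x, \X^{-s(x)}) \leq 4/2^\ell$, i.e.\ $x \in \partial_{4/2^\ell}$. This is an \emph{upper} bound on the distance---$L_2$ large forces $x$ to be \emph{close} to the opposite-label region---which is exactly the direction you wanted; your remark that the lemma ``forces $\dist$ to be not too small'' is simply a misreading of your own inequality chain.

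The ``cleaner route'' you substitute contains a genuine gap. You argue that if $\dist(x, \X^{-s(x)}) > 4/2^\ell$ then level $\ell-1$ meets both bullets of Definition~\ref{defn:L12}, and for the second bullet you assert vacuity because ``no such $B$ exists.'' That is false: $\B_{\leq \ell-1}(x)$ ranges over all levels $0$ through $\ell-1$, and while balls at level exactly $\ell-1$ have $\mu(B) < 4/2^\ell$ and hence avoid $\X^{-s(x)}$, balls at levels $0,\dots,\ell-2$ can have probability mass far exceeding $4/2^\ell$, can intersect $\X^{-s(x)}$, and can have $s(x)\cdot\eta_X(B) < 0$. For those balls the second bullet genuinely requires constructing a witness $B' \in \B_{\leq \ell-1}(x)$ with $X_{B'} \subset X_B$ and $s(x)\cdot\eta_X(B') \geq 0$. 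The paper does this in the proof of Lemma~\ref{lemma:L2-bound} by shrinking a ball around $x$ to $\mu$-mass just under $\dist(x,\X^{-s(x)})$, using absolute continuity of $\mu$ and Lemma~\ref{lemma:mass-vs-level} to place it at a low enough level. Your proposal silently skips this construction, which is the real content of the $L_2$ bound. (You also omit the case $\eta(x)=0$; the paper handles it in one line via the convention that $\partial_p$ contains all such $x$.) The closing step---from $x \in \partial_{4/2^\ell}$ and $\mu(B) \leq 2/2^\ell$ to $z \in \partial_{4/2^\ell, 2/2^\ell}$---is correct and matches the paper.
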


Theorem~\ref{thm:label-complexity} thus continues to hold, but with $n \cdot \mu(\partial_{4/2^\ell, 2/2^\ell})$ in place of $|\Delta_\ell|$; see Theorem~\ref{thm:label-complexity-dist} in the Appendix. 

\subsection{Bounds under three assumptions}

In order to get concrete label complexity bounds in the distributional setting, we need to be able to bound the levels $L_1(x)$ and $L_2(x)$, as well as the probability masses of boundary sets $\partial_{p,q}$. To do this, we introduce three assumptions.

 Let $\X_\gamma = \{x \in \X: |\eta(x)| \geq \gamma\}$; thus $X \cap \X_\gamma$ is the set of points on which our labels will be judged. We further divide this set by label: for $s \in \{-1,+1\}$, let $\X^s_\gamma = \{x \in \X: s \cdot \eta(x) \geq \gamma\}$. 
\begin{enumerate}
\item[(A1)] There is an absolute constant $p_o > 0$ for which the following holds: for any $x \in \X_\gamma$, there exists $B \in \B(x)$ such that $B \subset \X^{s(x)}_\gamma$ and $\mu(B) \geq p_o$.
\end{enumerate}
As we will see, this assumption holds if the decision regions have bounded curvature. It allows us to bound $L_1(x)$ easily.
\begin{lemma}
Under (A1), every $x \in \X_\gamma$ has $L_1(x) \leq \lceil \lg (1/p_o) \rceil$.
\label{lemma:L1-bound}
\end{lemma}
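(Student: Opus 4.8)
The plan is to unwind Definition~\ref{defn:L12} for $L_1(x)$ and show that the ball $B$ supplied by (A1) satisfies both defining conditions at a level no larger than $\lceil \lg(1/p_o)\rceil$. First I would fix $x \in \X_\gamma$, set $s = s(x) = \sign(\eta(x))$, and let $B \in \B(x)$ be the ball guaranteed by (A1), so that $B \subset \X^s_\gamma$ and $\mu(B) \geq p_o$. The key observation is that $B \subset \X^s_\gamma$ forces a strong bias on \emph{every} data-subset sitting inside $B$: for any ball $B'' \in \B(x)$ with $X_{B''} \subset X_B$, all points of $X_{B''}$ lie in $\X^s_\gamma$, hence $\eta(z)$ has sign $s$ and $|\eta(z)| \geq \gamma$ for each $z \in X_{B''}$, and therefore $s \cdot \eta_X(B'') = \mbox{average}\{s\cdot\eta(z): z \in X_{B''}\} \geq \gamma$. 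Taking $B'' = B$ itself handles the first bullet of the $L_1$ definition, and the displayed inequality for arbitrary such $B''$ handles the second bullet. Thus $B$ witnesses that $L_1(x)$ is at most the level of $B$.

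It remains to bound the level of $B$. By the level definition~(\ref{eq:sampling-level}), $B$ sits at level $\ell_B$ where $|X_B| < n/2^{\ell_B}$, i.e.\ $\ell_B < \lg(n/|X_B|)$. Here is where I would need a small amount of care: $\mu(B) \geq p_o$ is a statement about the measure, not directly about the empirical count $|X_B|$, and the lemma as stated makes no high-probability qualification. So either one reads (A1) as already guaranteeing $|X_B|/n \geq p_o$ for the relevant ball (the cleanest reading, since in the finite-population parts of the paper $\mu$ is the empirical measure on $X$ and $p_o$ plays the role of a population fraction), or one enlarges $B$ slightly so that the \emph{empirical} mass is at least $p_o$ and absorbs the $\gamma$-accuracy/concentration slack into the ``$\lceil \cdot \rceil$''. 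Under the natural reading, $|X_B| \geq p_o n$ gives $\ell_B < \lg(1/p_o)$, hence $\ell_B \leq \lceil \lg(1/p_o)\rceil$, and since $L_1(x) \leq \ell_B$ we are done.

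The main obstacle I anticipate is precisely this empirical-versus-measure gap: making sure that a ball of $\mu$-mass $\geq p_o$ actually contains enough of $X$ to land at a level $\leq \lceil\lg(1/p_o)\rceil$. If the paper's convention is that (A1) is phrased in terms of whatever measure governs $X$ (empirical in the finite setting, $\mu$ in the distributional setting with the understanding that $|X_B| \approx n\mu(B)$ up to lower-order terms), this is immediate; otherwise one invokes a uniform convergence / Chernoff bound over $\B$ — already available in spirit from Lemma~\ref{lemma:accurate-bias-estimates}'s use of a union bound over $|\B|$ — to pass from $\mu(B) \geq p_o$ to $|X_B| \geq (p_o/2) n$, say, at the cost of an additive constant inside the ceiling. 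Everything else is the short sign-averaging argument above.
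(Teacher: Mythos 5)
Your proof follows essentially the same route as the paper's, and you correctly flag the one real subtlety: passing from $\mu(B) \geq p_o$ to a bound on the empirical level of $B$. The sign-averaging step is exactly as in the paper (take $B$ from (A1); since $B \subset \X^{s(x)}_\gamma$, every $B' \in \B(x)$ with $X_{B'} \subset X_B$ has $s(x) \cdot \eta_X(B') \geq \gamma$, so $B$ witnesses both bullets of Definition~\ref{defn:L12}). The paper handles the measure-to-level step by exactly the Chernoff route you sketch as your second option: Lemma~\ref{lemma:ball-size-bounds} gives $\mu(B)/2 \leq |X_B|/n \leq 2\mu(B)$ for all $B \in \B$ with high probability, and Lemma~\ref{lemma:mass-vs-level} then converts this into $\ell \leq \lceil \lg(1/\mu(B)) \rceil$ for any $B$ at level $\ell$. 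The one place you are slightly too cautious is in worrying that this slack ``costs an additive constant inside the ceiling.'' It does not: from the level definition $2^\ell < n/|X_B|$ together with $|X_B|/n \geq \mu(B)/2 \geq p_o/2$ one gets $2^\ell < 2/p_o$, i.e.\ $\ell < 1 + \lg(1/p_o)$, and since $\ell$ is an integer this already yields $\ell \leq \lceil \lg(1/p_o) \rceil$ with no additional constant. So the ceiling absorbs the factor-of-2 concentration slack exactly, which is why the lemma can be stated with $\lceil \lg(1/p_o) \rceil$ rather than $\lceil \lg(1/p_o) \rceil + O(1)$. (As you suspect, the statement is implicitly conditioned on the high-probability event of Lemma~\ref{lemma:ball-size-bounds}, which the surrounding distributional-setting development assumes throughout.)
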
 
The second assumption is a variant of the Tsybakov margin condition.
\begin{enumerate}
\item[(A2)] There are absolute constants $C > 0$ and $0 < \sigma < 1$ for which the following holds: for any $p > 0$, we have $\mu(\partial_{p,p}) \leq C p^\sigma$.
\end{enumerate}
If, for instance, $\mu$ were the uniform distribution over $[0,1]^d$, we would expect $\sigma \sim 1/d$.
The third assumption is similar in spirit, but for points of bias $\geq \gamma$. There are two options: (A3) or (A3').
\begin{enumerate}
\item[(A3)] There is an absolute constant $p_1 > 0$ such that $\mu(\partial_p \cap \X_\gamma) = 0$ for any $p < p_1$.
\item[(A3')] There are constants $C' > 0$ and $0 < \xi < 1$ such that $\mu(\partial_{p} \cap \X_\gamma) \leq C' p^\xi$ for any $p > 0$.
\end{enumerate}

Under these assumptions, we can give concrete label complexity bounds.
\begin{thm}
Assume (A1), (A2) and either (A3) or (A3'). There is an absolute constant $C''$ for which the following holds. Pick $0 < \delta < 1$ and take $k = O(((d \log n) + \log (1/\delta))/\gamma^2)$. If the algorithm of Figure~\ref{alg:main} makes $m$ queries, for
$$ \frac{128k}{p_o} \ \leq \ m \ \leq \ \frac{64C \cdot 8^\sigma}{1-\sigma} \, n^{1-\sigma} k^\sigma,$$
then with probability at least $1-\delta$:
\begin{itemize}
\item Under (A3), all of $X \cap \X_\gamma$ get Bayes-optimal labels for $m > (512C/(1-\sigma)) \cdot (1/p_1)^{1-\sigma} \cdot k$.
\item Under (A3'),  Bayes-optimal labels get assigned to all but $ C'' (k/m)^{\xi/(1-\sigma)} + (2/n) \log (4/\delta)$
fraction of $X \cap \X_\gamma$.
\end{itemize}
\label{thm:label-complexity-specific}
\end{thm}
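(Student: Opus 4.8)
The plan is to deduce everything from the finite-population bound of Theorem~\ref{thm:label-complexity}, or rather its distributional restatement (Theorem~\ref{thm:label-complexity-dist} in the Appendix), after supplying upper bounds on $L_1$, $L_2$, and the boundary masses. Recall that for $\B$ the family of all balls in $\R^d$ the VC dimension is $d+1$, so $\ln|\B|$ effectively becomes $O(d\log n)$ once the count $|X_B|$ distinguishes only finitely many balls; hence $k = O((d\log n + \log(1/\delta))/\gamma^2)$ is enough to make the hypothesis of Theorem~\ref{thm:label-complexity} hold. That theorem then tells us that, with probability at least $1-\delta/2$ (after a routine rescaling of the confidence parameter), every $x$ with $L_2(x) \le \lg(n/2k)$ has $\yh(x)$ permanently pinned to $g^*(x)$ after
\[
m_o(x) \;=\; 32k\cdot\max\!\Big(2^{L_1(x)},\ \textstyle\sum_{\ell=0}^{L_2(x)}\mu(\partial_{4/2^\ell,\,2/2^\ell})\,2^\ell\Big)
\]
queries. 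So the whole proof is a matter of bounding these two terms and then reading off which $x\in X\cap\X_\gamma$ can still be unlabeled after $m$ queries.

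For the first term, Lemma~\ref{lemma:L1-bound} together with (A1) gives $2^{L_1(x)}\le 2/p_o$ for all $x\in\X_\gamma$, so $32k\cdot 2^{L_1(x)}\le 64k/p_o\le m$ as soon as $m\ge 128k/p_o$ --- the stated lower endpoint. For the second term I would use $\partial_{4/2^\ell,2/2^\ell}\subseteq\partial_{4/2^\ell,4/2^\ell}$ together with (A2) to get $\mu(\partial_{4/2^\ell,2/2^\ell})\le C(4/2^\ell)^\sigma$, and then sum the geometric series $\sum_{\ell=0}^{L}C\,4^\sigma 2^{\ell(1-\sigma)}$, which (using $1/(2^{1-\sigma}-1)\le 2/(1-\sigma)$) is $O\big(\tfrac{C\,8^\sigma}{1-\sigma}\,2^{L(1-\sigma)}\big)$. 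Consequently $m\ge m_o(x)$ holds whenever $2^{L_2(x)}\le\Theta$, where $\Theta = \big(\tfrac{(1-\sigma)m}{O(C\,8^\sigma)\,k}\big)^{1/(1-\sigma)}$; moreover the stated upper bound $m\le\tfrac{64C\,8^\sigma}{1-\sigma}n^{1-\sigma}k^\sigma$ is precisely calibrated to force $\Theta\le n/2k$, which is exactly the precondition $L_2(x)\le\lg(n/2k)$ under which Theorem~\ref{thm:label-complexity-dist} applies. Keeping these windows for $m$ mutually consistent --- so that the maximum is dominated by the sum-term and the precondition still holds throughout the claimed interval --- is the fiddliest part of the argument, and the one place where the explicit constants in the statement actually matter.

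It remains to characterize the leftover points. By Lemma~\ref{lemma:L2-bound}, $2^{L_2(x)}\le 4/\dist(x,\X^{-s(x)})$, so any $x\in X\cap\X_\gamma$ that is not Bayes-labeled after $m$ queries must have $2^{L_2(x)}>\Theta$, hence $\dist(x,\X^{-s(x)})<4/\Theta =: p^*$; in other words the unlabeled part of $X\cap\X_\gamma$ lies in $X\cap\partial_{p^*}\cap\X_\gamma$, and a short computation gives $p^* = c(k/m)^{1/(1-\sigma)}$ for a constant $c$ depending only on the absolute constants $C,\sigma$. Under (A3), choosing $m>(512C/(1-\sigma))(1/p_1)^{1-\sigma}k$ drives $p^*<p_1$, so $\partial_{p^*}\cap\X_\gamma$ is $\mu$-null and therefore almost surely misses the i.i.d.\ sample $X$ entirely: all of $X\cap\X_\gamma$ is correctly labeled. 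Under (A3'), $\mu(\partial_{p^*}\cap\X_\gamma)\le C'(p^*)^\xi\le C''(k/m)^{\xi/(1-\sigma)}$; since $\partial_{p^*}\cap\X_\gamma$ is a fixed set, a Chernoff bound on the binomial count of sample points it contains shows that, with probability at least $1-\delta/2$, at most $C''n(k/m)^{\xi/(1-\sigma)}+2\log(4/\delta)$ points of $X\cap\X_\gamma$ escape labeling, i.e.\ at most the claimed fraction once divided by $n$. A union bound over this event and the success event of Theorem~\ref{thm:label-complexity-dist} finishes the proof.

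The main obstacle, as noted, is the constant bookkeeping that reconciles the geometric-sum bound, the ``maximum dominated by the sum-term'' requirement, and the precondition $L_2\le\lg(n/2k)$ with the single interval of $m$ in the statement; a careless constant at any of these points either empties the interval or invalidates the precondition. Everything else --- the applications of Lemmas~\ref{lemma:L1-bound}, \ref{lemma:L2-bound}, and \ref{lemma:delta-continuous}, the geometric series, and the one-set concentration inequality --- is routine.
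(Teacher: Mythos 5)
Your proposal follows essentially the same route as the paper's proof: apply the distributional label-complexity bound (Theorem~\ref{thm:label-complexity-dist}, equivalently the local Theorem~\ref{thm:label-complexity} with $n\mu(\partial_{4/2^\ell,2/2^\ell})$ in place of $|\Delta_\ell|$), use Lemma~\ref{lemma:L1-bound}/(A1) to handle the $2^{L_1}$ term via the lower endpoint of $m$, bound the boundary-mass sum by a geometric series via (A2), relate $L_2$ to $\dist(x,\X^{-s(x)})$ via Lemma~\ref{lemma:L2-bound}, and finish with either $\mu$-nullity under (A3) or a one-set concentration bound under (A3'). Your observations that the upper endpoint of $m$ is exactly what keeps $\ell_2\le\lg(n/2k)$, and that $1/(2^{1-\sigma}-1)\le 2/(1-\sigma)$ drives the geometric sum, are both correct and match the paper's calculation; you've left the explicit constant-chasing unverified but correctly flagged it as the only nontrivial bookkeeping, and it does work out.
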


Next, we'll see two settings, here and in the Appendix, in which (A1)-(A3) follow from more commonplace assumptions in nonparametric estimation and computational geometry.

\subsection{Example: Curvature and Massart noise}

Consider the following two conditions. The first says that the distribution over $\X$ is close (within a multiplicative factor) to uniform, while the second says that all of $\X$ has bias $\geq \gamma$ except for a $(d-1)$-dimensional decision surface of bounded curvature.
\begin{enumerate}
\item[(C1)] [Strong density condition] Distribution $\mu$ on $\X \subset \R^d$ admits a density that is bounded below and above: there exist constants $c_o, c_1 > 0$ such that for all balls $B \in \B$,
$$ c_o \vol(B) \leq \mu(B) \leq c_1 \vol(B) ,$$
where $\vol(\cdot)$ is $d$-dimensional volume.
\item[(C2)] [Massart noise and boundary condition] $\X = \X_\gamma^+ \cup \X_\gamma^- \cup \X^0$, where $\X^0$ separates (intersects any line between) $\X_\gamma^+$ and $\X_\gamma^-$, and is a $(d-1)$-dimensional Riemannian submanifold of reach $r_o > 0$.
\end{enumerate}
The \emph{reach} condition says that any point at Euclidean distance $< r_o$ of $\X^0$ has a unique nearest neighbor on the separator. It is a commonly-used notion of curvature in the computational geometry literature~\cite{F59,NSW06} and implies, for instance, that $\X^+$ (resp., $\X^-$) can be covered by open balls of radius $r_o$ that do not touch $\X^- \cup \X^0$ (resp., $\X^+ \cup \X^0$).
\begin{lemma}
Conditions (C1) and (C2) yield assumptions (A1), (A2), and (A3), with $p_o = r_o^d \cdot c_o \cdot v_d$, where $v_d$ is the volume of the unit ball in $\R^d$, and $\sigma = \xi = 1/d$.
\label{lemma:massart-dist}
\end{lemma}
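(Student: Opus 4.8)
The plan is to verify the three assumptions (A1), (A2), (A3) one at a time, using (C1) to pass freely between $\mu$-mass and volume, and using (C2) — specifically the reach condition — to control the geometry of the boundary region.

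First, (A1). Fix $x \in \X_\gamma$, say $x \in \X_\gamma^{s(x)}$. By the reach property noted after (C2), $\X_\gamma^{s(x)}$ is covered by open balls of radius $r_o$ that avoid $\X^0 \cup \X_\gamma^{-s(x)}$; pick one such ball $B$ containing $x$. Then $B \subset \X_\gamma^{s(x)}$, so $B \in \B(x)$ witnesses the required containment, and by the lower bound in (C1), $\mu(B) \geq c_o \vol(B) = c_o v_d r_o^d =: p_o$. Invoking Lemma~\ref{lemma:L1-bound} then gives the stated bound on $L_1$. This step is routine.

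Next, (A3). I claim $\partial_p \cap \X_\gamma = \emptyset$ for all $p < p_o$, which gives (A3) with $p_1 = p_o$. Indeed, if $x \in \X_\gamma$, the ball $B$ from the previous paragraph lies entirely in $\X^{s(x)}$, hence is disjoint from $\X^{-s(x)}$, so $\dist(x, \X^{-s(x)}) \geq \mu(B) \geq p_o$ — more precisely, any ball $B' \in \B(x)$ meeting $\X^{-s(x)}$ must have radius $\geq r_o$ (since $B$, of radius $r_o$ about $x$... here one must be a little careful: $B$ need not be centered at $x$, so I would instead argue that the open ball of radius $r_o$ about $x$ is contained in $\X^{s(x)}$ using the reach condition directly — a point within distance $r_o$ of $\X^0$ has a unique projection, and the open $r_o$-ball about a point of $\X_\gamma^{s(x)}$ cannot cross $\X^0$). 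Then $\mu(B') \geq c_o v_d r_o^d = p_o$, so $\dist(x,\X^{-s(x)}) \geq p_o > p$, i.e. $x \notin \partial_p$.

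The main work is (A2): showing $\mu(\partial_{p,p}) \leq C p^{1/d}$. The idea is that $\partial_p$ is contained in a Euclidean tube of radius $O(p^{1/d})$ around the $(d-1)$-manifold $\X^0$: a point $x$ with $\dist(x,\X^{-s(x)}) \leq p$ lies in some ball of $\mu$-mass $\leq p$, hence (by the upper bound $c_1 \vol$ in (C1)) of radius $\lesssim (p/c_1 v_d)^{1/d}$, that also touches the opposite region, so $x$ is within that radius of $\X^0$. Taking a further $\dist \leq p$ neighborhood (again a Euclidean ball of radius $\lesssim p^{1/d}$) shows $\partial_{p,p}$ lies in a tube of radius $\rho = O(p^{1/d})$ about $\X^0$. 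The reach bound $r_o$ then controls the volume of this tube: by the standard tube formula / Weyl-type estimate valid for $\rho < r_o$ (see \cite{F59,NSW06}), the volume of the $\rho$-tube about a submanifold of reach $r_o$ is $O(\rho)$ times the $(d-1)$-volume of the manifold, which is a finite constant depending on $\X^0$ and $r_o$. Combining with the upper density bound in (C1) gives $\mu(\partial_{p,p}) = O(\rho) = O(p^{1/d})$, i.e.\ (A2) with $\sigma = 1/d$; and the same tube estimate with the $\X_\gamma$ restriction removed is not even needed, since (A3) already handles that case, so we may set $\xi = 1/d$ as well (or simply take (A3), not (A3')). I expect the tube-volume estimate — making sure the constants only involve $r_o$, $c_o$, $c_1$, $d$, and the total area of $\X^0$, and handling the regime $\rho \geq r_o$ separately (where the bound is trivial since $\mu(\partial_{p,p}) \leq 1$ and $p^{1/d}$ is bounded below) — to be the one genuinely delicate point; everything else is bookkeeping with (C1).
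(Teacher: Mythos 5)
Your handling of (A1) matches the paper exactly, and your (A2) argument — contain $\partial_{p,p}$ in a Euclidean tube of radius $O(p^{1/d})$ around $\X^0$, then bound its volume via the reach/tube formula and translate to $\mu$-mass via (C1) — is essentially the same argument the paper gives, with the added care about the regime $\rho \geq r_o$ that the paper handles only by restricting to $r < r_o/4$.

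Your (A3) argument, however, has a genuine gap and the claim you make is in fact false. The problem is the assertion that ``the open $r_o$-ball about a point of $\X_\gamma^{s(x)}$ cannot cross $\X^0$.'' The reach condition constrains $\X^0$ (no medial-axis point within distance $r_o$, equivalently the rolling-ball property), not the distance from $\X_\gamma^{s(x)}$ to $\X^0$. Under (C2) the three pieces partition $\X$, so $\X_\gamma^{s(x)}$ extends right up to $\X^0$: a point $x \in \X_\gamma^{s(x)}$ at Euclidean distance $\epsilon \ll r_o$ from $\X^0$ has an open $r_o$-ball that crosses $\X^0$, and a tiny ball of radius roughly $2\epsilon$ containing $x$ already meets $\X^{-s(x)}$, so $\dist(x, \X^{-s(x)}) \lesssim c_1 v_d (2\epsilon)^d$ can be made arbitrarily small. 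Thus $\partial_p \cap \X_\gamma$ is nonempty (and has positive mass) for arbitrarily small $p$, and (A3) with a fixed $p_1$ cannot hold whenever $\X^0 \neq \emptyset$. (Your earlier inequality ``$\dist(x,\X^{-s(x)}) \geq \mu(B)$'' because $B$ is disjoint from $\X^{-s(x)}$ was also unfounded, as you yourself noticed — the distance is an infimum over balls that \emph{do} meet $\X^{-s(x)}$, so one disjoint ball tells you nothing.)

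What actually holds, and what the paper's appendix proves, is (A3') with $\xi = 1/d$; the lemma statement's ``(A3)'' alongside ``$\xi = 1/d$'' is a typo in the paper, and (A3') is also what feeds the downstream $1/m^{1/(d-1)}$ rate in Theorem~\ref{thm:label-complexity-specific}. The fix is cheap and you nearly say it yourself: since $\partial_p \subset \partial_{p,p}$, the tube bound you derived for (A2) already gives $\mu(\partial_p \cap \X_\gamma) \leq \mu(\partial_{p,p}) = O(p^{1/d})$, i.e.\ (A3') with $\xi = 1/d$. Drop the (A3) claim entirely and substitute this.
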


Theorem~\ref{thm:label-complexity-specific} then implies a convergence rate of $1/m^{1/(d-1)}$ after $m$ queries. This is an improvement over the usual $1/m^{1/d}$ rate for random querying and has been observed previously in similar settings~\cite{CN08}.


\appendix

\section{Technicalities: discrete setting}

\subsection{Large deviation bounds for the discrete setting}

\begin{lemma}
Fix a confidence parameter $0 < \delta < 1$ and a positive integer $k \geq 6 \ln (4/\delta)$. 

Let $x_1, \ldots, x_m$ be any collection of points. Suppose that the labels $Y_i \in \{-1,1\}$ of these points are independent, with $\E Y_i = \eta(x_i)$. Define
$$ \eta_o = \frac{1}{m} \left( \eta(x_1) + \cdots + \eta(x_m) \right) .$$
Now consider the following estimator $Z$ of this quantity:
\begin{itemize}
\item Each $x_i$ is chosen with probability $q > 0$, independently. Let $N$ be the number of selected points.
\item If $N > 0$, the labels $Y_i$ of the selected points are obtained, and $Z$ is their average.
\end{itemize}
If $qm \geq k + \sqrt{6k \ln (4/\delta)}$, with probability at least $1-\delta$, 
\begin{enumerate}
\item[(a)] $N \geq k$, and 
\item[(b)] $| Z - \eta_o| < \sqrt{(48/k) \ln (4/\delta)}$.
\end{enumerate}
\label{lemma:large-deviation-discrete}
\end{lemma}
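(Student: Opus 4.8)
The plan is to prove Lemma~\ref{lemma:large-deviation-discrete} by handling the two claims in turn, and in each case to first condition on which points get selected — equivalently, on the independent $\mathrm{Bernoulli}(q)$ indicators $B_1,\dots,B_m$ with $N = \sum_i B_i$ and $S = \{i : B_i = 1\}$ — and then to deal separately with the residual label noise.

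For (a), I would just apply the Chernoff lower-tail bound to $N$, whose mean is $qm$. Setting $t := \E N - k = qm - k$, the hypothesis is precisely the statement $t \ge \sqrt{6k\ln(4/\delta)}$, and $\pr(N < k) \le \pr(N \le \E N - t) \le \exp(-t^2/(2\E N))$. The one wrinkle is that this exponent can look weak when $\E N$ is large, so I would split on whether $qm \le 2k$: in that regime $2\E N \le 4k$, so $t^2/(2\E N) \ge t^2/(4k) \ge \tfrac{3}{2}\ln(4/\delta)$, while if $qm > 2k$ then $t > qm/2$, so $t^2/(2\E N) > qm/8 \ge k/4 \ge \tfrac{3}{2}\ln(4/\delta)$ using $k \ge 6\ln(4/\delta)$. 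Either way $\pr(N < k) \le (\delta/4)^{3/2} \le \delta/4$.

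For (b), the main idea is to condition on $S$ and write $Z - \eta_o = (Z - \overline{\eta}_S) + (\overline{\eta}_S - \eta_o)$ with $\overline{\eta}_S = \frac1N\sum_{i\in S}\eta(x_i)$. Given $S$, $Z$ is an average of $N$ independent labels in $[-1,1]$ with means $\eta(x_i)$, so Hoeffding controls $|Z - \overline{\eta}_S|$ by $\sqrt{(2/N)\ln(8/\delta)}$ up to failure probability $\delta/4$. For the second term I would use the fact that, conditioned only on the value of $N$, the set $S$ is a uniformly random $N$-element subset of $\{1,\dots,m\}$, so $\overline{\eta}_S$ is a sample mean drawn without replacement from $\{\eta(x_1),\dots,\eta(x_m)\} \subset [-1,1]$, a population whose mean is exactly $\eta_o$; Hoeffding's inequality for sampling without replacement (which is no weaker than the with-replacement bound) then gives $|\overline{\eta}_S - \eta_o| \le \sqrt{(2/N)\ln(8/\delta)}$ up to failure probability $\delta/4$. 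On the event $N \ge k$ from (a) these combine to $|Z - \eta_o| \le 2\sqrt{(2/k)\ln(8/\delta)} = \sqrt{(8/k)\ln(8/\delta)} < \sqrt{(48/k)\ln(4/\delta)}$, since $\ln(8/\delta) \le 2\ln(4/\delta)$, and a union bound over the three bad events (each at most $\delta/4$) finishes it.

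The part I expect to require the most care is the random denominator $N$ in the definition of $Z$: because $\overline{\eta}_S$ is not equal to $\eta_o$, one cannot treat $Z$ directly as an i.i.d.\ average centred at $\eta_o$, and a naive concentration bound for $\sum_i B_i(\eta(x_i)-\eta_o)$ would lose a factor once $qm \gg k$. The two-stage conditioning is what sidesteps this — fixing $S$ first reduces the label noise to a clean bounded sum, and exchangeability of the selection then lets one view $\overline{\eta}_S$ as a without-replacement sample mean whose concentration depends only on $N \ge k$, not on $qm$. A lesser nuisance is the case split in (a), forced only by the fact that the textbook exponent $t^2/(2\E N)$ weakens as $\E N$ grows even though $\{N \ge k\}$ is becoming easier; bounding $t$ below by $\E N/2$ in that regime is the fix.
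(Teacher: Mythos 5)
Your proof is correct, but it takes a genuinely different route from the paper's. The paper analyzes the unnormalized sum $S = \sum_i C_i Y_i$ directly (where $C_i$ is the selection indicator), applies Bernstein's inequality using the variance bound $\mathrm{var}(S)\le qm$, and then divides by $N$ — which is why it needs \emph{two-sided} multiplicative Chernoff control on $N$, not just the lower tail, in order to push the $\pm\epsilon qm$ fluctuation of $S$ through the random denominator. Your decomposition $Z-\eta_o = (Z-\overline{\eta}_S) + (\overline{\eta}_S-\eta_o)$ dodges the random-denominator algebra entirely: conditioning on the selected set $S$ makes the first term a clean Hoeffding average of $N$ independent bounded labels, and conditioning on the value of $N$ makes the second a without-replacement sample mean, handled by Hoeffding's without-replacement inequality, with each conditional bound valid uniformly in $N$ and hence holding unconditionally by the tower rule. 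What the paper's route buys is that it uses only the with-replacement toolbox (Chernoff + Bernstein) and needs a single concentration step for the labeled sum; what yours buys is a cleaner conceptual separation between ``which points were picked'' and ``what labels they gave,'' a slightly better constant (you end up with roughly $\sqrt{16\ln(4/\delta)/k}$ versus the stated $\sqrt{48\ln(4/\delta)/k}$), and no need for an upper tail on $N$. Two small things worth making explicit if you write this up: you should state that $Y_i$ is independent of the selection indicators so that conditioning on $S$ leaves the labels i.i.d.\ bounded with the right means, and you should note that the bound $|\overline\eta_S - \eta_o|\le\sqrt{(2/N)\ln(8/\delta)}$ is asserted with the random $N$ inside it, which is licensed because the conditional bound holds for every fixed value $N=n\ge 1$.
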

\begin{proof}
Let's start with (a). Define $c = \sqrt{3 \ln (4/\delta)}$. We'll take $qm = k + \sqrt{6k \ln (4/\delta)} = k + c \sqrt{2k}$ since this is the worst case. By assumption, $k \geq 2c^2$ and thus $qm \leq 2k$.

Now, $N$ has a binomial($m,q$) distribution. By a multiplicative Chernoff bound, for $0 < \epsilon < 1$, we have
\begin{align*}
\pr(N \geq qm(1+\epsilon)) &\leq e^{-qm\epsilon^2/3} \\
\pr(N \leq qm(1-\epsilon)) &\leq e^{-qm\epsilon^2/2}
\end{align*}
Take $\epsilon = c/\sqrt{qm}$; by the lower bound on $k$, we have $\epsilon \leq 1/2$. Recalling the choice of $c$, we see that with probability at least $1-\delta/2$, we get 
$$(1-\epsilon) qm < N < (1+\epsilon) qm .$$
The lower bound implies $N > qm(1-\epsilon) = qm - c\sqrt{qm} = k + c \sqrt{2k} - c \sqrt{qm} \geq k$.

For (b), define $C_1, \ldots, C_m \in \{0,1\}$ as random variables indicating whether the corresponding points were selected; that is, $C_i = {\bf 1}(\mbox{$x_i$ was selected})$. The sum of the obtained labels is then $S = C_1 Y_1 + \cdots + C_m Y_m$. Notice that these $C_iY_i \in \{-1,0,1\}$ are independent with $\E[C_iY_i] = q \eta(x_i)$ and $\E[(C_iY_i)^2] = \E[C_i] = q$. Thus their sum $S$ has expectation
$$ \E [S] = \sum_{i=1}^m \E[C_i Y_i] = \sum_{i=1}^m q \eta(x_i) = qm \eta_o $$
and variance
$$ \mbox{var}(S) = \sum_{i=1}^m \mbox{var}(C_iY_i) \leq qm .$$
We can bound the concentration of $S$ around its expected value using Bernstein's inequality, by which
$$ \pr(|S - \E[S]| \geq t) \leq 2 \exp \left( - \frac{t^2}{2(\mbox{var}(S) + t/3)} \right) .$$
Using $t = \epsilon qm$, we then have that $|S - qm \eta_o| \leq \epsilon qm$ with probability at least $1-\delta/2$.

Combining with the high-probability bound on $N$ above, we get
$$ \frac{qm \eta_o - \epsilon qm}{qm(1+\epsilon)} < \frac{S}{N} < \frac{qm \eta_o + \epsilon qm}{qm(1-\epsilon)},$$
whereupon (recalling $Z = S/N$)
$$ 
\eta_o \left( \frac{1}{1+\epsilon} -1 \right) - \frac{\epsilon}{1+\epsilon} < Z - \eta_o < \eta_o \left( \frac{1}{1-\epsilon} - 1 \right) + \frac{\epsilon}{1-\epsilon} .$$
Since $|\eta_o| \leq 1$,
$$ |Z - \eta_o | < \max \left( \frac{2\epsilon}{1+\epsilon}, \frac{2\epsilon}{1-\epsilon} \right)
\leq 
4 \epsilon,$$
as claimed.  
\end{proof}

\subsection{Accuracy of bias estimates: Proof of Lemma~\ref{lemma:accurate-bias-estimates}}

We will use Lemma~\ref{lemma:large-deviation-discrete} to obtain a uniform guarantee on the bias estimates for all balls $B \in \B$.

Recall that each point $x \in X$ gets a label $Y \in \{-1,+1\}$ according to the distribution
$$ \eta(x) = \E[y | x].$$ 
For any $B \in \B$ with $X_B \neq \emptyset$, let $\eta_X(B)$ be the average $\eta$-value of the points in $B$, that is,
$$ \eta_X(B) = \frac{1}{|X_B|} \sum_{x \in X_B} \eta(x) .$$

For any $B \in \B_\ell$, define its \emph{query-set} to be $\Gamma(B) = \{z \in X_B: T_z \leq \tau_\ell\}$, where the sampling threshold $\tau_\ell$ for level $\ell$ is defined as:
\begin{equation}
\tau_\ell = \min \left( \frac{2^{\ell+2}k}{n}, \ 1 \right)
\end{equation}
for some $k$. We base our bias-estimate for $B$ on the labels of points in $\Gamma(B)$.

For what follows, define
\begin{equation*}
c = \sqrt{3 \ln \frac{4|\B|}{\delta}} .
\end{equation*}

\begin{lemma}
Suppose $k \geq 2c^2$. With probability at least $1-\delta$, the following is true for all $B \in \B$ with $|X_B| \geq k$:
\begin{enumerate}
\item[(a)] The query set $\Gamma(B)$ has size at least $k$.
\item[(b)] The average label on $\Gamma(B)$, call it $\widehat{\eta}(B)$, satisfies
$$ \left| \widehat{\eta}(B) - \eta_X(B) \right| \leq \frac{4c}{\sqrt{k}}.$$
\end{enumerate}
\label{lemma:large-deviation-bounds}
\end{lemma}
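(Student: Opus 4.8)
The plan is to apply Lemma~\ref{lemma:large-deviation-discrete} to each ball $B \in \B$ with $|X_B| \geq k$ and then take a union bound over $\B$. Fix such a $B$, and suppose it is at level $\ell$, so its query-set is $\Gamma(B) = \{z \in X_B : T_z \leq \tau_\ell\}$. The randomization $T_z \sim \mbox{uniform}([0,1])$, chosen independently across points, means that each $z \in X_B$ is included in $\Gamma(B)$ independently with probability $q = \tau_\ell$. Thus $\Gamma(B)$ is exactly the kind of Poisson-type subsample analyzed in Lemma~\ref{lemma:large-deviation-discrete}, with $m = |X_B|$, $\eta_o = \eta_X(B)$, and $N = |\Gamma(B)|$, and with $Z = \widehat\eta(B)$ the average of the obtained labels. (The labels $Y_z$ are independent with $\E Y_z = \eta(z)$, as required.)

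The first step is to check the hypothesis $qm \geq k + \sqrt{6k\ln(4/\delta)}$ of Lemma~\ref{lemma:large-deviation-discrete}, applied with confidence parameter $\delta/|\B|$ rather than $\delta$ (so that $\ln(4/\delta)$ there becomes $\ln(4|\B|/\delta)$, matching the definition of $c$). We must verify $qm = \tau_\ell |X_B| \geq k + c\sqrt{2k}$. Since $B$ is at level $\ell$, Equation~(\ref{eq:sampling-level}) gives $|X_B| \geq n/2^{\ell+1}$, and when $\tau_\ell = 2^{\ell+2}k/n$ (the non-truncated case) we get $\tau_\ell |X_B| \geq (2^{\ell+2}k/n)(n/2^{\ell+1}) = 2k$. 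Using the assumption $k \geq 2c^2$, i.e.\ $c\sqrt{2k} \leq k$, we have $k + c\sqrt{2k} \leq 2k \leq \tau_\ell|X_B|$, so the hypothesis holds. In the truncated case $\tau_\ell = 1$ we have $\Gamma(B) = X_B$ with $|X_B| \geq k$ deterministically, and the bound on $|\widehat\eta(B) - \eta_X(B)|$ still follows from part (b) of Lemma~\ref{lemma:large-deviation-discrete} (or directly from Bernstein's inequality on the full set); this edge case needs a sentence of care but no real work.

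Given the hypothesis, Lemma~\ref{lemma:large-deviation-discrete} yields, with probability at least $1 - \delta/|\B|$, both $N = |\Gamma(B)| \geq k$ and $|\widehat\eta(B) - \eta_X(B)| < \sqrt{(48/k)\ln(4|\B|/\delta)} = \sqrt{16c^2/k} = 4c/\sqrt{k}$, which are exactly claims (a) and (b). Finally, union-bounding over all $B \in \B$ (there are at most $|\B|$ of them) gives failure probability at most $\delta$, completing the proof. The only mild subtlety — the ``main obstacle,'' such as it is — is bookkeeping the level-dependent threshold $\tau_\ell$ against the level-defining population bound in Equation~(\ref{eq:sampling-level}) so that $q m$ comes out to exactly $2k$ in the generic case, and separately handling the truncation $\tau_\ell = 1$; everything else is a direct invocation of the already-proved large-deviation lemma.
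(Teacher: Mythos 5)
Your proposal is correct and takes essentially the same route as the paper: apply Lemma~\ref{lemma:large-deviation-discrete} with confidence $\delta/|\B|$ so that its constant becomes the $c = \sqrt{3\ln(4|\B|/\delta)}$ used here, verify $\tau_\ell|X_B| \geq 2k \geq k + c\sqrt{2k}$ via the level bound $|X_B| \geq n/2^{\ell+1}$ and the hypothesis $k \geq 2c^2$, and union-bound over $\B$. The only cosmetic difference is how the edge case is cut: you split on whether $\tau_\ell$ is truncated to $1$, while the paper splits on $|X_B| \geq 2k$ versus $k \leq |X_B| < 2k$; these are equivalent (in fact the non-truncated case forces $|X_B| > 2k$), and both resolve the small-ball case by noting that $\Gamma(B) = X_B$ and invoking Bernstein directly, as you correctly hedge with ``or directly from Bernstein's inequality on the full set'' --- the parenthetical hedge is needed, since when $|X_B|$ is barely above $k$ the hypothesis $qm \geq k + c\sqrt{2k}$ of Lemma~\ref{lemma:large-deviation-discrete} can fail, which is exactly why the paper says the lemma ``does not apply directly.''
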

\begin{proof}
Pick $B \in \B$. There are two cases to consider.

Case 1: $|X_B| \geq 2k$. The choice of $\tau_\ell$ then ensures $|X_B| \tau_\ell \geq 2k$, so that $|\widehat{\eta}(B) - \eta_X(B)|$ can be bounded by applying Lemma~\ref{lemma:large-deviation-discrete} to the points $X_B$ with sampling probability $q = \tau_\ell$.

Case 2: $k \leq |X_B| < 2k$. In this case, $B$ lies at a level $\ell$ for which $\tau_\ell = 1$. Lemma~\ref{lemma:large-deviation-discrete} does not apply directly, but its conclusion still holds. In particular, the query-set $\Gamma(B)$ is all of $X_B$, and the same Bernstein bound from the proof of Lemma~\ref{lemma:large-deviation-discrete} can be again be applied.

To complete the proof, we take a union bound over all $B \in \B$.
\end{proof}

The following corollary of Lemma~\ref{lemma:large-deviation-bounds} is immediate.
\begin{cor}
Suppose that $k \geq (8c/\gamma)^2$. For each $B \in \B$, let $\widehat{\eta}(B)$ be the average label on the query-set $\Gamma(B)$, and define the bias-estimate $\yh(B)$ as follows:
$$ \yh(B)
= 
\left\{
\begin{array}{ll}
\sign(\widehat{\eta}(B)) & \mbox{if $|\widehat{\eta}(B)| \geq \gamma/2$} \\
0 & \mbox{otherwise}
\end{array}
\right.
$$
With probability $\geq 1-\delta$, all bias estimates $\yh(B)$, for $B \in \B$, are $\gamma$-accurate.
\label{cor:accurate-bias-estimates}
\end{cor}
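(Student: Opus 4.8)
The final statement in the excerpt is Corollary~\ref{cor:accurate-bias-estimates}: given $k \geq (8c/\gamma)^2$ with $c = \sqrt{3\ln(4|\B|/\delta)}$, the thresholded bias estimates $\yh(B)$ are simultaneously $\gamma$-accurate for all $B \in \B$ with probability $\geq 1-\delta$.

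The plan is to derive this as an essentially immediate consequence of Lemma~\ref{lemma:large-deviation-bounds}. First I would invoke that lemma: on an event $E$ of probability $\geq 1-\delta$, every ball $B \in \B$ with $|X_B| \geq k$ has $|\widehat{\eta}(B) - \eta_X(B)| \leq 4c/\sqrt{k}$. The hypothesis $k \geq (8c/\gamma)^2$ is exactly what makes $4c/\sqrt{k} \leq \gamma/2$, so on $E$ we have the clean deterministic bound $|\widehat{\eta}(B) - \eta_X(B)| \leq \gamma/2$ for all such $B$. (One should note that balls with $|X_B| < k$ are outside the scope of the analysis; implicitly the running assumption $k \geq (192/\gamma^2)\ln(4|\B|/\delta)$ from Lemma~\ref{lemma:accurate-bias-estimates}, together with $|X_B| \geq n/2^{\ell+1}$ and $\tau_\ell$ being capped appropriately, ensures the relevant balls all have enough points; I would remark that we only claim accuracy for balls actually assigned a non-$\bot$ estimate, which requires the full query-set $\Gamma(B)$ to be exhausted.)

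The remaining work is a three-case check of Definition~\ref{def:accurate-bias-estimate}, carried out pointwise on $E$. If $\yh(B) = +1$, then by definition $\widehat{\eta}(B) \geq \gamma/2$, hence $\eta_X(B) \geq \widehat{\eta}(B) - \gamma/2 \geq 0$; since $\yh(B)=+1$ also forces $\widehat{\eta}(B) > 0$ strictly via the sign, and more carefully $\widehat\eta(B) \ge \gamma/2 > \gamma/2 - \gamma/2$ only gives $\ge 0$, so to get the strict inequality $\eta_X(B) > 0$ one uses that $\widehat\eta(B) \ge \gamma/2$ and the deviation is $< \gamma/2$ when $k$ is taken slightly larger, or one simply notes the estimator's support makes the inequality strict; I would just cite the strict version of the deviation bound from Lemma~\ref{lemma:large-deviation-discrete}(b), which is stated with a strict ``$<$''. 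The case $\yh(B) = -1$ is symmetric. If $\yh(B) = 0$, then $|\widehat{\eta}(B)| < \gamma/2$, so $|\eta_X(B)| \leq |\widehat{\eta}(B)| + \gamma/2 < \gamma$, which is the third bullet. This exhausts the definition, so all $\yh(B)$ are $\gamma$-accurate on $E$, completing the proof.

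There is no real obstacle here — the corollary is packaged so that all the probabilistic content lives in Lemma~\ref{lemma:large-deviation-bounds} (and ultimately in the Bernstein/Chernoff computation of Lemma~\ref{lemma:large-deviation-discrete}). The only point requiring a little care is bookkeeping the constant: matching $4c/\sqrt{k} \leq \gamma/2$ against the hypothesis $k \geq (8c/\gamma)^2$, and then reconciling this with the constant $192$ appearing in Lemma~\ref{lemma:accurate-bias-estimates} (indeed $(8c/\gamma)^2 = 64 \cdot 3 \ln(4|\B|/\delta)/\gamma^2 = (192/\gamma^2)\ln(4|\B|/\delta)$, so the two hypotheses coincide exactly, which is worth stating explicitly so the reader sees that Corollary~\ref{cor:accurate-bias-estimates} literally \emph{is} Lemma~\ref{lemma:accurate-bias-estimates}). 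I would close by remarking that this identification is why no separate proof of Lemma~\ref{lemma:accurate-bias-estimates} is needed.
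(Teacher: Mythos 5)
Your proof is correct and follows the same route as the paper: invoke Lemma~\ref{lemma:large-deviation-bounds} to get $|\widehat{\eta}(B)-\eta_X(B)|<\gamma/2$ uniformly over $\B$ on the good event (the paper's own proof is exactly this one-line observation), then check Definition~\ref{def:accurate-bias-estimate} case by case. Your side remarks about the strict inequality in Lemma~\ref{lemma:large-deviation-discrete}(b) and about $(8c/\gamma)^2 = (192/\gamma^2)\ln(4|\B|/\delta)$ matching the hypothesis of Lemma~\ref{lemma:accurate-bias-estimates} are both accurate, just more explicit bookkeeping than the paper bothers to record.
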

\begin{proof}
By the choice of $k$, we have from Lemma~\ref{lemma:large-deviation-bounds} that 
$| \widehat{\eta}(B) - \eta_X(B) | < \gamma/2$
for all $B \in \B$.
\end{proof}

\subsection{Critical levels: Proof of Lemma~\ref{lemma:boundary}}

For part (a), note that some $B_o \in \B_{L_1(x)}(x)$ has significant bias (that is, bias $\geq \gamma$) towards the correct label $s(x)$, as does any ball $B \in \B(x)$ contained within it. For $\ell \geq L_1(x)$, the set $\{B \in \B_{\leq \ell}(x): X_{B} \subseteq X_{B_o}\}$ is nonempty (it contains $B_o$); pick any minimal ball $B$ within it. Then $\yh(B) = s(x)$ by the $\gamma$-accuracy of bias estimates (Lemma~\ref{lemma:accurate-bias-estimates}) and thus $s(x) \in \PL_\ell(x)$.

For (b), take any $\ell \geq L_2(x)$. Consider any $B \in \B_{\leq \ell}(x)$ for which $s(x) \cdot \eta_X(B) < 0$. By definition of $L_2(x)$, this $B$ must lie in $B_{< L_2(x)}(x)$, and moreover there must exist $B' \in \B_{\leq L_2(x)}(x) \subset \B_{\leq \ell}(x)$ with $X_{B'} \subset X_B$ and $s(x) \cdot \eta_X(B') \geq 0$. Thus, any minimal $B \in \B_{\leq \ell}(x)$ has bias $\geq 0$ towards the correct label $s(x)$, whereupon $\yh(B) \in \{0, s(x)\}$ by the $\gamma$-accuracy of bias estimates. Therefore, $-s(x) \not\in \PL_\ell(x)$.

\subsection{The region of focused sampling: Proof of Lemma~\ref{lemma:focused}}

Let $\overline{U}_\ell$ denote the set of all points that are ever (in any round of sampling) in the uncertainty set at level $\ell$. From Lemma~\ref{lemma:boundary}(b), we see that any $x$ with $L_2(x) < \ell$ has $\yh_{\ell-1}(x) \neq \ !$ and thus never makes it into the uncertainty set at level $\ell$. In short,
\begin{equation}
\overline{U}_{\ell} \subset \{x \in X: L_2(x) \geq \ell\}.
\label{eq:uncertainty}
\end{equation}

We see from the {\tt Focused-query} subroutine (Figure~\ref{alg:sampling}) that all focused samples at level $\ell$ lie in
$$
\bigcup_{x \in \overline{U}_\ell} \bigcup_{B \in \B_\ell(x)} \{z \in X_B: T_z \leq \tau_\ell \}
\ 
\subset 
\ 
\bigcup_{x \in X: L_2(x) \geq \ell} \bigcup_{B \in \B_\ell(x)} \{z \in X_B: T_z \leq \tau_\ell \}
\ 
=
\ 
\{z \in \Delta_\ell: T_z \leq \tau_\ell \}.
$$

\subsection{A generic label complexity bound}

We start by showing that various subsets of interest contain roughly the expected number of points at each level.
\begin{lemma}
With probability at least $1-2(\lg (n/k))e^{-k/3}$, the following hold for all levels $0 \leq \ell \leq \lg (n/2k)$.
\begin{enumerate}
\item[(a)] $|\{x \in X: T_x \leq \tau_\ell\}| < 2 n \tau_\ell$.
\item[(b)] If $\Delta_\ell \neq \emptyset$ then $|\{x \in \Delta_{\ell}: T_x \leq \tau_\ell\}| < 2 |\Delta_\ell| \tau_\ell$.
\end{enumerate}
\label{lemma:level-distribution}
\end{lemma}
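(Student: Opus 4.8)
The plan is to view each part as a simple binomial tail bound, apply a multiplicative Chernoff inequality at each level, and then union-bound over the $O(\lg(n/k))$ relevant levels. First recall that the values $T_x$ are i.i.d.\ uniform on $[0,1]$, chosen once at the outset, so for a fixed level $\ell$ and a fixed set $S \subseteq X$ the count $|\{x \in S : T_x \leq \tau_\ell\}|$ is a sum of $|S|$ independent Bernoulli$(\tau_\ell)$ random variables, with mean $|S|\tau_\ell$. For part (a) take $S = X$, so the mean is $n\tau_\ell$; for part (b) take $S = \Delta_\ell$, so the mean is $|\Delta_\ell|\tau_\ell$. In both cases we want to bound the probability that the count exceeds twice its mean.

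The second step is to control these tails. By the multiplicative Chernoff bound, $\pr(\mathrm{Bin}(N,p) \geq 2Np) \leq e^{-Np/3}$. The key observation is that the range of levels under consideration is exactly $0 \leq \ell \leq \lg(n/2k)$, which is precisely the range where $\tau_\ell = 2^{\ell+2}k/n$ has not yet been clipped to $1$; indeed $2^{\ell+2}k/n \leq 2 \cdot (n/2k) \cdot 2k/n \cdot \tfrac12 \cdot 2 = 1$ when $\ell \le \lg(n/2k)$, so at all such levels $n\tau_\ell = 2^{\ell+2}k \geq 4k$. Hence for part (a) the mean $N p = n\tau_\ell \geq 4k$ and the Chernoff bound gives failure probability at most $e^{-4k/3} \le e^{-k/3}$ at each level. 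For part (b), when $\Delta_\ell \neq \emptyset$, every point of $\Delta_\ell$ lies in some ball $B \in \B_\ell$, and such a ball has $|X_B| \geq n/2^{\ell+1}$ by \eqref{eq:sampling-level}; so $|\Delta_\ell| \geq n/2^{\ell+1}$ and therefore $|\Delta_\ell|\tau_\ell \geq (n/2^{\ell+1})(2^{\ell+2}k/n) = 2k$, again giving a per-level failure probability of at most $e^{-2k/3} \le e^{-k/3}$.

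The final step is the union bound. There are at most $\lg(n/k)$ values of $\ell$ in the range $[0, \lg(n/2k)]$, and for each we have two events (one from part (a), one from part (b)), each failing with probability at most $e^{-k/3}$. Summing gives a total failure probability of at most $2(\lg(n/k))e^{-k/3}$, which is exactly the claimed bound.

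I do not expect a serious obstacle here; the only mild subtlety is making sure the level range is used correctly so that $\tau_\ell$ is in its unclipped regime and the means $n\tau_\ell$ and $|\Delta_\ell|\tau_\ell$ are genuinely at least a constant times $k$ (this is where the hypotheses $\ell \le \lg(n/2k)$ and the level-membership bound \eqref{eq:sampling-level} for balls in $\B_\ell$ both get used). Everything else is a routine Chernoff-plus-union-bound computation.
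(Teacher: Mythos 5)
Your proof is correct and follows essentially the same route as the paper: treat each count as a binomial random variable, bound the upper tail by the multiplicative Chernoff inequality, verify the mean is at least $k$ (and for part (b), use that a nonempty $\Delta_\ell$ contains $X_B$ for some $B\in\B_\ell$ with $|X_B|\geq n/2^{\ell+1}$), and union bound over the $\lg(n/k)$ levels.

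One small slip worth flagging: your claim that the range $\ell\leq\lg(n/2k)$ is ``precisely the range where $\tau_\ell$ has not yet been clipped to $1$'' is not right, and the inline arithmetic evaluates to $2$, not $1$. At $\ell = \lg(n/2k)$, for instance, $2^{\ell+2}k/n = 2 > 1$, so $\tau_\ell$ is clipped there (and at $\ell = \lg(n/4k)$ and above). This does not damage the argument, but the clean way to handle it is to note that whenever $\tau_\ell = 1$ the counts are deterministic and strictly less than $2|S|\tau_\ell$, so both inequalities hold with probability $1$; and otherwise $\tau_\ell = 2^{\ell+2}k/n$ and your lower bounds $n\tau_\ell \geq 4k$ and $|\Delta_\ell|\tau_\ell \geq 2k$ go through. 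With that fix the per-level failure probability is at most $e^{-k/3}$ (indeed better), and the union bound gives the stated $2(\lg(n/k))e^{-k/3}$.
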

\begin{proof} 
Pick any subset $S \subset X$ and let $m = |S|$. Then $|\{x \in S: T_x \leq \tau_\ell\}|$ has a $\mbox{binomial}(m, \tau_\ell)$ distribution with expectation $m \tau_\ell$. The probability that it is greater than or equal to twice its expected value is, by a multiplicative Chernoff bound, at most $e^{-m \tau_{\ell}/3}$, which is $\leq e^{-k/3}$ as long as $m \tau_\ell \geq k$.

Both parts follow from this principle; and we take a union bound over all $\lg (n/k)$ levels. For (b), we need to check that $|\Delta_\ell| \tau_\ell \geq k$. To see this, observe from the definition (\ref{eq:sampling-region}) of $\Delta_\ell$ that if it is non-empty, then it contains $X_B$ for at least one ball $B \in \B_\ell$, and every such ball has at least $n/2^{\ell+1}$ points. Combining this with the definition $\tau_\ell = \min(2^{\ell+2}k/n, 1)$ yields $|\Delta_\ell| \tau_\ell \geq k$.
\end{proof}

Theorem~\ref{thm:label-complexity} is a restated version of the following.
\begin{thm}
Suppose the active learning algorithm makes $0 < m \leq n$ queries. Then all points $x \in X$ with $L_1(x) \leq \ell_1$ and $L_2(x) \leq \ell_2$ will get Bayes-optimal labels $\yh(x) = g^*(x)$, where
$$ \ell_1 = \left\lfloor \lg \frac{m}{32k} \right\rfloor $$
and $\ell_2$ is the largest integer $\leq \lg (n/2k)$ such that
$$ \sum_{\ell = \ell_1 + 1}^{\ell_2} |\Delta_{\ell}| \, \tau_{\ell} \ < \ \frac{m}{8} .$$
\label{thm:label-complexity-0}
\end{thm}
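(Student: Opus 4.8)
The plan is to track two separate effects of the $m$ queries: the background queries (of which there are at least $m/2$, since each iteration makes at most one focused and exactly one background query) drive $\yh_\ell(x)$ to a value in $\{s(x),!\}$ for all $\ell \le \ell_2$, and the focused queries (at least some prefix of them) then resolve the $!$'s at levels up to $\ell_2$ into $s(x)$ or $0$. First I would invoke Lemma~\ref{lemma:level-distribution} to condition on the high-probability event that, at every level $\ell \le \lg(n/2k)$, the number of points $z$ with $T_z \le \tau_\ell$ (both in all of $X$ and in $\Delta_\ell$) is less than twice its expectation; combined with the bias-accuracy event of Lemma~\ref{lemma:accurate-bias-estimates}, everything below is deterministic.

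Next I would analyze the background queries. The background query subroutine queries points of $X$ in increasing order of $T_z$, so after $m/2$ background queries all $z$ with $T_z$ below some threshold have been queried; by part (a) of Lemma~\ref{lemma:level-distribution}, having queried $m/2$ points means we have covered $\{z : T_z \le \tau_\ell\}$ for every $\ell$ with $2n\tau_\ell \le m/2$, i.e. $2^{\ell+2}k \le m/4$, which is exactly $\ell \le \ell_1 = \lfloor \lg(m/(32k))\rfloor$. Wait — more carefully, I want: for a ball $B \in \B_\ell$ with $\ell \le \ell_1$, its query set $\Gamma(B) = \{z \in X_B : T_z \le \tau_\ell\}$ is a subset of $\{z \in X : T_z \le \tau_\ell\}$, which has fewer than $2n\tau_\ell \le m/4 < m/2$ points, so after $m/2$ background queries all of $\Gamma(B)$ has been queried and $\yh(B)$ is finalized. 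Hence for every $x$ with $L_2(x) \le \ell_2$ and every $\ell$ in the range $[L_2(x), \ell_1]$ (if nonempty), all balls in $\B_\ell(x)$ have finalized bias estimates, so $\yh_\ell(x)$ is defined and, by Lemma~\ref{lemma:boundary}(b), lies in $\{s(x),0\}$; and if $\ell \le \ell_1$ but $\ell < L_2(x)$ this is where the focused queries must take over. Actually the cleanest statement: after $m/2$ background queries, every ball at level $\le \ell_1$ is finalized, so for every $x$, $\yh_{\ell_1}(x)$ is defined, and if moreover $L_2(x) \le \ell_1$ then $\yh_{\ell_1}(x) \in \{s(x),0\}$, while if $L_1(x) \le \ell_1$ then $\yh_{\ell_1}(x) \in \{s(x),!\}$ by Lemma~\ref{lemma:boundary}(a).

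Then I would handle the focused queries at levels $\ell_1+1, \ldots, \ell_2$. By Lemma~\ref{lemma:focused}, all focused samples at level $\ell$ lie in $\{z \in \Delta_\ell : T_z \le \tau_\ell\}$, a set of size $< 2|\Delta_\ell|\tau_\ell$ by Lemma~\ref{lemma:level-distribution}(b). The algorithm processes uncertainty regions lowest-level-first, so before it makes any focused query at a level $> \ell_2$ it must exhaust all focused querying at levels $\le \ell_2$; the total number of focused queries this requires is at most $\sum_{\ell=\ell_1+1}^{\ell_2} 2|\Delta_\ell|\tau_\ell < m/4$ by the defining inequality for $\ell_2$ (levels $\le \ell_1$ contribute no focused queries since, as shown above, those balls are already finalized by background sampling alone before any focused query at level $\ell_1$ could help — I need to be a little careful here and argue that focused queries at low levels are either unnecessary or already subsumed; the honest accounting is that focused queries only ever occur at levels where some $U_\ell \neq \emptyset$, and I bound the total over $\ell \le \ell_2$ by $m/2 - m/4$, leaving room). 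So within the budget of $m/2$ focused queries, the algorithm completes all focused querying through level $\ell_2$: every ball $B \in \B_\ell(x)$ for $\ell \le \ell_2$ gets finalized. Therefore $\yh_{L_1(x)}(x), \ldots, \yh_{\ell_2}(x)$ are all defined, and by Lemma~\ref{lemma:boundary} they all lie in $\{s(x)\} \cup (\{0,!\})$ with $s(x) \in \PL_\ell(x)$ for $\ell \ge L_1(x)$ and $-s(x) \notin \PL_\ell(x)$ for $\ell \ge L_2(x)$; hence $\yh_\ell(x) = s(x)$ for $L_2(x) \le \ell \le \ell_2$ (using $L_1 \le L_2$, which follows since $L_1$'s condition is stronger). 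Feeding this into the final-label rule (\ref{eq:final-label}), the largest $\ell$ with $\yh_\ell(x) \in \{\pm 1\}$ is $\ge L_2(x)$ and gives $\yh(x) = s(x) = g^*(x)$, and since no later level can overturn it (once a ball's bias is set it stays fixed, and $\yh_\ell(x) \in \{s(x),0\}$ for all $\ell \ge L_2(x)$), the label remains fixed. Finally I would translate back: $32k \cdot 2^{\ell_1} \le m$ recovers the $2^{L_1(x)}$ term, and the bound $\sum_{\ell \le \ell_2}|\Delta_\ell|\tau_\ell \le m/8$ together with $\tau_\ell \le 2^{\ell+2}k/n$ recovers the $\frac{1}{n}\sum |\Delta_\ell|2^\ell$ term in $m_o(x)$, matching the constants in Theorem~\ref{thm:label-complexity}.

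The main obstacle I anticipate is the bookkeeping of \emph{how many} queries of each type have been made at the moment the relevant balls finalize: the focused and background counters advance in lockstep, but a focused query is skipped on iterations where all $U_\ell$ are empty, so after $m$ total iterations there are exactly $\lceil m/2 \rceil$ background queries but possibly fewer focused queries — which only helps. The delicate direction is the reverse: ensuring that the algorithm has not "wasted" focused queries on levels $> \ell_2$ before finishing level $\ell_2$, which is exactly guaranteed by the lowest-level-first scheduling rule, and ensuring that the $< m/4$-and-$< m/4$ split leaves enough slack. I would also need a small argument that points $x$ with $L_2(x) > \ell_1$ but $\le \ell_2$ are genuinely resolved by the focused phase and not left in limbo — this is where Lemma~\ref{lemma:focused}'s containment $\overline U_\ell \subset \{L_2 \ge \ell\}$ and the finalization of all of $\B_\ell(x)$ for $\ell \le \ell_2$ do the work.
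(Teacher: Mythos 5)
Your overall strategy matches the paper's: background queries resolve all levels up to $\ell_1$, focused queries handle levels $\ell_1{+}1$ through $\ell_2$, both bounded via Lemma~\ref{lemma:level-distribution}, with Lemma~\ref{lemma:focused} and Lemma~\ref{lemma:boundary} doing the structural work. But there are a few genuine gaps worth flagging.

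First, your side remark that ``$L_1 \le L_2$, which follows since $L_1$'s condition is stronger'' is false. If every ball in $\B(x)$ has $s(x)\cdot\eta_X(B) \ge 0$ but the large balls at level $0$ all have bias below $\gamma$, then $L_2(x)=0$ while $L_1(x)>0$. The two conditions are orthogonal. Fortunately the consequence you want (that for $\max(L_1(x),L_2(x)) \le \ell \le \ell_2$ both parts of Lemma~\ref{lemma:boundary} apply) already follows from the theorem's hypotheses $L_1(x)\le\ell_1\le\ell_2$ and $L_2(x)\le\ell_2$, so this is recoverable, but as written the step is wrong.

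Second, and more substantively, your target conclusion ``every ball $B\in\B_\ell(x)$ for $\ell\le\ell_2$ gets finalized'' is both too strong and not what the algorithm guarantees. If $\yh_{\ell'}(x)=g^*(x)$ for some $\ell' < \ell_2$, then $x$ immediately leaves the combined uncertainty region and the algorithm has no reason to finalize $\B_{\ell'+1}(x),\ldots,\B_{\ell_2}(x)$. The paper's proof instead establishes the weaker and correct invariant: every $x\in A$ \emph{leaves the combined uncertainty region $U$} at some point during phase two, and once $L_1(x)\le\ell_1$ and $x$ is out of $U$, the final-label rule (\ref{eq:final-label}) necessarily returns $g^*(x)$ (this is the paper's claim (c), which uses Lemma~\ref{lemma:boundary}(a) applied to the last defined level $\ge\ell_1$). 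You gesture at this with your ``in limbo'' worry at the end, but the resolution you offer (finalization of all of $\B_\ell(x)$) is not the right one.

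Third, you fold the background/focused accounting together over all $m$ queries, whereas the paper's two-phase split (first $m/2$ queries, then second $m/2$) is doing real work. Without it, you face exactly the problem you parenthetically flag: focused queries at levels $\le\ell_1$ can occur early, before background sampling has covered those levels, and then you cannot cleanly assert ``levels $\le\ell_1$ contribute no focused queries.'' The paper avoids this by arguing that by the \emph{end} of phase one all of $\{z: T_z\le\tau_{\ell_1}\}$ has been queried (using that at least $m/4$ of the first $m/2$ queries are background), so in phase two every $U_\ell$ with $\ell\le\ell_1$ is already empty and focused queries there never occur. Relatedly, the paper's proof handles, via a two-case split, the possibility that fewer than $m/4$ focused queries get made in phase two (which directly implies $U$ was empty at some round); your argument implicitly assumes the focused budget is exhausted in the claimed way. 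These are repairable, but they are the genuinely delicate parts of the argument and your write-up currently leaves them open.
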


\begin{proof}
Denote the first $m/2$ queries by {\it phase one} and the second $m/2$ by {\it phase two}. We will analyze the effect of background sampling in phase one and focused sampling in phase two. We start with the former.

Of the $m/2$ queries in phase one, at least $m/4$ will be background samples. Therefore the $m/4$ points with lowest $T_x$ values are guaranteed to be queried.

Now, for $\ell_1$ as defined, we have that $\tau_{\ell_1} \leq m/8n$ and thus from Lemma~\ref{lemma:level-distribution}(a) that at most $m/4$ points in $X$ satisfy $T_x \leq \tau_{\ell_1}$. Therefore all such points are queried in phase one, and all label-estimates $\{\yh_{\ell_1}(x): x \in X\}$ are set.

It then follows from Lemma~\ref{lemma:boundary}(a) that the following hold for any $x \in X$ with $L_1(x) \leq \ell_1$:
\begin{enumerate}
\item[(a)] By the end of phase one, $\yh_{\ell_1}(x) \in \{g^*(x), !\}$.
\item[(b)] For any $\ell > \ell_1$, when $\yh_\ell(x)$ becomes available, it lies in $\{g^*(x), !\}$.
\item[(c)] If $x$ ever leaves the combined uncertainty region $U = \cup_\ell U_\ell$ during phase two, then its final label as defined in (\ref{eq:final-label}) is henceforth always $\yh(x) = g^*(x)$.
\end{enumerate}

Now let's move on to phase two. Let $A = \{x \in X: L_1(x) \leq \ell_1, L_2(x) \leq \ell_2\}$. We will show that every point in $A$ must leave the uncertainty region $U$ at some time during phase two. From (c), we can conclude that all these points have their final labels set correctly, once and for all.

We will break the argument into two cases. 

{\it Case 1:} Fewer than $m/4$ focused queries are made in phase two. This can only happen if some round of sampling has an empty uncertainty set, meaning that all of $A$ has left $U$ at that point.

{\it Case 2:} A full $m/4$ focused queries are made in phase two. By the analysis of phase one, none of these queries can be at level $\leq \ell_1$ and by Lemma~\ref{lemma:focused}, the total number of possible focused queries at levels $\ell_1+1$ through $\ell_2$ inclusive is at most
$$\sum_{\ell=\ell_1+1}^{\ell_2} |\{z \in \Delta_{\ell}: T_z \leq \tau_\ell \}| 
\ \leq \ 
\sum_{\ell=\ell_1+1}^{\ell_2} 2 |\Delta_{\ell}| \tau_\ell
\ < \ 
\frac{m}{4} ,
$$
where the first inequality is from Lemma~\ref{lemma:level-distribution}(b). Thus at least one query in phase two must be at level $\ell_2 +1$. When this query is made, every $U_\ell$ with $\ell \leq \ell_2$ must be empty and thus all of $A$ must have left the uncertainty region; recall from (\ref{eq:uncertainty}) that no $x \in A$ can be part of $U_\ell$ for $\ell > \ell_2 \geq L_2(x)$. 

Thus every $x \in A$ must leave the uncertainty region at some point in phase two, and their final labels are subsequently set correctly.
\end{proof}

\subsection{One-dimensional monotonic $\eta$: Proof of Theorem~\ref{thm:oned-monotonic}}

We begin by bounding the critical levels $L_1$ and $L_2$ for points in $X$.
\begin{lemma}
Pick any $x \in [0, 1]$.
\begin{enumerate}
\item[(a)] Define $n^- = |[0,\lambda_L] \cap X|$ and $n^+ = |[\lambda_R,1] \cap X|$. Then
$$
L_1(x)
\leq
\left\{
\begin{array}{ll}
\lg (n/n^+) & \mbox{if $x \geq \lambda_R$} \\
\lg (n/n^-) & \mbox{if $x \leq \lambda_L$}
\end{array}
\right.
$$
\item[(b)] Let $r(x)$ be the number of points between $x$ and the boundary point $\lambda$, counting $x$ itself. That is, $r(x) = |[x,\lambda) \cap X|$ if $x < \lambda$, or $|(\lambda, x] \cap X|$ if $x > \lambda$. Then $L_2(x) \leq \lg (n/r(x))$.
\end{enumerate}
\label{lemma:oned-monotonic-L}
\end{lemma}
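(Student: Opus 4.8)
The plan is to establish the two bounds separately, working directly from Definition~\ref{defn:L12} and using the monotonicity of $\eta$ together with the fact that $\B$ consists of all closed intervals. Throughout I will use that for an interval $I$, $\eta_X(I)$ is a weighted average of $\eta$-values over $X_I$, and that by monotonicity $\eta$ is below $-\gamma$ on $[0,\lambda_L]$, above $\gamma$ on $[\lambda_R,1]$, and of a single sign (matching $s(\lambda-)$ or $s(\lambda+)$) on either side of $\lambda$.

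\textbf{Bound on $L_1(x)$.} Suppose $x \ge \lambda_R$ (the case $x \le \lambda_L$ is symmetric). I want to exhibit a witnessing ball $B_o \in \B_\ell(x)$ for some $\ell \le \lg(n/n^+)$ satisfying both clauses of the $L_1$ definition. The natural candidate is $B_o = [\lambda_R, 1]$, or more precisely the smallest closed interval containing $x$ whose left endpoint is at most $\lambda_R$ — take $B_o = [\lambda_R, 1]$ itself, which contains $x$. Since every point of $X_{B_o}$ lies in $[\lambda_R,1]$, monotonicity gives $\eta \ge \gamma$ pointwise there, so $\eta_X(B_o) \ge \gamma$, i.e. $s(x)\cdot\eta_X(B_o) \ge \gamma$ with $s(x)=+1$. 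The second clause: any interval $B \in \B(x)$ with $X_B \subset X_{B_o}$ has all its $X$-points inside $[\lambda_R,1]$, hence again $\eta_X(B) \ge \gamma$. So $B_o$ witnesses level $\ell$, where $\ell$ is the level of $B_o$, i.e. the integer with $n/2^{\ell+1} \le |X_{B_o}| < n/2^\ell$; since $|X_{B_o}| = n^+$, this gives $2^\ell \le n/n^+$, i.e. $\ell \le \lg(n/n^+)$. (A small wrinkle: one must make sure $B_o$ actually contains $x$ — since $x \ge \lambda_R$ and $B_o=[\lambda_R,1]$, it does; and one should note $L_1(x)$ is the \emph{smallest} such level, so any witness suffices for an upper bound. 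If $x$ is such that a ball one level coarser still works, the bound is only improved.)

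\textbf{Bound on $L_2(x)$.} Suppose $x < \lambda$ (again the other case is symmetric), and write $s = s(x) = \sign(\eta(x))$, which is the common sign of $\eta$ on $[0,\lambda)$. Set $r = r(x) = |[x,\lambda) \cap X|$ and let $\ell$ be the level with $2^\ell \le n/r$, i.e. $\ell = \lg(n/r)$ rounded appropriately; I claim this $\ell$ satisfies both clauses of the $L_2$ definition. For the first clause, take any $B \in \B_{\ge\ell}(x)$: such an interval contains $x$ and has $|X_B| \le n/2^\ell \le r = |[x,\lambda)\cap X|$. The key geometric point is that an interval containing $x$ with at most $r$ data points cannot reach past $\lambda$ — more carefully, if it contained a point $> \lambda$ it would contain all of $[x,\lambda)\cap X$ plus that point, giving $> r$ points, contradiction. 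Hence $X_B \subset [0,\lambda)$, so $\eta \ge 0$ in the sense $s\cdot\eta \ge 0$ pointwise on $X_B$, giving $s\cdot\eta_X(B) \ge 0$; in fact strictly positive, but $\ge 0$ is all we need. For the second clause, consider any $B \in \B_{\le\ell}(x)$ with $s\cdot\eta_X(B) < 0$; such a $B$ must contain points on the far side of $\lambda$ (otherwise the previous argument forces $s\cdot\eta_X(B)>0$). Now take $B'$ to be the sub-interval $[x, \lambda^-]$ — concretely the smallest interval in $\B(x)$ containing exactly $[x,\lambda)\cap X$, which has $r$ data points and is contained in $B$ (since $B$ contains $x$ and reaches past $\lambda$, it contains $[x,\lambda)\cap X$), with $X_{B'} \subsetneq X_B$, and $B'$ lies at level $\ge \ell$, hence in $\B_{\le\ell}(x)$ only if $\ell$ is its exact level — here I should be slightly careful and instead choose $B'$ at the \emph{smallest} level that still avoids crossing $\lambda$, which by the counting argument is level $\lg(n/r)$ or deeper; in all cases $B' \in \B_{\le \ell}(x)$ is satisfied once $\ell \ge \lg(n/r)$, and $s\cdot\eta_X(B') \ge 0$. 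Thus $\ell$ meets both requirements and $L_2(x) \le \lg(n/r(x))$.

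\textbf{Main obstacle.} The real content is the counting lemma — that an interval containing $x$ with few enough data points cannot straddle $\lambda$ — and the bookkeeping matching "number of points in a ball" to "level" via Equation~(\ref{eq:sampling-level}), keeping floor/ceiling slop from breaking the clean $\lg(n/r)$ and $\lg(n/n^\pm)$ statements. The second clause of the $L_2$ definition is the subtler of the two: one must produce, for \emph{every} offending ball $B$ at level $\le \ell$, a strictly smaller ball (still at level $\le \ell$) with non-negative signed bias, and verify its level really is $\le \ell$ rather than deeper — this is where I expect to spend the most care, exploiting that shrinking an interval that crosses $\lambda$ down to $[x,\lambda)\cap X$ only removes points, hence moves to a weakly higher level, which is exactly what "$\le \ell$" with $\ell = \lg(n/r)$ permits since that interval has $r$ points. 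Monotonicity is used only in the mild form "single sign on each side of $\lambda$," so no delicate analytic estimates are needed; everything reduces to order statistics of $X$ on the line.
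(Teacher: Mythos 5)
Your proof is correct and follows essentially the same route as the paper's: for $L_1$ you exhibit $[\lambda_R,1]$ (resp.\ $[0,\lambda_L]$) as the witness ball, and for $L_2$ you use the counting observation that an interval containing $x$ with at most $r(x)$ data points cannot cross $\lambda$, which handles both clauses of Definition~\ref{defn:L12} via the same shrink-to-$[x,\lambda)\cap X$ witness the paper uses. Your momentary claim that $B'$ ``lies at level $\ge \ell$'' has the inequality backwards --- an interval with exactly $r$ points sits at level $\lceil\lg(n/r)\rceil-1 < \ell$, so $B'\in\B_{\le\ell}(x)$ holds automatically rather than needing the ``smallest level that avoids crossing'' detour --- but you recover and the conclusion stands.
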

\begin{proof}
For (a), take any $x \geq \lambda_R$ (the other case is similar). The interval $B = [\lambda,1]$ lies in $\B_\ell(x)$ for $\ell = \lceil \lg (n/n^+) - 1 \rceil$ and has $\eta_X(B) \geq \gamma$. Furthermore, any $B' \subset B$ also has $\eta_X(B') \geq \gamma$. 

For (b), take $x \geq \lambda_R$ and $\ell \geq \lg (n/r(x))$. Any $B \in \B_\ell(x)$ contains $< n/2^\ell \leq r(x)$ points and thus cannot possibly extend to the other side of the boundary. It follows that every $B \in \B_{\geq \ell}(x)$ has $\eta_X(B) \geq 0$. Moreover, any interval $B'$ that does extend to the other side of the boundary contains $B' \in \B_{\ell}(x)$ that is entirely on the same side as $x$.
\end{proof}

We can now bound the size of the query region at each level and find that it shrinks exponentially with $\ell$.
\begin{lemma}
For any $\ell \geq 0$, let $\Delta_\ell$ denote the focused querying region at level $\ell$, as defined in (\ref{eq:sampling-region}). Then $|\Delta_\ell| \leq 4n/2^\ell$.
\label{lemma:oned-monotonic-query-region}
\end{lemma}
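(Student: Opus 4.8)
The plan is to combine the bound on $L_2$ from Lemma~\ref{lemma:oned-monotonic-L}(b) with the fact that, on the line, every ball is an interval and hence picks out a contiguous block of data points in sorted order. Recall from~(\ref{eq:sampling-region}) that $\Delta_\ell = \bigcup_{x \in X: L_2(x) \geq \ell} \bigcup_{B \in \B_\ell(x)} X_B$, so it suffices to control, for each relevant $x$ and each $B \in \B_\ell(x)$, how far $X_B$ can stray from the boundary point $\lambda$.

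First I would pin down which points contribute. By Lemma~\ref{lemma:oned-monotonic-L}(b) we have $L_2(x) \leq \lg(n/r(x))$, so $L_2(x) \geq \ell$ forces $r(x) \leq n/2^\ell$; that is, $x$ is among the (at most) $n/2^\ell$ data points immediately to the left of $\lambda$, or among the (at most) $n/2^\ell$ immediately to its right. (The single point with $\eta(x)=0$, if it lies in $X$, is negligible and absorbed by the slack in the constant.) In the sorted order of $X$, every such $x$ therefore has rank within $n/2^\ell$ of $\lambda$.

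Next I would bound the reach of a single ball: any $B \in \B_\ell(x)$ has $|X_B| < n/2^\ell$ by the level definition~(\ref{eq:sampling-level}), and since $\B$ consists of closed intervals, $X_B$ is a contiguous run of the sorted data that contains $x$; hence every $z \in X_B$ has rank within $n/2^\ell$ of the rank of $x$. Chaining the two estimates, every $z \in \Delta_\ell$ has rank within $2n/2^\ell$ of $\lambda$, so $\Delta_\ell$ is contained in the union of the $\leq 2n/2^\ell$ data points on the left of $\lambda$ and the $\leq 2n/2^\ell$ on the right, which gives $|\Delta_\ell| \leq 4n/2^\ell$.

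There is no genuine obstacle here; it is a rank-counting argument. The only things to watch are the off-by-one issues hidden in ``within $r$ ranks'' (comfortably swallowed by the factor $4$), and verifying that a ball anchored just to the left of $\lambda$ cannot reach far past it to the right — which holds precisely because $|X_B| < n/2^\ell$ caps how many ranks $B$ can span beyond $x$.
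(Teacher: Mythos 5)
Your proposal is correct and follows essentially the same route as the paper's (terse) proof: restrict the union to $x$ with $r(x) \leq n/2^\ell$ via Lemma~\ref{lemma:oned-monotonic-L}(b), use $|X_B| < n/2^\ell$ from the level definition~(\ref{eq:sampling-level}), and count ranks relative to $\lambda$ to conclude at most $2n/2^\ell$ points on each side. You simply spell out the rank-chaining that the paper leaves implicit.
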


\begin{proof}
We have
\begin{align*}
\Delta_\ell 
&= \bigcup_{x \in X: L_2(x) \geq \ell} \bigcup_{B \in \B_{\ell}(x)} (B \cap X) \\
&\subset \bigcup_{x \in X: r(x) \leq n/2^\ell} \bigcup_{B \ni x: |B \cap X| < n/2^\ell} (B \cap X) 
.
\end{align*}
This includes at most $n/2^{\ell-1}$ points from $X$ on either side of $\lambda$. \end{proof}

We are now ready for the proof of Theorem~\ref{thm:oned-monotonic}.

Setting $k$ to $O((1/\gamma^2) \ln (n/\delta))$ satisfies the requirements of Theorem~\ref{thm:label-complexity}. Here we are using the fact that although $\B$ is infinite, we need only consider $O(n^2)$ distinct intervals since $|X| = n$.

First observe that for any $\ell_1 \leq \ell_2$, we have from Lemma~\ref{lemma:oned-monotonic-query-region} that
$$ \sum_{\ell=\ell_1+1}^{\ell_2} |\Delta_\ell| \tau_\ell 
\ \leq \ \sum_{\ell=\ell_1+1}^{\ell_2} \frac{4n}{2^\ell} \cdot \frac{2^{\ell+2} k}{n} 
\ = \ 16k(\ell_2-\ell_1).$$
Now, let's define
$$ \ell_1 = \left\lfloor \lg \frac{m}{32k} \right\rfloor, 
\ \ \ \ell_2 = \min \left( \ell_1 + \frac{m}{128k}, \ \ \lg \frac{n}{2k} \right).$$
Then for any $x \in [0, \lambda_L] \cup [\lambda_R,1]$, we have
$$ \ell_1 = \left\lfloor \lg \frac{m}{32k} \right\rfloor 
\ \geq \ 
\lg \frac{m}{64k}
\ \geq \ 
\lg \frac{n}{\min(n^+,n^-)}
\ \geq \ 
L_1(x)$$
and, if $\min(r^+, r^-) \geq 2k$,
\begin{align*}
\ell_2 
\ = \ \ell_1 + \frac{m}{128k} 
&\geq \lg \frac{m}{64k} + \lg \frac{\min(n^+,n^-)}{\min(r^+,r^-)} \\
&\geq \ \lg \frac{m}{64k} + \lg \frac{64kn/m}{\min(r^+,r^-)} 
\ = \ \lg \frac{n}{\min(r^+,r^-)}
\ \geq \  
L_2(x).
\end{align*}
We get the algorithmic guarantee by applying Theorem~\ref{thm:label-complexity}. 

\subsection{One-dimensional data with Massart noise}
\label{sec:oned-massart}

We now turn to another one-dimensional setting. Once again, $X$ consists of $n$ arbitrarily-placed points in $\X = [0,1]$. This time, however, they are labeled according to a conditional probability function $\eta: \X \to [-1,1]$ that satisfies the Massart noise condition:
\begin{itemize}
\item There are $p$ disjoint open intervals $I_1, \ldots, I_p$, such that $\X$ is (the closure of) their union, and
\item for each $j$, either $\eta(x) > \gamma$ for all $x \in I_j$ or $\eta(x) < -\gamma$ for all $x \in I_j$. In the first case, we write $s(I_j) = +1$ and in the second case, $s(I_j) = -1$.
\end{itemize}
Here $0 < \gamma < 1$ is some constant. See Figure~\ref{fig:oned}(b) for an illustrative example. For concreteness, the intervals $I_j$ can be written in the form $(\lambda_{j-1}, \lambda_j)$, where
$0 = \lambda_0 < \lambda_1 < \cdots < \lambda_{p-1} < \lambda_p = 1 .$
Here $\lambda_1, \ldots, \lambda_{p-1}$ are the \emph{boundary points} between intervals.

We will take $\B$ to consist of all open intervals of $[0,1]$, with $\B(x)$ denoting intervals that contain point $x$. 

We begin with bounds on the $L_1$ and $L_2$ levels for each point.
\begin{lemma}
Pick any $x \in \X$; suppose $x \in I_j$.
\begin{enumerate}
\item[(a)] Let $n_j = |X \cap I_j|$. Then $L_1(x) \leq \lg (n/n_j)$.
\item[(b)] Let $r(x)$ be the minimum number of data points that lie between $x$ and a boundary point, counting $x$ as well; this is either the number of points in the left-interval $(\lambda_{j-1},x]$ (if $j > 1$) or the right-interval $[x, \lambda_j)$ (if $j < p$), whichever is smaller. Then $L_2(x) \leq \lg (n/r(x))$.
\end{enumerate}
\label{lemma:oned-massart-L}
\end{lemma}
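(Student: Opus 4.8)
The plan is to mirror the template of Lemma~\ref{lemma:oned-monotonic-L}, which rests on two observations that carry over directly to the piecewise-constant-sign setting: (i) the interval $I_j$ containing $x$ (or, failing that, a subinterval of it) is a ball of pure, significant bias towards $s(x)$, and (ii) a ball around $x$ holding fewer than $r(x)$ data points cannot reach either of the boundary points $\lambda_{j-1},\lambda_j$ flanking $I_j$, since crossing such a boundary would force it to contain at least $r(x)$ points on one side of $x$.

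\textbf{Part (a).} I would show the two conditions defining $L_1(x)$ in Definition~\ref{defn:L12} hold at some level at most $\lg(n/n_j)$, using $B_o=I_j$. Every $z\in X\cap I_j$ has $s(x)\cdot\eta(z)>\gamma$, so $s(x)\cdot\eta_X(I_j)>\gamma$, and likewise $s(x)\cdot\eta_X(B)>\gamma$ for any ball $B$ with $X_B\subseteq X_{I_j}$; thus both conditions hold for $B_o=I_j$. By (\ref{eq:sampling-level}) and $|X_{I_j}|=n_j$, the interval $I_j$ lies at a level $\ell_j<\lg(n/n_j)$ whenever $p\ge 2$; in the degenerate case $p=1$, where $I_j=\X$ has $n$ points and belongs to no level, one instead uses any subinterval of $\X$ around $x$ with between $\lceil n/2\rceil$ and $n-1$ data points, which lies at level $0=\lg(n/n_j)$ and is purely biased. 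Either way, $L_1(x)\le\lg(n/n_j)$.

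\textbf{Part (b).} Fix an integer $\ell\ge\lg(n/r(x))$, so $n/2^\ell\le r(x)$; I would verify the two conditions defining $L_2(x)$. First, any $B\in\B_{\ge\ell}(x)$ has $|X_B|<n/2^\ell\le r(x)$, so by observation (ii) it cannot contain $\lambda_{j-1}$ or $\lambda_j$, hence $B\subseteq I_j$, hence $X_B\subseteq X\cap I_j$ and $s(x)\cdot\eta_X(B)>\gamma\ge 0$. Second, take $B\in\B_{\le\ell}(x)$ with $s(x)\cdot\eta_X(B)<0$; then $B$ must hold a data point outside $I_j$ (otherwise $X_B\subseteq X\cap I_j$ and $s(x)\cdot\eta_X(B)>\gamma$), so $B$ crosses $\lambda_{j-1}$ or $\lambda_j$. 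Set $B'=B\cap I_j$, an open interval still containing $x$, with $X_{B'}\subsetneq X_B$ (we deleted at least one point) and $X_{B'}\subseteq X\cap I_j$, so $s(x)\cdot\eta_X(B')>\gamma\ge 0$. Whichever boundary $B$ crosses, $B\cap I_j$ reaches all the way to it inside $I_j$, so $X_{B'}$ contains $(\lambda_{j-1},x]\cap X$ or $[x,\lambda_j)\cap X$, i.e.\ at least $r(x)$ points (and if $B$ crosses both boundaries then $B'=I_j$ and $|X_{B'}|=n_j\ge r(x)$). Thus $|X_{B'}|\ge r(x)>n/2^{\ell+1}$ and $|X_{B'}|\le n_j<n$ (here $p\ge 2$, since $B$ saw both signs), so by (\ref{eq:sampling-level}) $B'\in\B_{\le\ell}$; this is exactly the ball Definition~\ref{defn:L12} demands. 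Since every integer $\ell\ge\lg(n/r(x))$ works, $L_2(x)\le\lg(n/r(x))$, read with a ceiling as in Lemma~\ref{lemma:oned-monotonic-L}.

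I expect the quantitative part of (b) to be the main obstacle: ensuring that contracting a badly-biased ball $B$ to $B\cap I_j$ does not drop it below level $\ell$. The mechanism is that a badly-biased ball is \emph{forced} to cross a flanking boundary of $I_j$, and the instant it does, its intersection with $I_j$ necessarily swallows all data points between $x$ and that boundary — at least $r(x)$ of them — which is no smaller than $n/2^{\ell+1}$ precisely because $\ell\ge\lg(n/r(x))$. Everything else is a near-verbatim transcription of the monotonic argument, with the single subtlety that $I_j$ may fail to sit at an actual level when it contains all (or too many) of the data, which is handled by passing to a subinterval.
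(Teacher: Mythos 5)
Your proposal is correct and follows essentially the same approach as the paper's proof: for (a), exhibit $I_j$ (which sits at level $<\lg(n/n_j)$) as the witness $B_o$, noting every subinterval of $I_j$ inherits the pure $\gamma$-bias; for (b), observe that an interval containing $x$ and fewer than $r(x)$ data points cannot cross a flanking boundary and hence is pure, and that any badly-biased interval at a lower level contains a pure subinterval that still lands at level $\leq\ell$. The one genuine difference is the choice of witness in the second half of (b): the paper simply shrinks $B$ down to \emph{some} $B'\in\B_\ell(x)$ and invokes the just-proved purity of level-$\ell$ intervals, whereas you take $B'=B\cap I_j$ and bound its cardinality directly by $r(x)$; your route is a bit more explicit (and forces you to verify $|X_{B'}|<n$, handled correctly via the observation that a badly-biased $B$ must see a point outside $I_j$), while the paper's is shorter because it reuses its first bullet. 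Your treatment of the degenerate $p=1$ / $n_j=n$ case in (a) is a careful extra that the paper glosses over.
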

\begin{proof}
For (a), notice first that $I_j \in \B_{\ell}(x)$ for $\ell = \lceil (\lg n/n_j) - 1 \rceil$. Moreover, $s(I_j) \cdot \eta_X(I_j) > \gamma$. Thus $I_j$ belongs to $\B_\ell(x)$ and is strongly biased towards the correct label. This strong bias also holds for any subset of $I_j$. 

For (b), consider any $\ell \geq \lg (n/r(x))$. Any $B \in \B(x)$ with $\mbox{sign}(\eta_X(B)) \neq s(I_j)$ must contain either the entire left-interval $(\lambda_{j-1},x]$ or the entire right-interval $[x, \lambda_j)$, and thus has at least $r(x)$ points, which means that it is too large to be in $\B_\ell$. Thus all intervals $B \in \B_{\geq \ell}(x)$ have $s(I_j) \cdot \eta_X(B) > 0$. Also, for any interval $B \in \B_{<\ell}(x)$ there is some $B' \in \B_{\ell}(x)$ that is strictly contained within it.
\end{proof}

With $L_1(x)$ and $L_2(x)$ under control, it is easy to bound the size of the focused query region $\Delta_\ell$ at each level.
\begin{lemma}
For any $\ell \geq 0$, let $\Delta_\ell$ denote the focused querying region at level $\ell$, as defined in (\ref{eq:sampling-region}). Then $|\Delta_\ell| \leq (p-1)n/2^{\ell-2}$.
\label{lemma:oned-massart-query-region}
\end{lemma}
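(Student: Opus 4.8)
The plan is to mimic the argument for the monotonic case (Lemma~\ref{lemma:oned-monotonic-query-region}), but account for the fact that there are now $p-1$ boundary points rather than a single one. Recall from Lemma~\ref{lemma:oned-massart-L}(b) that any $x$ with $L_2(x) \geq \ell$ must have $r(x) \leq n/2^\ell$, meaning $x$ is within $n/2^\ell$ data points (on one side) of one of the boundary points $\lambda_1, \ldots, \lambda_{p-1}$. So the union defining $\Delta_\ell$ in (\ref{eq:sampling-region}) is taken over points $x$ that are ``close'' (in data-count) to some boundary point, and for each such $x$ we further union in all of $X_B$ over intervals $B \in \B_\ell(x)$, i.e.\ intervals $B \ni x$ with $|X_B| < n/2^\ell$.

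First I would fix a boundary point $\lambda_i$ and bound the contribution to $\Delta_\ell$ of all points $x$ whose nearest boundary point (in the $r(\cdot)$ sense) is $\lambda_i$. Such an $x$ lies within $n/2^\ell$ data points of $\lambda_i$, so it is contained in the window of $2 \cdot n/2^\ell = n/2^{\ell-1}$ data points straddling $\lambda_i$. Any interval $B \ni x$ with $|X_B| < n/2^\ell$ can reach at most $n/2^\ell$ additional data points beyond $x$ on each side, so every point of $X_B$ lies within $n/2^\ell + n/2^\ell = n/2^{\ell-1}$ data points of $\lambda_i$ — hence within the window of $n/2^{\ell-2}$ data points centered at $\lambda_i$. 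Summing this window size over all $p-1$ boundary points gives $|\Delta_\ell| \leq (p-1) \cdot n/2^{\ell-2}$, as claimed. (One can be slightly more careful about the constant by tracking left/right windows separately, exactly as in the monotonic proof where the two sides of $\lambda$ each contribute $n/2^{\ell-1}$; the stated bound has enough slack to absorb the rough counting above.)

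The only real subtlety is making sure the counting is by \emph{data points} rather than by Euclidean length throughout — the intervals $B$ are arbitrary, so an interval with few data points could still be geometrically long, but what matters for $\Delta_\ell = \bigcup X_B$ is how many data points it can capture, and the level constraint $B \in \B_\ell$ directly caps that at $n/2^\ell$. Since the boundary points partition $\X$ into the $p$ intervals $I_j$ and each relevant $x$ sits adjacent to exactly one boundary point on the side counted by $r(x)$, no point of $X$ is counted more than twice per boundary point, and the union over the $p-1$ boundary points is what produces the factor $p-1$. I expect no genuine obstacle here; the argument is a routine adaptation of Lemma~\ref{lemma:oned-monotonic-query-region}, with the main (minor) care being bookkeeping of the constant in front of $n/2^\ell$.
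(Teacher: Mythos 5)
Your proof is correct and takes essentially the same approach as the paper's: use Lemma~\ref{lemma:oned-massart-L}(b) to reduce to points $x$ with $r(x)\le n/2^\ell$, observe that the level-$\ell$ constraint caps $|X_B|$ at $n/2^\ell$ so everything in $X_B$ stays within roughly $n/2^{\ell-1}$ data positions of the nearby boundary point, and sum the resulting windows of size $n/2^{\ell-2}$ over the $p-1$ boundary points. You simply spell out the data-position bookkeeping more explicitly than the paper, which compresses the same counting into a single sentence.
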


\begin{proof}
We have
\begin{align*}
\Delta_\ell 
&= \bigcup_{x \in X: L_2(x) \geq \ell} \bigcup_{B \in \B_{\ell}(x)} (B \cap X) \\
&\subset \bigcup_{x \in X: r(x) \leq n/2^\ell} \bigcup_{B \ni x: |B \cap X| < n/2^\ell} (B \cap X) 
.
\end{align*}
This includes at most $n/2^{\ell-1}$ points from $X$ on either side of each boundary point $\lambda_j$.
\end{proof}

Notice that $|\Delta_\ell|$ shrinks exponentially with $\ell$. With $L_1$, $L_2$, and $|\Delta_\ell|$ values in place, Theorem~\ref{thm:label-complexity} can be applied directly to give the following label complexity bound.

\begin{thm}
Pick any $0 < \epsilon, \delta < 1$. Suppose we run the algorithm of Figure~\ref{alg:main} with $k = O((1/\gamma^2) \ln (n/\delta))$. With probability at least $1-\delta$, after making
$$ O \left( \frac{p-1}{\gamma^2} \ln \frac{p-1}{\epsilon} \ln \frac{n}{\delta} \right) $$
queries, the algorithm will assign the correct label $g^*(x)$ to at least $1-\epsilon$ fraction of $X$, except possibly the $2k$ points of either side of each boundary point.
\label{thm:oned-massart}
\end{thm}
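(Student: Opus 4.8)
The plan is to apply Theorem~\ref{thm:label-complexity} --- more precisely its restated global form, Theorem~\ref{thm:label-complexity-0} --- directly, feeding it the bounds on the critical levels and on $|\Delta_\ell|$ supplied by Lemmas~\ref{lemma:oned-massart-L} and \ref{lemma:oned-massart-query-region}. First I would fix $k = O((1/\gamma^2)\ln(n/\delta))$, observing that although $\B$ is the infinite family of all open intervals, only $O(n^2)$ of them are distinct once intersected with $X$, so Lemma~\ref{lemma:accurate-bias-estimates} holds for this $k$ and all bias estimates may be assumed $\gamma$-accurate. Since $|\Delta_\ell|\le (p-1)n/2^{\ell-2}$ and $\tau_\ell \le 2^{\ell+2}k/n$, the level-sum that controls Theorem~\ref{thm:label-complexity-0} telescopes: $\sum_{\ell=\ell_1+1}^{\ell_2}|\Delta_\ell|\tau_\ell \le 16(p-1)k\,(\ell_2-\ell_1)$, the same estimate as in the monotonic proof but with an extra factor $p-1$.

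Next I would track the two levels Theorem~\ref{thm:label-complexity-0} yields: $\ell_1 = \lfloor \lg(m/32k)\rfloor$, and $\ell_2$, the largest integer $\le \lg(n/2k)$ for which the level-sum is $< m/8$. By the telescoping bound this constraint is respected whenever $\ell_2-\ell_1 = O(m/((p-1)k))$, so $\ell_2 \ge \min\!\big(\ell_1 + \Omega(m/((p-1)k)),\ \lg(n/2k)\big)$. The theorem then guarantees a correct label for every $x$ with $L_1(x)\le\ell_1$ and $L_2(x)\le\ell_2$; by Lemma~\ref{lemma:oned-massart-L}, $L_1(x)>\ell_1$ forces $x$ into an interval $I_j$ with $n_j < n/2^{\ell_1}$, and $L_2(x)>\ell_2$ forces $r(x) < n/2^{\ell_2}$, i.e.\ $x$ lies within $n/2^{\ell_2}$ data points of some boundary point.

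The remaining step is to bound these exceptional sets and choose $m$ so that, after discarding the $4(p-1)k$ points the theorem lets us ignore (the $2k$ points on each side of each of the $p-1$ boundary points), what is left has size $\le \epsilon n$. There are at most $p$ intervals, so the ``small interval'' set has size $\le p\cdot n/2^{\ell_1}$; across $p-1$ boundaries with two sides each, the ``near-boundary'' set has size $\le 2(p-1)\cdot n/2^{\ell_2}$. A convenient simplification is that, for a point in an interior interval, lying within $2k$ data points of a boundary is equivalent to $r(x)\le 2k$, so a short interior interval sits essentially inside the near-boundary set and contributes nothing new; only the two end intervals $I_1,I_p$ (which touch a single boundary) need a separate, trivially small, accounting. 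Requiring $\ell_1,\ell_2$ to be about $\lg\frac{p-1}{\epsilon}$ makes both sets $\le \tfrac12\epsilon n$; unwinding $\ell_1=\lfloor\lg(m/32k)\rfloor$ and the increment needed to push $\ell_2$ that far above $\ell_1$ then gives $m = O\!\big(\frac{p-1}{\gamma^2}\ln\frac{p-1}{\epsilon}\ln\frac{n}{\delta}\big)$, and one checks $m\le n$ so that Theorem~\ref{thm:label-complexity-0} applies.

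I expect the bookkeeping in this last step to be the real obstacle: one must simultaneously keep $\ell_2-\ell_1$ small enough to respect the $\Delta_\ell$ budget, keep $\ell_1$ and $\ell_2$ large enough that the small-interval and near-boundary exceptions together cost at most $\epsilon n$, and keep the induced $m$ at the stated order --- all while correctly handling the end intervals and the overlap between the two exceptional families without inflating constants. The probabilistic and geometric content is already packaged in the lemmas and in Theorem~\ref{thm:label-complexity-0}; what remains is purely a matter of tuning the two level parameters.
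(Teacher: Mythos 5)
Your overall plan is the paper's: invoke Theorem~\ref{thm:label-complexity-0} with the Massart-specific bounds from Lemmas~\ref{lemma:oned-massart-L} and \ref{lemma:oned-massart-query-region}, and use the telescoping estimate $\sum_{\ell=\ell_1+1}^{\ell_2}|\Delta_\ell|\tau_\ell \le 16(p-1)k(\ell_2-\ell_1)$ to fix $\ell_2$. That skeleton matches what the authors do.

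The final tuning step, however, does not close as written. You propose to require \emph{both} $\ell_1$ and $\ell_2$ to be about $\lg\frac{p-1}{\epsilon}$, so that the small-interval set ($\le p\cdot n/2^{\ell_1}$) and the near-boundary set ($\le 2(p-1)\cdot n/2^{\ell_2}$) each fall below $\tfrac12\epsilon n$, and you then claim unwinding $\ell_1 = \lfloor\lg(m/32k)\rfloor$ yields $m = O\big(\frac{p-1}{\gamma^2}\ln\frac{p-1}{\epsilon}\ln\frac{n}{\delta}\big)$. But $\ell_1 \ge \lg\frac{p-1}{\epsilon}$ forces $m \ge 32k(p-1)/\epsilon$, which is $\Theta(\frac{p-1}{\gamma^2\epsilon}\ln\frac n\delta)$ --- a $1/\epsilon$ dependence, not $\ln(1/\epsilon)$. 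Conversely, with the stated $m$ you only get $\ell_1 \approx \lg\big((p-1)\ln\frac{p-1}{\epsilon}\big)$, which for small $\epsilon$ is far below $\lg\frac{p-1}{\epsilon}$, so your bound $p\cdot n/2^{\ell_1}$ is on the order of $n/\ln\frac{p-1}{\epsilon}$ and is not $\le \tfrac12\epsilon n$.

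Your suggested patch --- that short interior intervals are absorbed by the $2k$-per-side exemption --- does not rescue this. That absorption only holds for intervals with $n_j \le 4k$, whereas the relevant ``small'' here is $n_j < n/2^{\ell_1}$, which under $m\le n$ is always strictly larger than $4k$; intervals in the range $4k < n_j < n/2^{\ell_1}$ have interior points with $r(x) > 2k$ and $r(x) \ge n/2^{\ell_2}$, falling in neither the exemption nor the near-boundary set. What the paper actually does is leave $\ell_1 = \lfloor\lg(m/32k)\rfloor$ at its natural value, spend the telescoping budget entirely on $\ell_2 - \ell_1 = \Theta(m/(k(p-1)))$, and account only through the near-boundary set at level $\ell_2$ (restricting attention to intervals with $n_j/n = \Omega(k/m)$ in a single phrase). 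Your plan diverges precisely by trying to force $\ell_1$ as high as $\ell_2$, and that is the step that makes the stated $m$ unattainable.
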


\begin{proof}
Setting $k$ to $O((1/\gamma^2) \ln (n/\delta))$ satisfies the requirements of Theorem~\ref{thm:label-complexity}. Here we are using the fact that although $\B$ is infinite, we need only consider $O(n^2)$ distinct intervals since $|X| = n$.

Next, using Lemma~\ref{lemma:oned-massart-query-region}, we have that for any integers $0 \leq \ell_1 < \ell_2$,
$$ \sum_{\ell = \ell_1+1}^{\ell_2} |\Delta_\ell| \tau_\ell
\ \leq \ 
\sum_{\ell = \ell_1+1}^{\ell_2} \frac{(p-1) n}{2^{\ell-2}} \cdot \frac{2^{\ell+2}k}{n}
\ = \ 
16(p-1)k (\ell_2 - \ell_1).
$$
We can then apply Theorem~\ref{thm:label-complexity} to conclude that $m$ query points are enough to correctly classify all $x \in X$ with $L_1(x) \leq \ell_1$ and $L_2(x) \leq \ell_2$, for
\begin{align*}
\ell_1
&= 
\left\lfloor \lg \frac{m}{32k} \right\rfloor \\
\ell_2
&=
\min \left( \ell_1 + \frac{m}{128 k(p-1)}, \ \lg \frac{n}{2k} \right)
\end{align*}
Using Lemma~\ref{lemma:oned-massart-L}, we have that in every target interval $I_j$ with $n_j/n = \Omega(k/m)$, all but $n \cdot 2^{-\ell_2-1}$ points will be correctly classified. For large enough $m$, this means that the fraction of misclassified or unclassified points in $X$ will be at most $\epsilon$ after $O(k(p-1) \log ((p-1)/\epsilon))$ queries, apart from the $2k$ points nearest the boundaries, which will remain unclassified.
\end{proof}

The $2k$ points nearest each boundary cannot be labeled by our algorithm with any certainty because they lie in intervals with a strongly positive bias as well as in intervals with a strongly negative bias. This qualification would be removed if were allowed to make multiple queries to each point, because in that case we would include $O(k)$ copies of each point, as explained earlier.

\section{Technicalities: distributional setting}

In the distributional setting, $X$ is drawn i.i.d.\ from a distribution $\mu$ on $\X \subset \R^d$.  We assume $\mu$ is absolutely continuous with respect to the Lebesgue measure on $\R^d$ and thus admits a density. Take $\B$ to consist of all open balls $B(x,r) = \{z: \|z-x\| < r\}$, with $\B(x)$ being balls that contain $x$.

\subsection{Sampling level and probability mass}

To begin with, we relate the number of points in a ball $B \in \B$ (and thus the level of the ball) to its probability mass under the marginal distribution.
\begin{lemma}
With probability at least $1-\delta$, for all $B \in \B$ with $n \mu(B) \geq 12 \ln (2|\B|/\delta)$, we have
$$ \frac{\mu(B)}{2} \leq \frac{|X_B|}{n} \leq 2 \mu(B) .$$
\label{lemma:ball-size-bounds}
\end{lemma}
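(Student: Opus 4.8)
The plan is to prove Lemma~\ref{lemma:ball-size-bounds} by a union-bound over a relative Chernoff bound, after reducing the infinite collection $\B$ of balls to a finite collection of distinct \emph{traces} on $X$. First I would observe that, although $\B$ consists of all open balls centered in $\X$, what matters for the quantity $|X_B|$ is only the set $X_B = X \cap B$, and there are at most $|\B|$ such distinct sets (in fact at most a polynomial in $n$, but we will not need that here; the statement is phrased in terms of $|\B|$, so I'll keep it that way). So it suffices to control, for each of the $\le |\B|$ distinct subsets $S = X_B$ that can arise, the deviation of $|S|$ from $n\mu(B)$ — but here one must be a little careful, since $|X_B|$ is a function of the \emph{random sample} $X$, whereas $\mu(B)$ is a fixed number attached to the ball $B$.

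The cleaner route is to fix the ball $B$ first (i.e., fix the geometry, before drawing $X$) and note that $|X_B| = \sum_{i=1}^n \mathbf{1}(x_i \in B)$ is a sum of $n$ i.i.d.\ Bernoulli$(\mu(B))$ random variables, with mean $n\mu(B)$. Then the multiplicative Chernoff bounds give
\[
\pr\bigl(|X_B| \geq 2 n\mu(B)\bigr) \leq e^{-n\mu(B)/3}
\quad\text{and}\quad
\pr\bigl(|X_B| \leq \tfrac12 n\mu(B)\bigr) \leq e^{-n\mu(B)/8}.
\]
Whenever $n\mu(B) \geq 12\ln(2|\B|/\delta)$, each of these probabilities is at most $(\delta/(2|\B|))^{4}\le \delta/(2|\B|)$ (using $e^{-n\mu(B)/8}\le (\delta/2|\B|)^{12/8}$), so the failure probability for a single ball is at most $\delta/|\B|$. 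Taking a union bound over all $B \in \B$ — which is legitimate because each event was defined by first fixing $B$ and then drawing $X$, and there are at most $|\B|$ balls — yields the claim with probability at least $1-\delta$, simultaneously for every $B\in\B$ with $n\mu(B)$ above the stated threshold.

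The one genuine subtlety — the step I expect to require the most care — is the reduction from the continuum of balls to a finite (or at least countable, $|\B|$-sized) index set over which a union bound is valid. The collection of all open balls in $\R^d$ is uncountable, so one cannot naively union-bound. The resolution is exactly as in VC-type arguments: the relevant events depend only on which points of a (fixed, but to-be-drawn) $n$-point set fall inside the ball, and the family of balls has bounded VC dimension ($d+1$ in $\R^d$), so the number of distinct intersection patterns on any $n$ points is $O(n^{d+1})$; alternatively, and more in keeping with the paper's convention, one simply treats $|\B|$ as the number of \emph{distinct sets} $X_B$ and notes this is finite. Either way, one has to make the argument that applying Chernoff ``per ball'' and then union-bounding ``over distinct traces'' is sound — the safest phrasing is: condition on nothing, fix each of the at most $|\B|$ relevant balls in advance, apply the two-sided Chernoff bound to that ball, and union-bound; this avoids any circularity between the randomness of $X$ and the choice of ball. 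Everything else is a routine plug-in of the multiplicative Chernoff inequality and arithmetic with the threshold $12\ln(2|\B|/\delta)$.
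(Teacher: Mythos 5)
Your proposal is correct and takes essentially the same route as the paper: a two-sided multiplicative Chernoff bound on each fixed ball $B$, followed by a union bound over the $|\B|$ balls. The paper's one-line proof writes the Chernoff bound in the ``$\pm\sqrt{(3/(n\mu(B)))\ln(2|\B|/\delta)}$'' form rather than your explicit tail-probability form, but the arithmetic and conclusion are the same; your discussion of the infinite-class/trace subtlety is a sound clarification of a point the paper glosses over (elsewhere the paper simply sets $|\B| = O(n^{d+1})$ via VC dimension).
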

\begin{proof}
It is an immediate consequence of the multiplicative Chernoff bound that with probability at least $1-\delta$, for all $B \in \B$,
$$ \frac{|X_B|}{n} = \mu(B) \left(1 \pm \sqrt{\frac{3}{n \mu(B)} \ln \frac{2|\B|}{\delta}} \right) .$$
\end{proof}

Henceforth assume that this high-probability event holds. Next, we will see that balls of probability mass $p$ belong to level $\ell \approx \lg (1/p)$.
\begin{lemma}
For any level $\ell \geq 0$ and any $B \in \B_{\ell}$,
$$ \left\lceil \lg \frac{1}{\mu(B)} \right\rceil -2 \leq \ell \leq  \left\lceil \lg \frac{1}{\mu(B)} \right\rceil.$$
\label{lemma:mass-vs-level}
\end{lemma}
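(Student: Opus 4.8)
The statement is a direct bookkeeping consequence of two facts: the definition of level in~(\ref{eq:sampling-level}), which says that $B \in \B_\ell$ iff $n/2^{\ell+1} \leq |X_B| < n/2^\ell$, and the concentration bound of Lemma~\ref{lemma:ball-size-bounds}, which (on the high-probability event we have assumed to hold, and for balls of large enough mass so that Lemma~\ref{lemma:ball-size-bounds} applies) gives $\mu(B)/2 \leq |X_B|/n \leq 2\mu(B)$. So the plan is simply to chain these two two-sided inequalities and then discharge the ceilings using integrality of $\ell$. The only thing to watch is that Lemma~\ref{lemma:ball-size-bounds} requires $n\mu(B) \geq 12\ln(2|\B|/\delta)$, so the statement should be read as applying to balls at the levels the algorithm actually reaches; I would note this restriction explicitly rather than claim it for literally all $\ell \geq 0$.

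For the upper bound on $\ell$: from $|X_B|/n < 1/2^\ell$ and $|X_B|/n \geq \mu(B)/2$ we get $\mu(B) < 2^{1-\ell}$, i.e. $2^{\ell-1} < 1/\mu(B)$, hence $\ell - 1 < \lg(1/\mu(B))$. Since $\ell-1$ is an integer strictly below the real number $\lg(1/\mu(B)) \leq \lceil \lg(1/\mu(B))\rceil$, and the latter is an integer, we conclude $\ell - 1 \leq \lceil \lg(1/\mu(B))\rceil - 1$, i.e. $\ell \leq \lceil \lg(1/\mu(B))\rceil$.

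For the lower bound: from $|X_B|/n \geq 1/2^{\ell+1}$ and $|X_B|/n \leq 2\mu(B)$ we get $\mu(B) \geq 2^{-\ell-2}$, i.e. $1/\mu(B) \leq 2^{\ell+2}$, hence $\lg(1/\mu(B)) \leq \ell + 2$, i.e. $\ell \geq \lg(1/\mu(B)) - 2$. Since $\ell$ is an integer at least the real number $\lg(1/\mu(B)) - 2$, it is at least $\lceil \lg(1/\mu(B)) - 2\rceil = \lceil \lg(1/\mu(B))\rceil - 2$ (using that $2$ is an integer). Combining the two bounds finishes the proof. There is no real obstacle here — the argument is elementary — so the main ``difficulty'' is purely one of care: getting the strict-versus-nonstrict inequalities and the ceiling/floor rounding right, and flagging the mass condition under which the input Lemma~\ref{lemma:ball-size-bounds} is available.
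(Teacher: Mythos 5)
Your proof is correct and follows essentially the same route as the paper: chain the definition of sampling level~(\ref{eq:sampling-level}) with the two-sided bound of Lemma~\ref{lemma:ball-size-bounds}, then discharge the ceilings using integrality of $\ell$. Your explicit flag that Lemma~\ref{lemma:ball-size-bounds} only applies when $n\mu(B) \geq 12\ln(2|\B|/\delta)$ is a fair observation that the paper leaves implicit in the phrase ``assume that this high-probability event holds.''
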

\begin{proof}
Recall the definition of level $\ell$:
$$ B \in \B_\ell 
\ \Longleftrightarrow \ \frac{n}{2^{\ell+1}} \leq |X_B| < \frac{n}{2^\ell} 
\ \Longleftrightarrow \ 2^\ell < \frac{n}{|X_B|} \leq 2^{\ell + 1}.$$
By Lemma~\ref{lemma:ball-size-bounds}, 
$$ \frac{1}{2 \mu(B)} \leq \frac{n}{|X_B|} \leq \frac{2}{\mu(B)}.$$
Thus we must have $2^{\ell} < 2/\mu(B)$ and $1/(2 \mu(B)) \leq 2^{\ell+1}$. These translate into the stated bounds on $\ell$.
\end{proof}

\subsection{Bounding $L_2$ using probabilistic distance: Proof of Lemma~\ref{lemma:L2-bound}}

Let $p = \dist(x, \X^{-s(x)})$. If $p = 0$, the statement is vacuous, so assume $p > 0$.

Consider $\ell = \lceil \lg (1/p) \rceil + 1$. For any $B \in \B_{\geq \ell}(x)$, we have $\mu(B) \leq 2|X_B|/n < 2/2^\ell \leq p$, using Lemma~\ref{lemma:ball-size-bounds}, the definition of sampling levels, and the definition of $\ell$, in that order. It follows that $B$ does not intersect $\X^{-s(x)}$, whereupon $s(x) \cdot \eta_X(B) \geq 0$.

Next, pick any $B \in \B_{\leq \ell}(x)$ with $s(x) \cdot \eta_X(B) < 0$. Thus $B$ must intersect $\X^{-s(x)}$ and has probability mass $\geq p$. We will show that there exists $B' \in \B_{\leq \ell}(x)$ such that $B' \subset B$ and $B'$ does not intersect $\X^{-s(x)}$; whereupon $s(x) \cdot \eta_X(B') \geq 0$. Indeed, take $B' \in \B(x)$ to be a subset of $B$ of $\mu$-mass $p-\epsilon$ for some very small $\epsilon$; we can do this because of the absolute continuity of $\mu$. Then $B'$ does not touch $\X^{-s(x)}$ and by Lemma~\ref{lemma:mass-vs-level}, for small enough $\epsilon$, it lies at level $\leq \ell$.

\subsection{Uncertainty region in the distributional setting: Proof of Lemma~\ref{lemma:delta-continuous}}

Recall from (\ref{eq:sampling-region}) that
$$ \Delta_\ell = \bigcup_{x \in X: L_2(x) \geq \ell} \bigcup_{B \in \B_{\ell}(x)} X_B .$$
Consider any $z \in \Delta_\ell$. Then there exists $x \in \X$ with $L_2(x) \geq \ell$ and $B \in \B_\ell(x)$ such that $z \in B$. Now, $B \in \B_\ell(x)$ implies $|X_B|/n < 1/2^\ell$ and so (by Lemma~\ref{lemma:ball-size-bounds}) $\mu(B) < 2/2^\ell$. We will show that $x \in \partial_{4/2^\ell}$ and thus $z \in \partial_{4/2^\ell, 2/2^{\ell}}$.

There are two cases for $x$. If $\eta(x) = 0$ then we immediately have $x \in \partial_{4/2^\ell}$. Otherwise, $\eta(x) \neq 0$. In this case, since $L_2(x) \geq \ell$, we can apply Lemma~\ref{lemma:L2-bound} to get 
$$ \ell 
\ \leq \ \left\lceil \lg \frac{1}{\dist(x,\X^{-s(x)})} \right\rceil + 1 
\ \leq \ 
\lg \frac{1}{\dist(x,\X^{-s(x)})} + 2 .
$$
Thus $\dist(x, \X^{-s(x)}) \leq 1/2^{\ell-2}$ and $x \in \partial_{4/2^\ell}$.

\subsection{A generic label complexity bound in the distributional setting}

We will need to relate the size of the query region $\Delta_\ell$ to the probability mass of the corresponding second-order boundary. For this, we provide an analog of Lemma~\ref{lemma:level-distribution} for the distributional setting.
\begin{lemma}
With probability at least $1-2(\lg (n/k))e^{-k/3}$, the following hold for all levels $0 \leq \ell \leq \lg (n/2k)$.
\begin{enumerate}
\item[(a)] $|\{x \in X: T_x \leq \tau_\ell\}| \leq 2 n \tau_\ell$.
\item[(b)] $|\{x \in \Delta_{\ell}: T_x \leq \tau_\ell\}| \leq 2 n \mu(\partial_{4/2^\ell, 2/2^\ell}) \tau_\ell$.
\end{enumerate}
\label{lemma:level-distribution-dist}
\end{lemma}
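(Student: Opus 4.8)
The plan is to mimic the proof of Lemma~\ref{lemma:level-distribution}, replacing the cardinality bound $|\Delta_\ell|$ by the probability mass $n\,\mu(\partial_{4/2^\ell,2/2^\ell})$, which is justified by Lemma~\ref{lemma:delta-continuous}. First, for part~(a), I would note that $|\{x\in X: T_x\leq\tau_\ell\}|$ is $\mathrm{binomial}(n,\tau_\ell)$ with mean $n\tau_\ell$, and that $n\tau_\ell\geq n\cdot(2^{\ell+2}k/n)\geq k$ always (since $\tau_\ell\geq 2^{\ell+2}k/n$ and $\ell\geq 0$), so a multiplicative Chernoff bound gives $\pr(|\{x: T_x\leq\tau_\ell\}|>2n\tau_\ell)\leq e^{-n\tau_\ell/3}\leq e^{-k/3}$. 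This is word-for-word the argument in Lemma~\ref{lemma:level-distribution}(a).

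For part~(b), the key structural fact is that the $T_z$ values are assigned to the points of $X$ \emph{independently of nothing} — in particular the set $\partial_{4/2^\ell,2/2^\ell}\subset\X$ is a fixed (deterministic, data-independent) subset of the space, so the cardinality $|X\cap\partial_{4/2^\ell,2/2^\ell}|$ is itself random but, conditioned on it, the indicators ${\bf 1}(T_z\leq\tau_\ell)$ for $z$ in that set are i.i.d.\ Bernoulli$(\tau_\ell)$. By Lemma~\ref{lemma:delta-continuous}, $\Delta_\ell\subset\partial_{4/2^\ell,2/2^\ell}$, so $\{z\in\Delta_\ell: T_z\leq\tau_\ell\}\subset\{z\in X\cap\partial_{4/2^\ell,2/2^\ell}: T_z\leq\tau_\ell\}$, and it suffices to bound the latter. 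The cleanest route is to bound $|X\cap\partial_{4/2^\ell,2/2^\ell}|$ and the fraction of it with small $T_z$ separately, or — more simply — to observe that $|\{z\in X\cap\partial_{4/2^\ell,2/2^\ell}: T_z\leq\tau_\ell\}|$ is $\mathrm{binomial}(|X\cap\partial_{4/2^\ell,2/2^\ell}|,\tau_\ell)$ which is stochastically dominated by $\mathrm{binomial}(n',\tau_\ell)$ with $n'$ the (random) count; I would instead directly bound the count of points of $X$ in $\partial_{4/2^\ell,2/2^\ell}$ with $T_z\le\tau_\ell$ as a function of the i.i.d.\ pairs $(X_i,T_i)$: each point independently lands in $\partial_{4/2^\ell,2/2^\ell}$ and has $T\le\tau_\ell$ with probability exactly $\mu(\partial_{4/2^\ell,2/2^\ell})\,\tau_\ell$, so this count is $\mathrm{binomial}(n,\mu(\partial_{4/2^\ell,2/2^\ell})\tau_\ell)$ with mean $n\,\mu(\partial_{4/2^\ell,2/2^\ell})\,\tau_\ell$. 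A multiplicative Chernoff bound then gives that this exceeds twice its mean with probability at most $e^{-n\mu(\partial_{4/2^\ell,2/2^\ell})\tau_\ell/3}$.

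The one wrinkle — and the main obstacle — is that, unlike in the discrete setting, we do not have an a priori lower bound of $k$ on the mean $n\,\mu(\partial_{4/2^\ell,2/2^\ell})\,\tau_\ell$: the set $\partial_{4/2^\ell,2/2^\ell}$ could have tiny or even zero $\mu$-mass, in which case the Chernoff exponent degenerates. I would handle this exactly as the discrete proof implicitly does, by splitting on whether the relevant mean is at least $k$. If $n\,\mu(\partial_{4/2^\ell,2/2^\ell})\,\tau_\ell\geq k$, the Chernoff bound yields failure probability $\le e^{-k/3}$, as needed. If $n\,\mu(\partial_{4/2^\ell,2/2^\ell})\,\tau_\ell< k$, then I claim $\Delta_\ell$ is actually empty at that level (or the bound is trivially satisfied): if $\Delta_\ell\ne\emptyset$ it contains $X_B$ for some $B\in\B_\ell$, and such a ball has $\mu(B)\geq |X_B|/(2n)\geq 1/2^{\ell+2}$ by Lemma~\ref{lemma:ball-size-bounds} together with the level definition, and since $X_B\subset\Delta_\ell\subset\partial_{4/2^\ell,2/2^\ell}$ we get $\mu(\partial_{4/2^\ell,2/2^\ell})\geq \mu(B)\geq 1/2^{\ell+2}$, whence $n\,\mu(\partial_{4/2^\ell,2/2^\ell})\,\tau_\ell\geq n\cdot 2^{-(\ell+2)}\cdot 2^{\ell+2}k/n = k$, a contradiction; so in this regime $\Delta_\ell=\emptyset$ and part~(b) holds vacuously. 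Finally I would take a union bound over the at most $\lg(n/k)$ levels $0\le\ell\le\lg(n/2k)$, picking up the stated factor $2(\lg(n/k))$ in front (the extra factor $2$ absorbing parts (a) and (b) together), which completes the proof.
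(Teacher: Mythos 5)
Part~(a) and the binomial formulation of part~(b) match the paper: one applies a multiplicative Chernoff bound to the $\mathrm{binomial}(n,\mu(\partial_{4/2^\ell,2/2^\ell})\tau_\ell)$ count and then must verify that the mean is at least $k$ so that the exponent is $\leq e^{-k/3}$. You have correctly identified this as the one nontrivial step. However, your argument for the lower bound on $\mu(\partial_{4/2^\ell,2/2^\ell})$ contains a real gap. You write ``since $X_B\subset\Delta_\ell\subset\partial_{4/2^\ell,2/2^\ell}$ we get $\mu(\partial_{4/2^\ell,2/2^\ell})\geq\mu(B)$.'' This inference is invalid: $X_B$ is a finite set of data points, hence of $\mu$-measure zero, and the fact that these finitely many points lie in $\partial_{4/2^\ell,2/2^\ell}$ tells you nothing about the measure of the full ball $B$ relative to the boundary region. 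What you would actually need is the stronger inclusion $B\subset\partial_{4/2^\ell,2/2^\ell}$ of the entire continuum ball, which Lemma~\ref{lemma:delta-continuous} as stated does not assert (its conclusion is only about $\Delta_\ell$, a set of data points). In fact the proof of Lemma~\ref{lemma:delta-continuous} does establish this stronger inclusion as a byproduct --- the argument there works for every $z\in B$, not merely $z\in X_B$ --- but you neither state nor invoke this, so as written your contradiction does not follow.

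The paper avoids this issue by a different and shorter argument: it conditions on $\partial_{4/2^\ell,2/2^\ell}\neq\emptyset$ (rather than $\Delta_\ell\neq\emptyset$), notes that then $\partial_{4/2^\ell}\neq\emptyset$, and by absolute continuity of $\mu$ grows a ball of mass arbitrarily close to $2/2^\ell$ around a point of $\partial_{4/2^\ell}$; this ball sits inside $\partial_{4/2^\ell,2/2^\ell}$ by definition, giving $\mu(\partial_{4/2^\ell,2/2^\ell})\geq 2/2^\ell$ directly, which comfortably exceeds both $1/2^{\ell+2}$ and $k/n$ in the relevant range of $\ell$. This sidesteps any appeal to the data set $X$ or to Lemma~\ref{lemma:ball-size-bounds}, and gives a quantitatively stronger bound ($2/2^\ell$ versus your $1/2^{\ell+2}$). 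To repair your proof, either switch to the paper's absolute-continuity argument, or explicitly strengthen the citation of Lemma~\ref{lemma:delta-continuous} to the ball-level inclusion $B\subset\partial_{4/2^\ell,2/2^\ell}$ and justify it by tracing through that lemma's proof.
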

\begin{proof} 
Part (a) is as in Lemma~\ref{lemma:level-distribution}. 

For (b), pick any subset $S \subset \X$. If $X$ consists of $n$ independent draws from $\mu$, then $|\{x \in S: T_x \leq \tau_\ell\}|$ has a $\mbox{binomial}(n, \mu(S) \tau_\ell)$ distribution with expectation $n \mu(S) \tau_\ell$. The probability that it is more than twice its expected value is, by a multiplicative Chernoff bound, at most $e^{-n \mu(S) \tau_\ell/3}$. We will apply this to the various sets $S = \partial_{4/2^\ell, 2/2^\ell}$ and take a union bound over them. In each application, we will also see that $n \mu(S) \tau_\ell \geq k$.

Pick any $\ell \leq \lg (n/2k)$. If $\partial_{4/2^\ell, 2/2^\ell} = \emptyset$, then the statement in (b) is trivially true given Lemma~\ref{lemma:delta-continuous}. So assume this is not the case. Writing $p = 4/2^\ell$, we need to check that $\mu(\partial_{p, p/2}) \tau_\ell \geq k/n$, or equivalently, $\mu(\partial_{p, p/2}) \geq \max(1/2^{\ell+2}, k/n)$. Now, $\partial_{p,p/2} \neq \emptyset \Longrightarrow \partial_p \neq \emptyset$. Pick any $x \in \partial_p$. By absolute continuity of $\mu$, we can grow a ball around $x$ of probability mass arbitrarily close to $p/2$, so that this ball is contained within $\partial_{p,p/2}$. Thus $\mu(\partial_{p,p/2}) \geq p/2-\epsilon$ for any $\epsilon > 0$. The required conditions then follow from the value of $p$.
\end{proof}

Theorem~\ref{thm:label-complexity} now takes on the following form. Since the set of balls in $\R^d$ has VC dimension $d+1$, we can take $|\B|$ to be $O(n^{d+1})$.
\begin{thm}
Suppose that $k \geq (192/\gamma^2) \ln (4 |\B|/\delta)$ and that the active learning algorithm makes $0 < m \leq n$ queries. Then with probability at least $1-3\delta$, all points $x \in X$ with $L_1(x) \leq \ell_1$ and $L_2(x) \leq \ell_2$ will get Bayes-optimal labels $\yh(x) = g^*(x)$, where 
$$ \ell_1 = \left\lfloor \lg \frac{m}{32k} \right\rfloor $$
and $\ell_2$ is the largest integer $\leq \lg (n/2k)$ such that
$$ \sum_{\ell = \ell_1 + 1}^{\ell_2} 2^\ell \mu(\partial_{4/2^\ell, 2/2^\ell}) \ < \ \frac{m}{32k} .$$
\label{thm:label-complexity-dist}
\end{thm}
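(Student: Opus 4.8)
The plan is to treat this as the distributional twin of Theorem~\ref{thm:label-complexity-0} and reuse that proof's two-phase structure almost verbatim, feeding in the distributional versions of its supporting lemmas in place of the discrete ones. The first step is to fix the events to condition on: (i) every bias estimate $\yh(B)$, $B\in\B$, is $\gamma$-accurate, by Lemma~\ref{lemma:accurate-bias-estimates} (this is the only place the hypothesis $k\ge(192/\gamma^2)\ln(4|\B|/\delta)$ is needed); (ii) $\mu(B)/2\le|X_B|/n\le2\mu(B)$ for every ball of level at most $\lg(n/2k)$, by Lemma~\ref{lemma:ball-size-bounds}; and (iii) the per-level counting bounds of Lemma~\ref{lemma:level-distribution-dist} hold at all levels $0\le\ell\le\lg(n/2k)$. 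Events (i) and (ii) each fail with probability at most $\delta$, and (iii) fails with probability at most $2(\lg(n/k))e^{-k/3}$, which the $k$-bound forces below $\delta$; a union bound then makes all three hold simultaneously with probability at least $1-3\delta$, after which the argument is deterministic.

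\emph{Phase one.} I would split the $m$ queries into a first block and a second block of $m/2$ each. Exactly as in Theorem~\ref{thm:label-complexity-0}, at least $m/4$ of the first-block queries are background queries, so the $m/4$ points of $X$ with smallest $T_x$ are all queried by the end of the block. Since $\ell_1=\lfloor\lg(m/32k)\rfloor$ gives $\tau_{\ell_1}\le2^{\ell_1+2}k/n\le m/(8n)$, part (a) of Lemma~\ref{lemma:level-distribution-dist} bounds the number of points with $T_x\le\tau_{\ell_1}$ by $2n\tau_{\ell_1}\le m/4$; hence for every $B\in\B_{\le\ell_1}$ the whole query-set $\{z\in X_B:T_z\le\tau_{\ell_1}\}$ has been queried, so $\yh(B)$ is fixed for all such $B$ and $\yh_\ell(x)$ is fixed for every $x$ and every $\ell\le\ell_1$. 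For any $x$ with $L_1(x)\le\ell_1$, Lemma~\ref{lemma:boundary}(a) then gives $\yh_\ell(x)\in\{g^*(x),!\}$ whenever it is available with $\ell\ge\ell_1$; so by the labelling rule~(\ref{eq:final-label}), the instant such an $x$ leaves the combined uncertainty region $U=\bigcup_\ell U_\ell$, its reported label $\yh(x)$ is frozen at $g^*(x)$.

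\emph{Phase two.} Set $A=\{x\in X:L_1(x)\le\ell_1,\ L_2(x)\le\ell_2\}$; it suffices to show every point of $A$ leaves $U$ during the second block. If fewer than $m/4$ focused queries are made in the block, some iteration had an empty uncertainty set, so $A$ had already left $U$ by then. Otherwise a full $m/4$ focused queries are made; by the phase-one analysis none is at a level $\le\ell_1$, and by Lemma~\ref{lemma:focused} a focused query at level $\ell$ lies in $\{z\in\Delta_\ell:T_z\le\tau_\ell\}$. Feeding Lemma~\ref{lemma:delta-continuous} ($\Delta_\ell\subset\partial_{4/2^\ell,2/2^\ell}$) and $\tau_\ell\le2^{\ell+2}k/n$ into Lemma~\ref{lemma:level-distribution-dist}(b), the number of points ever available for focused querying across levels $\ell_1+1,\dots,\ell_2$ is at most
$$\sum_{\ell=\ell_1+1}^{\ell_2}2n\,\mu(\partial_{4/2^\ell,2/2^\ell})\,\tau_\ell\ \le\ 8k\sum_{\ell=\ell_1+1}^{\ell_2}2^\ell\,\mu(\partial_{4/2^\ell,2/2^\ell})\ <\ 8k\cdot\frac{m}{32k}\ =\ \frac{m}{4},$$
the strict inequality being precisely the defining property of $\ell_2$. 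Hence at least one of the $m/4$ phase-two focused queries is at a level $\ge\ell_2+1$, and at that instant every $U_\ell$ with $\ell\le\ell_2$ is empty; since~(\ref{eq:uncertainty}) forbids any $x\in A$ from lying in $U_\ell$ for $\ell>\ell_2\ge L_2(x)$, all of $A$ has then left $U$. In both cases $A$ leaves $U$ during the second block, and combining with the phase-one conclusion, every $x\in A$ ends with $\yh(x)=g^*(x)$.

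I do not expect a serious obstacle inside the theorem's proof itself: once Lemmas~\ref{lemma:delta-continuous} and~\ref{lemma:level-distribution-dist} are in hand, it is a transcription of Theorem~\ref{thm:label-complexity-0}. The genuinely delicate work --- already carried out in those lemmas --- is the passage from $|\Delta_\ell|$ to $n\mu(\partial_{4/2^\ell,2/2^\ell})$: it needs the geometric containment of $\Delta_\ell$ inside the second-order boundary, together with a union-bounded Chernoff estimate over these boundary sets and the accompanying check that $n\mu(\partial_{4/2^\ell,2/2^\ell})\tau_\ell\ge k$ so the bound applies. Within the present argument the two things to watch are that the three high-probability events can be taken together (which requires the $k$-bound to dominate the $e^{-k/3}$ term, and hence $|\B|$ not too small) and that the threshold $m/(32k)$ in the definition of $\ell_2$ is exactly what makes the phase-two focused-query count fall strictly below $m/4$.
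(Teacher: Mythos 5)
Your proposal is correct and follows the same route as the paper: the paper's own proof of Theorem~\ref{thm:label-complexity-dist} is literally a one-line reference to the proof of Theorem~\ref{thm:label-complexity-0} with Lemma~\ref{lemma:level-distribution-dist}(b) swapped in for Lemma~\ref{lemma:level-distribution}(b), and what you have written is a faithful expansion of exactly that substitution, including the correct probability accounting (the extra $\delta$ coming from Lemma~\ref{lemma:ball-size-bounds}) and the arithmetic showing that $2n\mu(\partial_{4/2^\ell,2/2^\ell})\tau_\ell\le 8k\cdot 2^\ell\mu(\partial_{4/2^\ell,2/2^\ell})$ together with the defining bound $\sum_{\ell=\ell_1+1}^{\ell_2}2^\ell\mu(\partial_{4/2^\ell,2/2^\ell})<m/32k$ yields strictly fewer than $m/4$ candidate focused queries at levels $\ell_1+1,\dots,\ell_2$. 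One cosmetic imprecision: in phase one you describe the query-set of a ball $B\in\B_\ell$ with $\ell\le\ell_1$ as $\{z\in X_B:T_z\le\tau_{\ell_1}\}$, whereas it is $\{z\in X_B:T_z\le\tau_\ell\}$; since $\tau_\ell\le\tau_{\ell_1}$ this is a superset of the true query-set and the conclusion stands.
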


\begin{proof}
The proof is identical to that of Theorem~\ref{thm:label-complexity-0}; the only change is to use Lemma~\ref{lemma:level-distribution-dist}(b) in place of Lemma~\ref{lemma:level-distribution}(b).
\end{proof}

\subsection{Bounding $L_1$ by curvature: Proof of Lemma~\ref{lemma:L1-bound}}

Pick any $x \in X_\gamma$; apply (A1) to get $B \in \B(x)$ for which $\mu(B) \geq p_o$ and $B \subset \X^{s(x)}_\gamma$. By Lemma~\ref{lemma:mass-vs-level}, $B \in \B_\ell$ for $\ell \leq \lceil \lg (1/p_o) \rceil$. Now, $s(x) \cdot \eta_X(B) \geq \gamma$; moreover, for any $B' \in \B(x)$ with $X_{B'} \subset X_B$ we have $X_{B'} \subset \X^{s(x)}_\gamma$ and thus $s(x) \cdot \eta_X(B') \geq \gamma$ as well.

\subsection{Label complexity under three assumptions: Proof of Theorem~\ref{thm:label-complexity-specific}}

In this case, $\B$ is infinite, but by standard VC-dimension arguments there are only $O(n^{d+1})$ balls with distinct sets $X_B$. This governs the setting of $k$.

First, define $\ell_1 = \lfloor \lg (m/32k) \rfloor$ and observe that by Lemma~\ref{lemma:L1-bound}, all points in $\X_\gamma$ have 
$$ L_1(x) 
\ \leq \ 
\left\lceil \lg \frac{1}{p_o} \right\rceil
\ \leq \ 
\lg \frac{2}{p_o} 
\ \leq \ 
\lg \frac{m}{64k}
\ \leq \ 
\ell_1 .
$$
Next, pick  
$$ \ell_2 = \left\lfloor \frac{1}{1-\sigma} \left( \lg \frac{m}{32k} + \lg \frac{1-\sigma}{C \cdot 4^{1+\sigma}} \right) \right\rfloor.$$
The upper bound on $m$ ensures that this is at most $\lg (n/2k)$. Then, using (A2),
$$
\sum_{\ell = \ell_1+1}^{\ell_2} 2^\ell \mu(\partial_{4/2^\ell, 2/2^\ell})
\ \leq \ 
C \sum_{\ell = \ell_1+1}^{\ell_2} 2^\ell \left( \frac{4}{2^\ell} \right)^\sigma 
\ \leq \ 
C \cdot 4^\sigma \cdot \frac{2}{2^{1-\sigma}-1} \cdot 2^{(1-\sigma)\ell_2}
\ \leq \ 
\frac{m}{32k} .
$$
Applying Theorem~\ref{thm:label-complexity-dist}, we then see that all points in $\X_\gamma$ with $L_2(x) \leq \ell_2$ will be correctly classified. Any remaining point $x \in \X_\gamma$ has $L_2(x) > \ell_2$ and thus (by Lemma~\ref{lemma:L2-bound})
$$ \left\lceil \lg \frac{1}{\dist(x, \X^{-s(x)})} \right\rceil + 1 > \ell_2 
\ \Longrightarrow \ 
\dist(x, \X^{-s(x)}) < \frac{4}{2^{\ell_2}} \leq  \left( \frac{512C}{1-\sigma} \right)^{1/(1-\sigma)}  \cdot \left( \frac{k}{m} \right)^{1/(1-\sigma)} . 
$$
Call this quantity $p$; thus any such $x$ lies in $\partial_p$.

Under (A3), $\partial_p \cap \X_\gamma$ has zero probability mass for $p < p_1$, that is, if 
$$ \left( \frac{512C}{1-\sigma} \right)^{1/(1-\sigma)}  \cdot \left( \frac{k}{m} \right)^{1/(1-\sigma)} < p_1
\ \Longleftrightarrow \ 
m > \frac{512C}{1-\sigma} \cdot \frac{1}{p_1^{1-\sigma}} \cdot k .
$$

Under (A3'), $\mu(\partial_p \cap \X_\gamma) \leq C' p^{\xi}$; we can then apply a Bernstein bound to assert that with probability at least $1-\delta$, 
$$ |X \cap (\partial_p \cap \X_\gamma)| \ \leq \ \frac{3}{2} C' n p^{\xi} + 2 \log \frac{1}{\delta},$$
from which the bound in the theorem follows by defining $C''$ appropriately.

\subsection{Label complexity under curvature and Massart noise: Proof of Lemma~\ref{lemma:massart-dist}}

\begin{lemma}
Conditions (C1) and (C2) yield assumption (A1), with $p_o = r_o^d \cdot c_o \cdot v_d$, where $v_d$ is the volume of the unit ball in $\R^d$.
\label{lemma:A1}
\end{lemma}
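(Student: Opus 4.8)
The plan is: for an arbitrary $x \in \X_\gamma$, construct a single Euclidean ball of radius $r_o$ that contains $x$ and meets $\X$ only inside the decision region of $x$'s Bayes label, and then read off its $\mu$-mass from the density bound (C1). Fix $x \in \X_\gamma$ and, without loss of generality, assume $s(x) = +1$, so $x \in \X_\gamma^+$ (the case $s(x) = -1$ is symmetric). Since $\eta(z) \geq \gamma$ for every $z \in \X_\gamma^+$, if the ball $B$ we build satisfies $B \cap \X \subseteq \X_\gamma^+$, then $X_B \subseteq \X_\gamma^+ = \X^{s(x)}_\gamma$; this is in fact all that is needed downstream, since the use of (A1) in the proof of Lemma~\ref{lemma:L1-bound} refers only to $X_B$ and its subsets, not to $B$ as a subset of $\R^d$.

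The geometric core is the covering consequence of the reach hypothesis in (C2): because $\X^0$ is a $(d-1)$-dimensional submanifold of reach $r_o$, the region $\X^+ = \X_\gamma^+$ is covered by open balls of radius $r_o$ none of which touches $\X^0 \cup \X_\gamma^-$. I would either invoke this directly, or re-derive the ball containing $x$ as follows. Let $t$ be the Euclidean distance from $x$ to $\X^0$. If $t \geq r_o$, take $B = B(x,r_o)$: it is centered in $\X$, hence $B \in \B(x)$, and it misses $\X^0$. If $t < r_o$, let $\pi(x) \in \X^0$ be the (by the reach bound, unique) nearest point, set $v = (x - \pi(x))/t$, and take $B$ to be the open ball of radius $r_o$ internally tangent to $\X^0$ at $\pi(x)$ on the side of $x$; then $\|x - \mathrm{center}(B)\| = r_o - t < r_o$, so $x \in B$, and the defining property of reach forces this open ball to be disjoint from $\X^0$.

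In either case $B$ is a radius-$r_o$ ball with $x \in B$ and $B \cap \X^0 = \emptyset$. Now I would use the separating property in (C2) to upgrade this to $B \cap \X \subseteq \X_\gamma^+$: the ball $B$ already contains $x \in \X_\gamma^+$, so if it also contained some $y \in \X_\gamma^-$, then by convexity $[x,y] \subseteq B$, and by the separating property this segment would meet $\X^0$, contradicting $B \cap \X^0 = \emptyset$. Hence $B \cap \X \subseteq \X_\gamma^+ = \X^{s(x)}_\gamma$. Finally, (C1) converts volume to mass: $\mu(B) \geq c_o \vol(B) = c_o \cdot v_d r_o^d$, with $v_d$ the volume of the unit ball in $\R^d$. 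This establishes (A1) with $p_o = r_o^d c_o v_d$.

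The step I expect to be the main obstacle is the middle one — making rigorous that a ball of radius at most the reach, tangent to (or otherwise within reach-distance of) $\X^0$, cannot straddle the two decision regions; this is where the curvature bound (reach $\geq r_o$) has to be combined with the purely topological separation assumption. The same step carries the two minor technicalities already flagged: that the ball produced when $t<r_o$ is of an admissible form for invoking (C1) (and indeed (C1) itself forces $\X$ to be ``thick'' at scale $r_o$, so such balls exist and have the claimed mass), and that one should only claim $B \cap X \subseteq \X^{s(x)}_\gamma$ rather than $B \subseteq \X^{s(x)}_\gamma$.
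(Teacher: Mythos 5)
Your proposal is correct and follows the same route as the paper: build a radius-$r_o$ ball containing $x$ that avoids the separator, then convert volume to mass via (C1). Where the paper merely asserts (as a consequence of reach) that such a ball exists inside $\X_\gamma^+$, you reconstruct it explicitly (tangent ball at $\pi(x)$ when $t<r_o$, $B(x,r_o)$ otherwise) and, importantly, you make explicit the step the paper leaves implicit: the reach bound alone only keeps the ball off $\X^0$, and it is the \emph{separating} clause of (C2) combined with convexity of the ball that upgrades this to avoidance of $\X_\gamma^{-s(x)}$; the technicalities you flag (whether $B\subset\X^{s(x)}_\gamma$ literally versus $B\cap X\subset\X^{s(x)}_\gamma$, and whether the tangent ball is admissible as a member of $\B$) are genuine looseness present in the paper's own proof as well.
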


\begin{proof}
Pick any $x \in \X_\gamma^+$ (the negative case is similar), and let $r = \inf_{z \in \X^0} \|x - z\|$. If $r \geq r_o$, then $B = B(x,r_o)$ is entirely in $\X_\gamma^+$. Otherwise, the reach condition (C2) implies the existence of a ball $B \subset \X_\gamma^+$ that contains $x$ and has radius $r_o$. Either way, $\mu(B) \geq c_o \vol(B) = c_o v_d r_o^d$ by (C1). 
\end{proof}

\begin{lemma}
Conditions (C1) and (C2) yield assumptions (A2) and (A3') with $\sigma = \xi = 1/d$. 
\end{lemma}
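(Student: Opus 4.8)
The plan is to reduce both (A2) and (A3') to one geometric fact: under (C1) and (C2) the set $\partial_{p,p}$ is contained in a Euclidean tubular neighborhood of $\X^0$ of radius $O(p^{1/d})$, and such a tube has $\mu$-mass $O(p^{1/d})$ because $\X^0$ has positive reach. Since $\partial_p \subseteq \partial_{p,p}$ (any $x \in \partial_p$ has $\dist(x,\partial_p) = 0$ by absolute continuity of $\mu$), once we have $\mu(\partial_{p,p}) \le C p^{1/d}$ we also get $\mu(\partial_p \cap \X_\gamma) \le \mu(\partial_{p,p}) \le C p^{1/d}$, so (A3') follows from (A2) with $\xi = \sigma = 1/d$ and $C' = C$. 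Thus it suffices to prove (A2).

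The first step is to convert probabilistic distances into Euclidean ones. Write $d(x,S) = \inf_{z \in S}\|x-z\|$. If $\dist(x,S) \le p$, then every ball $B \in \B(x)$ meeting $S$ contains both $x$ and some $z \in S$, so its diameter exceeds $d(x,S)$ and hence its radius exceeds $d(x,S)/2$; by the lower bound in (C1), $\mu(B) \ge c_o v_d (d(x,S)/2)^d$. Taking the infimum over such $B$ gives $d(x,S) \le \rho_p := 2(p/(c_o v_d))^{1/d}$. Next I would show $\partial_p \subseteq T_{\rho_p} := \{x \in \X : d(x,\X^0) \le \rho_p\}$. Take $x \in \partial_p$. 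If $\eta(x) = 0$ then $x \notin \X_\gamma^+ \cup \X_\gamma^-$, so $x \in \X^0$ by (C2). Otherwise $x \in \X^{s(x)}$ and $\dist(x,\X^{-s(x)}) \le p$, so by the conversion there are points $z \in \X^{-s(x)} \subseteq \X_\gamma^{-s(x)} \cup \X^0$ with $\|x-z\|$ arbitrarily close to $\le \rho_p$, and also $x \in \X_\gamma^{s(x)} \cup \X^0$. If $x$ or $z$ already lies on $\X^0$ we are done; otherwise $x \in \X_\gamma^{s(x)}$ and $z \in \X_\gamma^{-s(x)}$, and the separation property in (C2) forces the segment $[x,z]$ to meet $\X^0$. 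In every case $d(x,\X^0) \le \rho_p$. Applying the distance conversion once more to $\partial_{p,p} = \{x : \dist(x,\partial_p) \le p\}$ shows $d(x,\partial_p) \le \rho_p$ for $x \in \partial_{p,p}$, which combined with $\partial_p \subseteq T_{\rho_p}$ gives $\partial_{p,p} \subseteq T_{2\rho_p}$.

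It remains to bound $\mu(T_{2\rho_p})$. Because $\X^0$ is a $(d-1)$-dimensional submanifold of reach $r_o$, for $r < r_o$ the normal exponential map $(x,t) \mapsto x + t\nu(x)$ is a diffeomorphism onto $T_r$ with Jacobian a product of factors $1 - t\kappa_i$, where $|t| < r_o$ and the principal curvatures satisfy $|\kappa_i| \le 1/r_o$, hence the Jacobian is at most $2^{d-1}$; integrating yields $\vol(T_r) \le 2^d\, r\, \mathrm{vol}_{d-1}(\X^0)$, the standard tube/Federer curvature-measure estimate \cite{F59,NSW06}. So when $2\rho_p < r_o$ we get $\mu(\partial_{p,p}) \le c_1 \vol(T_{2\rho_p}) = O(\rho_p) = O(p^{1/d})$; when $2\rho_p \ge r_o$, i.e.\ $p$ is at least a fixed constant, the trivial bound $\mu(\partial_{p,p}) \le 1 = O(p^{1/d})$ suffices after enlarging the constant. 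This yields (A2), and hence (A3'), with $\sigma = \xi = 1/d$. I expect the main obstacle to be this last step: it is the only place the curvature hypothesis is actually used, it imports the geometry of positive-reach sets rather than being a direct computation, and it tacitly needs $\mathrm{vol}_{d-1}(\X^0) < \infty$ (automatic here since (C1) forces $\vol(\X)$ finite) and $d \ge 2$ (so that $\sigma,\xi \in (0,1)$); the separation argument has only the minor wrinkle, handled above, that the nearest opposite-label point may already lie on $\X^0$.
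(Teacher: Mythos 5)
Your proof is correct and follows essentially the same strategy as the paper's: convert probabilistic distances to Euclidean ones via (C1), show $\partial_p$ and $\partial_{p,p}$ lie in Euclidean tubes of radius $O(p^{1/d})$ around $\X^0$, and bound the mass of those tubes using the reach condition from (C2) together with the upper density bound in (C1). The paper states these steps more tersely (and even has a typo, attributing the unique-projection property to (C1) rather than (C2)), whereas you spell out the distance conversion, the separation argument, the case $\eta(x)=0$, and the large-$p$ regime; you also derive (A3$'$) by the small shortcut $\partial_p \subseteq \partial_{p,p}$ instead of running the tube bound on $\partial_p$ directly, but this is an inessential variant of the same idea.
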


\begin{proof}
Under (C1), any ball of probability mass $\leq p$ has volume $\leq p/c_o$ and radius $\leq (p/(c_o v_d))^{1/d}$. Let's call this latter quantity $r$. Thus, any point in $\partial_p$ lies within distance $2r$ of the boundary, while a point in $\partial_{p,p}$ lies within distance $4r$. 

To bound the volume of $\partial_{p,p}$, we can associate each point in this region with its nearest neighbor in $\X^0$; by condition (C1), this projection map is uniquely defined for $r < r_o/4$. The volume of the region is thus $O(r)$ and under (C1), has probability mass $O(r) = O(p^{1/d})$.
\end{proof}

\subsection{Label complexity under smoothness and Tsybakov noise}

Continuing from the Massart setting, we maintain (C1), but now replace (C2) by a trio of smoothness, margin, and curvature conditions. First, recall that $\dist(x,S)$ denotes the probability-distance from $x$ and set $S$. We will overload notation so that for $x,x' \in \X$,
$$ \dist(x,x') = \dist(x, \{x'\}) = \inf\{\mu(B): B \in \B(x) \cap \B(x')\},$$
that is, the probability mass of the smallest ball containing both $x$ and $x'$.
\begin{enumerate}
\item[(C2')] [Holder-smoothness of conditional probability function] There exist constants $L, \alpha$ such that
$$ |\eta(x) - \eta(x')| \ \leq \ L \cdot \dist(x,x')^\alpha$$
for all $x,x' \in \X$.
\item[(C3')] [Tsybakov margin condition] There exists constants $M, \beta$ such that
$$ \mu(\{x \in \X: |\eta(x)| \leq \tau\}) \leq M \tau^\beta$$
for all $\tau \in (0,1)$.
\item[(C4')] [Bounded curvature] The boundaries $\{x \in \X: \eta(x) = \gamma\}$ and $\{x \in \X: \eta(x) = -\gamma\}$ are $(d-1)$-dimensional Riemannian manifolds of reach $r_o > 0$.
\end{enumerate}

Lemma~\ref{lemma:A1} continues to hold, with condition (C4') doing the job of (C2). This yields assumption (A1). For the remaining assumptions, we first obtain a consequence of the Holder condition.

\begin{lemma}
Under (C2'), for any $p, q > 0$,
\begin{enumerate}
\item[(a)] $x \in \partial_p \implies |\eta(x)| \leq L p^\alpha$.
\item[(b)] $x \in \partial_{p,q} \implies |\eta(x)| \leq L(p^\alpha + q^\alpha)$.
\end{enumerate}
\label{lemma:holder-consequence}
\end{lemma}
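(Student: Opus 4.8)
The plan is to unwind the definitions of $\partial_p$ and $\partial_{p,q}$ directly, using the probability-distance $\dist$ as the bridge: a small ball containing two points is exactly the geometric nearness that (C2') converts into closeness of $\eta$ values. For part (a), fix $x \in \partial_p$. If $\eta(x) = 0$ there is nothing to prove, so assume $\eta(x) \neq 0$ and set $s = s(x) = \sign(\eta(x))$. By definition $\dist(x, \X^{-s}) \leq p$, so I choose balls $B_n \in \B(x)$ with $B_n \cap \X^{-s} \neq \emptyset$ and $\mu(B_n) \to \dist(x, \X^{-s}) \leq p$, and pick $x_n' \in B_n \cap \X^{-s}$. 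Since $B_n$ contains both $x$ and $x_n'$, we get $\dist(x, x_n') \leq \mu(B_n)$; and since $x_n' \in \X^{-s}$ its label sign is exactly $-s$, so $\eta(x)$ and $\eta(x_n')$ have strictly opposite signs, whence $|\eta(x)| \leq |\eta(x)| + |\eta(x_n')| = |\eta(x) - \eta(x_n')|$. Now (C2') gives $|\eta(x) - \eta(x_n')| \leq L\,\dist(x,x_n')^\alpha \leq L\,\mu(B_n)^\alpha$; letting $n \to \infty$ and using continuity of $t \mapsto t^\alpha$ yields $|\eta(x)| \leq L\,\dist(x,\X^{-s})^\alpha \leq L p^\alpha$.

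Part (b) has the identical structure, with part (a) serving as the base case. Fix $x \in \partial_{p,q}$, so $\dist(x, \partial_p) \leq q$ (in particular $\partial_p \neq \emptyset$, or the statement is vacuous). Choose $B_n \in \B(x)$ with $B_n \cap \partial_p \neq \emptyset$ and $\mu(B_n) \to \dist(x,\partial_p) \leq q$, and pick $x_n' \in B_n \cap \partial_p$. Then $\dist(x, x_n') \leq \mu(B_n)$, so (C2') gives $|\eta(x) - \eta(x_n')| \leq L\,\mu(B_n)^\alpha$, while part (a) applied to $x_n' \in \partial_p$ gives $|\eta(x_n')| \leq L p^\alpha$. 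The triangle inequality then gives $|\eta(x)| \leq |\eta(x_n')| + |\eta(x) - \eta(x_n')| \leq L p^\alpha + L\mu(B_n)^\alpha$, and taking $n \to \infty$ finishes the bound.

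The argument is essentially routine, so I do not expect a serious obstacle; the two places that call for a little care are (i) the infima defining $\dist(\cdot,\cdot)$, $\partial_p$, and $\partial_{p,q}$ need not be attained, which is why I work with a minimizing sequence of balls and pass to the limit at the very end rather than grabbing a single optimal ball, and (ii) the sign bookkeeping in part (a): the fact that $\X^{-s(x)}$ is the set on which $\eta$ has sign \emph{exactly} $-s(x)$ is precisely what upgrades the Hölder bound on $|\eta(x) - \eta(x')|$ into a bound on $|\eta(x)|$ alone. Everything else is the triangle inequality and continuity of the power function.
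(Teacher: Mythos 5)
Your proof is correct and follows essentially the same route as the paper's: unwind the definition of $\partial_p$ (resp.\ $\partial_{p,q}$) to produce a nearby point of opposite sign (resp.\ a nearby point of $\partial_p$), apply the H\"older condition, and exploit sign-opposition in (a) and the triangle inequality plus (a) in (b). The only cosmetic difference is that you handle the infima via a minimizing sequence of balls $B_n$ while the paper uses the equivalent $\epsilon$-slack formulation, so there is nothing substantive to add.
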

\begin{proof}
Pick any $x \in \partial_p$. Let $s(x) = \mbox{sign}(\eta(x))$. By definition of the $p$-boundary set, for any $\epsilon > 0$, there exists $x' \in \X^{-s(x)}$ such that $\dist(x,x') < p+\epsilon$. By the Holder condition, $|\eta(x) - \eta(x')| < L(p+\epsilon)^\alpha$ and thus $|\eta(x)| < L(p+\epsilon)^\alpha$. Since this holds for any $\epsilon > 0$, we get part (a).

For (b), pick $x \in \partial_{p,q}$ and $\epsilon > 0$. Then there exists $x' \in \partial_p$ with $\dist(x,x') < q+\epsilon$. As before, we use the Holder condition to conclude that $|\eta(x)| < |\eta(x')| + L(q+\epsilon)^\alpha$ and then invoke (a).
\end{proof}

\begin{lemma}
Conditions (C2') and (C3') yield assumptions (A2) and (A3) with $C = (2L)^\beta M$, $\sigma = \alpha \beta$, and $p_1 = (\gamma/L)^{1/\alpha}$. 
\end{lemma}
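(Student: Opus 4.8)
The plan is to obtain both assumptions as immediate corollaries of Lemma~\ref{lemma:holder-consequence}, which has already translated the probabilistic-distance boundaries into sublevel sets of $|\eta|$, combined with the Tsybakov margin condition (C3').

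First I would establish (A2). Fix $p > 0$ and apply Lemma~\ref{lemma:holder-consequence}(b) with $q = p$: every $x \in \partial_{p,p}$ has $|\eta(x)| \leq L(p^\alpha + p^\alpha) = 2Lp^\alpha$, so $\partial_{p,p} \subseteq \{x \in \X : |\eta(x)| \leq 2Lp^\alpha\}$. When $2Lp^\alpha \leq 1$, condition (C3') bounds the mass of the latter set by $M(2Lp^\alpha)^\beta = (2L)^\beta M\, p^{\alpha\beta} = C p^\sigma$, as claimed. For the remaining range $2Lp^\alpha > 1$ condition (C3') does not formally apply, but here I would observe that letting $\tau \uparrow 1$ in (C3') forces $M \geq \mu(\X) = 1$, so $C p^\sigma = (2Lp^\alpha)^\beta M \geq 1 \geq \mu(\partial_{p,p})$; hence the bound holds for all $p > 0$. (For the exponent to be admissible in (A2) one also needs $\sigma = \alpha\beta < 1$, which is the standing non-degeneracy requirement on the H\"older/Tsybakov pair.)

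Next I would establish (A3). By Lemma~\ref{lemma:holder-consequence}(a), any $x \in \partial_p$ satisfies $|\eta(x)| \leq L p^\alpha$, whereas every $x \in \X_\gamma$ satisfies $|\eta(x)| \geq \gamma$ by definition of $\X_\gamma$. Consequently $\partial_p \cap \X_\gamma = \emptyset$ unless $L p^\alpha \geq \gamma$, i.e.\ unless $p \geq (\gamma/L)^{1/\alpha}$. Taking $p_1 = (\gamma/L)^{1/\alpha}$ therefore gives $\mu(\partial_p \cap \X_\gamma) = 0$ for every $p < p_1$, which is exactly (A3).

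Given Lemma~\ref{lemma:holder-consequence} there is no substantial obstacle: the argument is a one-line set inclusion in each case followed by an invocation of (C3'). The only points that need care are the boundary bookkeeping at large $p$ in (A2) (handled via $M \ge 1$) and recording the implicit requirement $\alpha\beta < 1$ so that $\sigma$ lies in the admissible range.
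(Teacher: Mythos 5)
Your proposal is correct and follows essentially the same route as the paper's proof: use Lemma~\ref{lemma:holder-consequence}(b) to embed $\partial_{p,p}$ in the sublevel set $\{|\eta|\leq 2Lp^\alpha\}$ and then invoke (C3') to get (A2), and use part (a) together with the definition of $\X_\gamma$ to get (A3) with $p_1=(\gamma/L)^{1/\alpha}$. Your extra bookkeeping at large $p$ is more careful than the paper (which is silent on the regime $2Lp^\alpha\geq 1$), though the claim that (C3') forces $M\geq 1$ is only airtight when $\mu(\{|\eta|=1\})=0$; this is a harmless technicality, since (A2) is only ever applied at small $p$ in the sequel.
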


\begin{proof}
By Lemma~\ref{lemma:holder-consequence}, $\partial_{p,p} \subset \{x \in \X: |\eta(x)| \leq 2Lp^\alpha\}$; the probability mass of this set can be bounded by (C3').

Also by Lemma~\ref{lemma:holder-consequence}, $\partial_p$ is entirely contained in $\{x \in \X: |\eta(x)| \leq L p^\alpha\}$. For $p < p_1$, this does not intersect $\X_\gamma$.
\end{proof}

\end{document}